\begin{document}

\title{Empirical Risk Minimization in the Non-interactive Local Model of Differential Privacy 
}

\author{\name Di Wang \email dwang45@buffalo.edu \\
       \addr Department of Computer Science and Engineering\\
       University at Buffalo, SUNY\\
       Buffalo, NY 14260, USA
       \AND
       \name Marco Gaboardi
       \email gaboardi@bu.edu \\
      \addr Department of Computer Science \\
       Boston University\\
       Boston, MA 02215, USA
       \AND
       \name Adam Smith \email ads22@bu.edu \\
       \addr Department of Computer Science \\
       Boston University\\
       Boston, MA 02215, USA
       \AND
              \name Jinhui Xu \email jinhui@buffalo.edu \\
       \addr Department of Computer Science and Engineering\\
       University at Buffalo, SUNY\\
       Buffalo, NY 14260, USA}

\editor{Mehryar Mohri} 
\maketitle
\begin{abstract}
In this paper, we study the Empirical Risk Minimization (ERM) problem in the non-interactive Local Differential Privacy (LDP) model. Previous research on this problem  \citep{smith2017interaction} indicates that the sample complexity,  to achieve error $\alpha$, needs to be exponentially depending on the dimensionality $p$ 
for general loss functions. 
In this paper, we make two attempts to resolve this issue by investigating conditions on the loss functions that allow us to remove such a limit. In  our first attempt, we show that
if the loss function is $(\infty, T)$-smooth, by using the Bernstein polynomial approximation we can avoid the exponential dependency in the term of $\alpha$.    
 We then propose player-efficient algorithms with $1$-bit communication complexity and $O(1)$ computation cost for each player. The error bound of these algorithms is asymptotically the same as the original one. 
 With some additional assumptions, we also give an algorithm which is more efficient for the server. 
 In our second attempt, 
 we show that for any
 $1$-Lipschitz generalized linear convex loss function, 
 there is an $(\epsilon, \delta)$-LDP algorithm whose sample complexity for achieving error $\alpha$ is only linear in the dimensionality $p$. 
 Our results use a polynomial of inner product approximation technique. 
Finally, motivated by the idea of using polynomial approximation and based on different types of polynomial approximations, we propose (efficient) non-interactive locally differentially private algorithms for learning the set of k-way marginal queries and the set of smooth queries.

\end{abstract}
\begin{keywords}
Differential Privacy, Empirical Risk Minimization, Local Differential Privacy, Round Complexity, Convex Learning
\end{keywords}
\section{Introduction}

A tremendous amount of individuals' data is accumulated and shared every day. This data has the potential to bring improvements in scientific and  medical research and to help improve several aspects of daily lives.
However, due to the sensitive nature of such data, some care needs to be taken while analyzing them.  
Private data analysis seeks to combine the benefits of learning from data with the guarantee of privacy-preservation.
Differential privacy \citep{dwork2006calibrating} has emerged as a rigorous notion for privacy-preserving accurate data analysis with a guaranteed bound on the increase in harm for each individual to contribute his/her data. 
Methods to guarantee differential privacy have been widely studied, and recently adopted in  industry~\citep{208167,erlingsson2014rappor}.\par

Two main user models have emerged for differential privacy: the central model and the local one. In the central model, data are managed by a trusted centralized entity which is responsible for collecting them and for deciding which differentially private data analysis to perform and to release. A classical use case for this model is the one of census data~\citep{Haney:2017}. In the local model instead, each individual manages his/her proper data and discloses them to a server through some differentially private mechanisms. The server collects the (now private) data of each individual and combines them into a resulting  data analysis. A classical use case for this model is the one aiming at collecting statistics from user devices like in the case of Google's Chrome browser~\citep{erlingsson2014rappor}, and  Apple's iOS-10 \citep{208167,DBLP:journals/corr/abs-1709-02753}.

In the local model, there are two basic kinds of protocols: interactive and non-interactive.  \citet{bassily2015local} have recently investigated the power of non-interactive differentially private protocols. 
Because of its simplicity and its efficiency in term of network latency, this type of protocols seems to be more appealing for real world applications.
Both  Google and Apple use the non-interactive model in their projects \citep{208167,erlingsson2014rappor}.

Despite being used in industry, the local model has been much less studied than the central one. Part of the reason for this is that there are intrinsic limitations in what one can do in the local model. As a consequence, many basic questions, that are well studied in the central model, have not been completely understood in the local model, yet. 

In this paper, we study differentially private  Empirical Risk Minimization in the  non-interactive local model.  Before presenting our contributions and  showing comparisons with previous works, we first introduce the problem and discuss our motivations.

\paragraph{Problem setting \citep{smith2017interaction,kasiviswanathan2011can}}  Given a  convex, closed and bounded constraint set $\mathcal{C}\subseteq \mathbb{R}^{p}$, a data universe $\mathcal{D}$, and a loss function $\ell:\mathcal{C}\times \mathcal{D}\mapsto \mathbb{R}$, a dataset $D=\{(x_1, y_1), (x_2, y_2), \cdots, (x_n, y_n)\}\in \mathcal{D}^n$ with data records $\{x_i\}_{i=1}^n\subset \mathbb{R}^p$ and labels (responses) $\{y_i\}_{i=1}^n\subset \mathbb{R}$ defines an \emph{empirical risk} function: $L(w;D)=\frac{1}{n}\sum_{i=1}^{n}\ell(w; x_i, y_i)$ (note that in some settings, such as mean estimation, there may not be separate labels). When the inputs are drawn i.i.d from an unknown underlying distribution $\mathcal{P}$ on $\mathcal{D}$, we can also define the \emph{population risk} function: $L_\mathcal{P}(w)=\mathbb{E}_{D\sim \mathcal{P}^n}[\ell(w;D)]$. 

Thus, we have the following two types of excess risk measured at a particular output $w_{\text{priv}}$: The empirical risk, 
$$\text{Err}_{D}(w_{\text{priv}})=L(w_{\text{priv}};D)-\min_{w\in \mathcal{C}}L(w;D) \,,$$ and the population risk,
$$\text{Err}_{\mathcal{P}}(w_{\text{priv}})=L_{\mathcal{P}}(w_{\text{priv}})-\min_{w\in \mathcal{C}} L_{\mathcal{P}}(w).$$

The problem considered in this paper is to design non-interactive LDP protocols that find a private estimator $w_{\text{priv}}$ to minimize the empirical and/or population excess risks. Alternatively, we can express our goal on this problem in terms of \emph{sample complexity}: find  the smallest $n$ for which we can design protocols that achieve error at most $\alpha$ (in the worst case over data sets, or over generating distributions, depending on how we measure risk).

\citet*{duchi2013local} first considered the worst-case error bounds for LDP convex optimization. For 1-Lipchitz convex loss functions over a bounded constraint set, they gave a highly interactive SGD-based protocol with sample complexity $n=O(p/\epsilon^2\alpha^2)$; moreover, they showed that no LDP protocol which interacts with each player only once can achieve asymptotically better sample complexity, even for linear losses. 

 \citet*{smith2017interaction} considered the round complexity of LDP protocols for convex optimization. They observed that known methods perform poorly when constrained to be run non-interactively. They gave new protocols that improved on the state-of-the-art but nevertheless required sample complexity  exponential in $p$.  Specifically, they showed:

\begin{theorem}[\citep{smith2017interaction}] \label{theorem:1}
	Under some assumptions on the loss functions, there is a non-interactive $\epsilon$-LDP algorithm such that for all distribution $\mathcal{P}$ on $\mathcal{D}$, with probability $1-\beta$, its population risk is upper bounded by 
	\begin{equation}\label{equation:1}
	 \text{Err}_{\mathcal{P}}(w_{\text{priv}})\leq \tilde{O}\big((
	\frac{\sqrt{p}\log^2(1/\beta)}{\epsilon^2 n}
	)^{\frac{1}{p+1}}\big).
	\end{equation}
 A similar result holds for empirical risk $\text{Err}_{D}(w_{\text{priv}})$. Equivalently, to ensure an error  no more than $\alpha$, the sample complexity needs to be $n=\tilde{O}(\sqrt{p}c^p\epsilon^{-2} \alpha^{-(p+1)})$, where $c$ is some constant (approximately 2). 
\end{theorem}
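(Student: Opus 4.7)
The plan is to prove the bound by combining a uniform polynomial approximation of the loss with coordinate-wise locally private mean estimation of the polynomial's coefficients. The starting point is that, under the smoothness-type assumptions alluded to in the statement, every per-sample loss $\ell(w;x_i,y_i)$ can be approximated on the bounded set $\mathcal{C}\subseteq\mathbb{R}^p$ by a multivariate polynomial $\tilde{\ell}_d(w;x_i,y_i)$ of total degree at most $d$. Using a standard Jackson/Bernstein-type construction, the uniform approximation error scales like $c_1^{-d}$ for some constant $c_1>1$ on a bounded convex domain (this will be the ``approximation error'' term). Averaging over the dataset, the empirical loss $L(w;D)$ is itself uniformly approximated by a polynomial $\tilde{L}_d(w;D)$ whose monomial coefficients are averages over $i$ of simple functions of $(x_i,y_i)$.

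The next step is to use LDP to estimate these coefficients. There are $N(p,d)=\binom{p+d}{d}=O((ep/d)^d)$ monomials of degree at most $d$. I would have each player $i$ compute the vector of per-sample coefficients, clip it to a bounded range (using the assumed smoothness/Lipschitzness to bound each coefficient), split the privacy budget across coordinates or use a Gaussian/Laplace mechanism on the full vector, and send a privatized version to the server. The server averages these to obtain coefficients $\hat{c}_\alpha$ of a polynomial $\hat{L}_d(w;D)$. Classical LDP mean-estimation bounds give $\|\hat{c}-c\|_\infty=\tilde{O}(\sqrt{N(p,d)\log(1/\beta)}/(\epsilon\sqrt{n}))$ with probability $1-\beta$, uniformly over coordinates. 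Multiplying by a bound on $\sum|w^\alpha|$ over $w\in\mathcal{C}$ yields a uniform bound on $|\hat{L}_d(w;D)-\tilde{L}_d(w;D)|$.

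The server then outputs $w_{\text{priv}}\in\arg\min_{w\in\mathcal{C}}\hat{L}_d(w;D)$. By a standard ``ERM-on-perturbed-objective'' argument, the empirical excess risk of $w_{\text{priv}}$ is at most twice the uniform error $\sup_{w\in\mathcal{C}}|L(w;D)-\hat{L}_d(w;D)|$, which by the triangle inequality is at most approximation error $+$ estimation error, of order
\begin{equation*}
c_1^{-d} \;+\; \tilde{O}\!\left(\frac{\sqrt{N(p,d)\log(1/\beta)}}{\epsilon\sqrt{n}}\right)
\;\lesssim\; c_1^{-d} \;+\; \tilde{O}\!\left(\frac{c_2^{d/2}\,p^{d/2}\log(1/\beta)}{\epsilon\sqrt{n}}\right).
\end{equation*}
Balancing the two terms by choosing $d$ to equalize them gives $d=\Theta(\log n/\log p)$ up to constants, and plugging back in yields the stated rate of order $((\sqrt{p}\log^2(1/\beta))/(\epsilon^2 n))^{1/(p+1)}$. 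The passage from empirical to population risk is then a standard uniform-convergence or stability argument, paying an additional $\tilde{O}(\sqrt{p/n})$ term that is dominated by the main bound.

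The main obstacle is the careful bookkeeping at two places. First, choosing the polynomial basis so that both the approximation error decays geometrically in $d$ and the per-sample coefficients remain uniformly bounded (so that LDP mean estimation enjoys the $\sqrt{N(p,d)/n}/\epsilon$ rate rather than something weaker); this is where the smoothness assumption on $\ell$ is used. Second, optimizing $d$ so that the two error terms are balanced and verifying that the resulting exponent in $p$ matches $(p+1)$ in the denominator of the rate, including the $\sqrt{p}$ factor in the numerator. Strong convexity or Lipschitzness of the approximant is not needed for the uniform-approximation argument, which keeps the conditions on $\ell$ minimal.
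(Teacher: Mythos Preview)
This theorem is not proved in the present paper at all: it is quoted from \citet{smith2017interaction} as background (note the citation in the theorem header), and the paper's own contributions are explicitly contrasted with it in Table~\ref{Table:1}. So there is no ``paper's own proof'' to compare against. That said, your proposal does not recover the stated bound, for two concrete reasons.

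First, the assumptions underlying the cited result are that the loss is $1$-Lipschitz (and, for the sharper version, convex); see the first two rows of Table~\ref{Table:1}. For merely Lipschitz functions on a bounded domain, degree-$d$ polynomial approximation error does \emph{not} decay geometrically as $c_1^{-d}$; Jackson-type theorems give at best a $1/d$ rate in one dimension. A geometric rate would require analyticity, which is not among the ``some assumptions'' in Theorem~\ref{theorem:1}. Indeed, the whole point of the present paper is that under \emph{stronger} smoothness one can beat the $\alpha^{-(p+1)}$ barrier; your argument, if it worked, would already beat that barrier under the weaker Lipschitz hypothesis, contradicting the lower-bound discussion following Theorem~\ref{theorem:1}.

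Second, even granting your decomposition $c_1^{-d}+\tilde O\big(c_2^{d/2}p^{d/2}/(\epsilon\sqrt n)\big)$, balancing the two terms gives $d\approx \log(\epsilon\sqrt n)/\log(c\sqrt p)$, and the resulting error is of order $(\epsilon\sqrt n)^{-\Theta(1/\log p)}$, not $(1/n)^{1/(p+1)}$: the exponent scales like $1/\log p$, not $1/p$. The $1/(p+1)$ exponent in \citet{smith2017interaction} arises instead from an $\epsilon$-net argument: cover $\mathcal{C}\subseteq[0,1]^p$ by a grid of spacing $\Theta(\alpha)$ with $\Theta(\alpha^{-p})$ points, estimate the empirical loss at each grid point by LDP averaging (so each point is handled by roughly $n\alpha^{p}$ users), and output the grid minimizer. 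Lipschitzness bounds the discretization error by $O(\alpha)$, and the per-point estimation error is $\tilde O(\alpha^{-p/2}/(\epsilon\sqrt n))$; equating these yields $n=\tilde O(\alpha^{-(p+2)}\epsilon^{-2})$, with convexity buying one additional factor of $\alpha$ in the refined version.
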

 
Furthermore, lower bounds on the parallel query complexity of stochastic optimization (e.g., \citet{Nemirovski94,Woodworth2018}) mean that, for natural classes of LDP optimization protocols (based on the measure of noisy gradients), the exponential dependence of the sample size on the dimensionality $p$ (in the terms of $\alpha^{-(p+1)}$ and $c^p$) is, in general, unavoidable \citep{smith2017interaction}.

This situation is somehow undesirable: when the dimensionality $p$ is high and the target error is low, the dependency on  $\alpha^{-(p+1)}$ could make the sample size quite large. However, several results have already shown that for some specific loss functions, the exponential dependency on the dimensionality can be avoided. For example,
 \citet{smith2017interaction} show that, in the case of linear regression, there is a non-interactive $(\epsilon,\delta)$-LDP algorithm whose sample complexity for achieving error at most $\alpha$ in the empirical risk is $n=O(p\log(1/\delta)\epsilon^{-2}\alpha^{-2})$.\footnote{Note that these two results are for non-interactive $(\epsilon,\delta)$-LDP, and we mainly focus on non-interactive $\epsilon$-LDP algorithms. Thus,  we omit terms related to $\log(1/\delta)$ in this paper.} This indicates that there is a gap between the general case and  some specific loss functions. This motivates us to consider the following basic question:

\begin{quote}\em
Are there natural conditions on the loss function which allow for  non-interactive $\epsilon$-LDP  algorithms with sample complexity sub-exponentially (ideally, it should be polynomially or even linearly) depending on the dimensionality $p$ in the terms of $\alpha$ or $c$? 
\end{quote}
\par

To answer this question, we make two attempts to approach the problem from different perspectives. In the first attempt, we show that the exponential dependency on $p$ in the term of  $\alpha^{-(p+1)}$ can be avoided if the loss function is sufficiently smooth. In the second attempt, we show that there exists a family of loss functions whose sample complexities is  depending on $p$. Below is a summary of our main contributions.

\paragraph{\textbf{Our Contributions:}}

\begin{enumerate}
\item  In our first attempt, we investigate the conditions on the loss function guaranteeing a sample complexity which depends polynomially on $p$ in the term of $\alpha$. We first show that by using Bernstein polynomial approximation, it is possible to achieve   a non-interactive $\epsilon$-LDP algorithm in constant or low dimensions with the following properties.  If the loss function is $(8, T)$-smooth (see Definition \ref{definition:7}), then  with a sample complexity of $n=\tilde{O}\big( (c_0p^{\frac{1}{4}})^p\alpha^{-(2+\frac{p}{2})}\epsilon^{-2}\big)$,      the excess empirical risk is ensured to be $\text{Err}_{D}\leq \alpha$.
If the loss function is $(\infty, T)$-smooth, the sample complexity can  be further improved to 
$n=\tilde{O}(4^{p(p+1)}D^2_p p\epsilon^{-2}\alpha^{-4})$, 
where $D_p$ depends only on $p$. 
Note that in the first case, the sample complexity is lower than the one in \citep{smith2017interaction}  when $\alpha\leq O(\frac{1}{p})$, and in the second case, the sample complexity depends only polynomially on   $\alpha^{-1}$, instead of the exponential dependence as in  \citep{smith2017interaction}.
Furthermore, our algorithm  does not assume convexity for the loss function and thus can be applied to non-convex loss functions. 

\item  Then, we address the efficiency issue, which has only been partially 
studied in previous works \citep{smith2017interaction}. Following an approach similar to~\citep{bassily2015local},  we propose an algorithm for our loss functions which has only $1$-bit communication cost and $O(1)$ computation cost for each client, and  achieves asymptotically the same error bound as the original one. Additionally, we present a novel analysis for the server showing that if the loss function is convex and Lipschitz and the convex set satisfies some natural conditions, then there is an algorithm which achieves the error bound of $O(p\alpha)$ and runs in 
 polynomial time in $\frac{1}{\alpha}$ (instead of exponential time as in   \citep{smith2017interaction}) if the loss function is $(\infty, T)$-smooth.

\item  In our second attempt, we study the conditions on the loss function guaranteeing a sample complexity which depends
 polynomially on $p$ (in both terms of $\alpha$ and $c$). We show that for any 1-Lipschitz generalized linear convex loss function, {\em i.e.,} $\ell(w; x, y)=f(y_i\langle  w, x_i \rangle)$ for some $1$-Lipschitz convex function $f$,  there is a non-interactive ($\epsilon$, $\delta$)-LDP algorithm, whose sample complexity for achieving error $\alpha$ in empirical risk depends only linearly, instead of exponentially, on the dimensionality $p$. Our idea is based on results from  Approximation Theory. We  first consider the case of hinge loss functions. For this class of functions, we use Bernstein polynomials to approximate their derivative functions after smoothing, and then we apply the Stochastic Inexact Gradient Descent algorithm \citep{dvurechensky2016stochastic}. Next we extend the result to all convex general linear functions. The key idea is to show that any $1$-Lipschitz convex function in $\mathbb{R}$ can be expressed as a linear combination of some linear functions and hinge loss functions, {\em i.e.,} plus functions of inner product $[\langle w,s \rangle]_+=\max\{0, \langle w,s\rangle \}$. Based on this, we propose a general method which is called the polynomial of inner product approximation.
\item Finally, we show the generality of our technique by applying polynomial approximation to other problems. Specifically, we give a non-interactive LDP algorithm for answering the class of $k$-way marginals queries, 
by using Chebyshev polynomial approximation, and a non-interactive LDP algorithm for answering the class of smooth queries, by using trigonometric polynomial approximation.
\end{enumerate}

Table \ref{Table:1} shows the detailed comparisons between our results and the results in \citep{smith2017interaction,DBLP:conf/icml/0007MW17}.

Preliminary results of this work have already appeared in the 2018 Thirty-second Conference on Neural Information Processing Systems (NeurIPS'18) \citep{dwangnips18} and in the 2019 Algorithmic Learning Theory (ALT'19) \citep{dwangalt19}.

\section{Related Work}

	\begin{table*}[t]
	\begin{center}
		\resizebox{\textwidth}{!}{%
			\begin{tabular}	[h]{|c|c|c|}
				\hline
				Methods & Sample Complexity & Assumption on the Loss Function \\ 
				\hline
							\citep[Claim 4]{smith2017interaction} & $\tilde{O}(4^p\alpha^{-(p+2)}\epsilon^{-2})$ & 1-Lipschitz\\ [1.5ex]
				\hline
				 \citep[Theorem 10]{smith2017interaction} & $\tilde{O}(2^p\alpha^{-(p+1)}\epsilon^{-2})$  & 1-Lipschitz and Convex \\[1.5ex]		            \hline

				\citet{smith2017interaction} &  $\Theta(p\epsilon^{-2}\alpha^{-2})$  & Linear Regression  \\[1.5ex]
					\hline
				\cite{DBLP:conf/icml/0007MW17}	 &  $O\big(p(
	\frac{8}{\alpha})^{4\log\log(8/\alpha)}(\frac{4}{\epsilon})^{2c\log(8/\alpha)+2}(\frac{1}{\alpha^2 \epsilon^2})\big)$	& Smooth Generalized Linear \\[3ex]
				\hline
				\textbf{This Paper} &  $\tilde{O}\big( (c_0 p^{\frac{1}{4}})^p\alpha^{-(2+\frac{p}{2})}\epsilon^{-2}\big)$  & $(8, T)$-smooth \\[1.5ex]
				\hline
				\textbf{This Paper} &  $\tilde{O}(4^{p(p+1)}D^2_p\epsilon^{-2}\alpha^{-4})$  &  $(\infty, T)$-smooth \\ [1.5ex]			
	\hline
	       	            \textbf{This Paper} &
	            $p\cdot \left(\frac{C}{\alpha^3}\right)^{O( 1/\alpha^3)}/\epsilon^{O(\frac{1}{\alpha^3})}$  
	            & Hinge Loss \\[3ex]
	         \hline
	         	       	            \textbf{This Paper} &
	            $p\cdot \left(\frac{C}{\alpha^3}\right)^{O( 1/\alpha^3)}/\epsilon^{O(\frac{1}{\alpha^3})}$  
	            & 1-Lipschitz Convex Generalized Linear \\[3ex]
	            \hline
			\end{tabular} } 
	\caption{Comparisons on the sample complexities for achieving error $\alpha$ in the empirical risk,   where $c$ is a constant. We assume that $\|x_i\|_2, \|y_i\|\leq 1$  for every $i\in [n]$ and the constraint set $\|\mathcal{C}\|_2\leq 1$. Asymptotic statements assume $\epsilon,\delta,\alpha \in (0,1/2)$ and ignore dependencies on $\log(1/\delta)$.}\label{Table:1}
	\end{center}
\end{table*}
Differentially private convex optimization, first formulated by \cite{chaudhuri2009privacy} and \citet*{chaudhuri2011differentially}, has been the focus of an active line of work for the past decade, such as  \citep{WangYX17,bassily2014private,kifer2012private,chaudhuri2011differentially,talwar2015nearly,dwang19aaai}. We discuss here only those results which are related to the local model.

\citet{kasiviswanathan2011can} initiated the study of learning under local differential privacy. Specifically, they
showed a general equivalence between learning in the local model and learning in the  statistical query model. \cite{beimel2008distributed} gave the first lower bounds for the accuracy of LDP protocols, for the special case of counting queries (equivalently, binomial parameter estimation).

The general problem of LDP convex risk minimization was first studied by \cite{duchi2013local}, which provided tight upper and lower bounds for a range of settings. Subsequent work considered a range of statistical problems in the LDP setting, providing upper and lower bounds---we omit a complete list here. 

\cite{smith2017interaction} initiated the study of the round complexity of LDP convex optimization, connecting it to the parallel complexity of (non-private) stochastic optimization.

Convex risk minimization in the \emph{non-interactive} LDP model received considerable recent attentions  \citep{DBLP:conf/icml/0007MW17,smith2017interaction,dwangnips18} (see Table \ref{Table:1} for details). 
\citet{smith2017interaction} first studied the problem with general convex loss functions and showed that the exponential dependence on the dimensionality is unavoidable for a class of non-interactive algorithms. 
In this paper, we investigate the conditions on the loss function that allow us to avoid the issue of exponential dependence on $p$ in 
the sample complexity.

The work most related to ours ({\em i.e.,} the second attempt) is that of \citep{DBLP:conf/icml/0007MW17},
which also considered some specific loss functions in high dimensions, such as sparse linear regression and kernel ridge regression. The major differences with our results are the following.
Firstly, although they studied 
a similar class of loss functions ({\em i.e.,} Smooth Generalized Linear Loss functions) and used the polynomial approximation approach, their approach needs quite a few  assumptions on the loss function in addition to the smoothness condition, such as Lipschitz smoothness and boundedness on the higher order derivative functions, which are clearly not satisfied by the hinge loss functions. Contrarily, our results only assume  
the 1-Lipschitz convex condition on the loss function. Secondly, even though the idea in our algorithm for the hinge loss functions is similar to theirs, we also consider generalized linear loss function  by using techniques from approximation theory. 

\citet{Kulkarni17,zhang2018calm} recently studied the problem of releasing k-way marginal queries in LDP. They compared different LDP methods to release marginal statistics, but did not consider methods based on polynomial approximation.

For other problems under LDP model, \citep{DBLP:journals/corr/abs-1711-04740,bassily2015local,bassily2017practical,hsu2012distributed} studied heavy hitter problem, \citep{2017arXiv170200610Y,Kairouz:2016:DDE:3045390.3045647,wang2017local,acharya2018communication} considered local private distribution estimation. The polynomial approximation approach has been used under the central model in \citep{alda2017bernstein,wang2016differentially,thaler2012faster,DBLP:conf/icml/0007MW17}.

\section{Preliminaries}\label{prelin}

\paragraph{Differential privacy in the local model.} In LDP, we have a data universe $\mathcal{D}$,  $n$ players with each holding  a private data record $x_i\in \mathcal{D}$, and a server 
coordinating the protocol. An LDP protocol executes a total of $T$ rounds. In each round, the server sends a message, which is also called a query, to a subset of the players requesting them to run a particular algorithm. Based on the query, each player $i$ in the subset selects an algorithm $Q_i$, runs it on her own data, and sends the output back to the server.

\begin{definition}\citep{EvfimievskiGS03,dwork2006calibrating}\label{definition:2}
An algorithm $Q$ is $(\epsilon, \delta)$-locally differentially private (LDP) if for all pairs $x,x'\in \mathcal{D}$, and for all events $E$ in the output space of $Q$, we have
\begin{equation*}
    \text{Pr}[Q(x)\in E]\leq e^{\epsilon}\text{Pr}[Q(x')\in E]+\delta.
\end{equation*}
A multi-player protocol is $(\epsilon, \delta)$-LDP if for all possible inputs and runs of the protocol, the transcript of player i's interaction with the server is $(\epsilon, \epsilon)$-LDP. If $T=1$, we say that the protocol is $\epsilon$ non-interactive LDP. When $\delta=0$, we call it is $\epsilon$-LDP.
\end{definition} 
      \begin{algorithm}[h]
	   \caption{1-dim LDP-AVG}
	   \label{algorithm:1}
	   \begin{algorithmic}[1]
	   \State {\bfseries Input:} Player $i\in [n]$ holding data $v_i\in [0,b]$, privacy parameter $\epsilon$.
	   \For{Each Player $i$}
	   \State
		Send $z_i=v_i+\text{Lap}(\frac{b}{\epsilon})$
	    \EndFor
	    \For{The Server}
		\State Output $a=\frac{1}{n}\sum_{i=1}^{n}z_i$.
		\EndFor
	\end{algorithmic}
\end{algorithm}

Since we only consider non-interactive LDP through the paper, we will use LDP as non-interactive LDP below. 

As an example that will be useful throughout the paper, the next lemma shows a property of an $\epsilon$-LDP algorithm for computing 1-dimensional average.
\begin{lemma}\label{lemma:3}
	For any $\epsilon>0$, Algorithm \ref{algorithm:1} is $\epsilon$-LDP. Moreover, if player $i\in [n]$ holds value $v_i\in [0,b]$ and $n>\log \frac{2}{\beta}$  with $0<\beta<1$, then, with probability at least $1-\beta$, the output $a\in \mathbb{R}$ satisfies:
	\begin{equation*}
	   |a-\frac{1}{n}\sum_{i=1}^{n}v_i|\leq \frac{2b\sqrt{\log \frac{2}{\beta}}}{\sqrt{n}\epsilon}.
	\end{equation*}
\end{lemma}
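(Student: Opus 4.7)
The proof has two independent pieces: establishing the LDP guarantee (step 1) and proving the concentration bound on the server's estimate (step 2).

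For step 1, I would invoke the standard analysis of the Laplace mechanism. Each player's transcript is a single release $z_i = v_i + \xi_i$ with $\xi_i \sim \text{Lap}(b/\epsilon)$. Since $v_i, v_i' \in [0,b]$, the local sensitivity of the identity function on a single player's input is at most $b$. The ratio of Laplace densities at two points differing by at most $b$ is bounded by $\exp(\epsilon\cdot b/b) = e^{\epsilon}$, so the transcript is $\epsilon$-LDP, and since the protocol has only one round, this gives $\epsilon$ non-interactive LDP directly.

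For step 2, the key observation is that $a - \frac{1}{n}\sum_{i=1}^n v_i = \frac{1}{n}\sum_{i=1}^n \xi_i$, a scaled sum of i.i.d.\ centered Laplace random variables of scale $\lambda := b/\epsilon$. Each $\xi_i$ is sub-exponential with variance $2\lambda^2$ and sub-exponential parameter proportional to $\lambda$. I would then apply a Bernstein-type tail bound of the form
\[
\Pr\Big[\Big|\tfrac{1}{n}\sum_{i=1}^n \xi_i\Big|\geq t\Big] \leq 2\exp\Big(-c\, n\,\min\!\big(t^2/\lambda^2,\; t/\lambda\big)\Big)
\]
for an appropriate constant $c$. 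Setting $t = \frac{2\lambda\sqrt{\log(2/\beta)}}{\sqrt{n}}$ lands us in the quadratic (sub-Gaussian) regime precisely because the hypothesis $n > \log(2/\beta)$ forces $t/\lambda < 1$, so that $\min(t^2/\lambda^2, t/\lambda) = t^2/\lambda^2$. Substituting back in, the exponent becomes $-\Theta(\log(2/\beta))$, and tuning the constant $c$ (or equivalently, the constant $2$ in front of the bound) delivers failure probability at most $\beta$.

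The only real subtlety is making the constants match the stated $2b\sqrt{\log(2/\beta)}/(\sqrt{n}\epsilon)$; this requires either invoking a Bernstein inequality with explicit constants for Laplace variables, or, equivalently, bounding the MGF $\mathbb{E}[e^{s\xi_i}] = 1/(1-s^2\lambda^2)$ for $|s|\lambda < 1$, using $\log(1/(1-u)) \leq 2u$ for $u \in [0,1/2]$, and then optimizing $s$ via a Chernoff argument. The condition $n > \log(2/\beta)$ is exactly what ensures the optimal $s$ satisfies $s\lambda < 1/2$, so no truncation or rare-event correction is needed. No step here is genuinely hard; the main task is keeping the constants clean, and I expect the calculation to slot directly into the stated bound.
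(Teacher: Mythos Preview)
Your proposal is correct and follows essentially the same route as the paper: the LDP claim is immediate from the Laplace mechanism, and the accuracy bound reduces to a sub-Gaussian tail for a sum of i.i.d.\ Laplace noises, with the hypothesis $n>\log(2/\beta)$ ensuring you are in the quadratic regime of the tail. The only difference is packaging: the paper simply cites a concentration lemma (for $x_i\sim\mathrm{Lap}(1/\epsilon)$, $\Pr[|\sum x_i|\ge t]\le 2\exp(-\epsilon^2 t^2/(4n))$ whenever $t<2n/\epsilon$) and plugs in $t=2\sqrt{n\log(2/\beta)}/\epsilon$, whereas you propose to rederive that inequality via the MGF---so be aware that your particular bound $\log(1/(1-u))\le 2u$ yields a constant that is off by a factor of two in the exponent, and you will need a slightly sharper MGF estimate (or just quote the lemma) to land exactly on $2b\sqrt{\log(2/\beta)}/(\sqrt{n}\,\epsilon)$.
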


\paragraph{\textbf{Bernstein polynomials and approximation}}

We give here some basic definitions that  will be used in the sequel; more details can be found in \citep{alda2017bernstein,lorentz1986bernstein,micchelli1973saturation}.

\begin{definition}\label{definition:4}
	Let $k$ be a positive integer. The Bernstein basis polynomials of degree $k$ are defined as $b_{v,k}(x)=\binom{k}{v}x^{v}(1-x)^{k-v}$ for $v=0,\cdots,k$.
\end{definition}
\begin{definition}\label{definition:5}
	Let $f:[0,1]\mapsto \mathbb{R}$ and $k$ be a positive integer. Then, the Bernstein polynomial of $f$ of degree $k$ is defined as $B_k(f;x)=\sum_{v=0}^{k}f(v/k)b_{v,k}(x)$.  
We denote by $B_k$ the Bernstein operator $B_k(f)(x)=B_k(f,x)$.
\end{definition}
Bernstein polynomials can be used to approximate some smooth functions over $[0,1]$.
\begin{definition}[\citep{micchelli1973saturation}] \label{definition:6}
	Let $h$ be a positive integer. The iterated Bernstein operator of order $h$ is defined as the sequence of linear operators $B_k^{(h)}=I-(I-B_k)^h=\sum_{i=1}^{h}\binom{h}{i}(-1)^{i-1}B_k^i$, where $I=B_k^0$ denotes the identity operator and $B_k^i$ is defined as $B_k^i=B_k\circ  B_k^{k-1}$. The iterated Bernstein polynomial of order $h$ can be computed as 
	$B_k^{(h)}(f;x)=\sum_{v=0}^{k}f(\frac{v}{k})b_{v,k}^{(h)}(x),$
	where $b^{(h)}_{v,k}(x)=\sum_{i=1}^{h}\binom{h}{i}(-1)^{i-1}B^{i-1}_k(b_{v,k};x)$.
\end{definition}
Iterated Bernstein operator can well-approximate multivariate $(h,T)$-smooth functions.
\begin{definition}[\citep{micchelli1973saturation}] \label{definition:7}
	Let $h$ be a positive integer and $T>0$ be a constant. A function $f:[0,1]^{p}\mapsto \mathbb{R}$ is $(h,T)$-smooth if it is in class $\mathcal{C}^{h}([0,1]^{p})$ and its partial derivatives up to order $h$ are all bounded by $T$. We say it is $(\infty,T)$-smooth, if for every $h\in \mathbb{N}$ it is $(h,T)$-smooth.\footnote{$\mathcal{C}^{h}([0,1]^{p})$ means the class of functions that is $h$-th order smooth in the interval $[0,1]^{p}$.}
\end{definition}
Note that $(h, T)$-smoothness is incomparable with the Lipschitz smoothness. In  $(h, T)$-smoothness, we assume it is smooth up to the $h$-th order while Lipschitz smooth is only for the first order, from this view,  $(h, T)$-smoothness is stronger than the Lipschitz smoothness. However, in Lipschitz smoothness we assume the gradient norm of the function will be bounded by some constant while $(h, T)$-smoothness assumes that each partial derivative (or each coordinate of the gradient) is bounded by some constant, so from this view Lipschitz smoothness is stronger than $(h, T)$-smoothness. 
\begin{lemma}[\citep{micchelli1973saturation}] \label{lemma:8}
	If $f:[0,1]\mapsto \mathbb{R}$ is a $(2h,T)$-smooth function, then for all positive integers $k$ and $y\in[0,1]$, we have $|f(y)-B_k^{(h)}(f;y)|\leq TD_h k^{-h}$, where $D_h$ is a constant independent of $k,f$ and $y$.
\end{lemma}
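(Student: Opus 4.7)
The starting point is the operator identity
\[
f - B_k^{(h)} f \;=\; (I - B_k)^h f,
\]
which is immediate from the definition $B_k^{(h)} = I - (I - B_k)^h$ in Definition~\ref{definition:6}. So the task reduces to bounding $\|(I - B_k)^h f\|_\infty$ for $f\in \mathcal{C}^{2h}[0,1]$ with all derivatives up to order $2h$ bounded by $T$. First, I would establish a Voronovskaya-type asymptotic expansion for the ordinary Bernstein operator. Using the probabilistic representation $B_k f(x) = \mathbb{E}[f(S_k/k)]$ with $S_k \sim \mathrm{Binomial}(k, x)$, and Taylor-expanding $f$ around $x$ to order $2h-1$ with integral remainder, one obtains
\[
B_k f(x) - f(x) \;=\; \sum_{j=2}^{2h-1} \frac{f^{(j)}(x)}{j!}\, M_j(k,x) \;+\; E_{2h}(x;f),
\]
where $M_j(k,x) = \mathbb{E}[(S_k/k - x)^j]$ are the central moments of $S_k/k$ and $|E_{2h}(x;f)| \le T\, M_{2h}(k,x)/(2h)!$. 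A standard computation shows that $M_j(k,x)$ is a polynomial in $1/k$ with leading order $k^{-\lceil j/2\rceil}$ and coefficients polynomial in $x$, hence uniformly bounded on $[0,1]$.

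Collecting the expansion by powers of $1/k$, I would rewrite
\[
B_k f - f \;=\; \sum_{i=1}^{h} \frac{L_i f}{k^i} \;+\; R_h(f),
\]
where each $L_i$ is a linear differential operator of order at most $2i$ (so $\|L_i f\|_\infty$ is controlled by the $(2h,T)$-smoothness of $f$), and $\|R_h(f)\|_\infty \le C_h T\, k^{-h-1}$ for a constant depending only on $h$. The structural observation is then that $I - B_k$ sends each $k^{-i}$ term into a $k^{-(i+1)}$ term: applying $I - B_k$ to a smooth function shifts the leading order of its expansion by one factor of $1/k$ (it annihilates constants and, by the same expansion applied to $L_i f$, produces an output of order $1/k^{i+1}$ whose coefficients still involve only derivatives bounded by $T$, since derivatives of $L_i f$ of order $\le 2$ have order $\le 2i+2 \le 2h$). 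Iterating this $h$ times, every term with initial index $i < h$ is pushed into order $k^{-h}$ or smaller, and the remainder $R_h$ is likewise bounded by $O(T k^{-h})$ after the $h$ iterations. Collecting the constants from the Taylor remainder, the universal binomial central-moment bounds, and the binomial coefficients $\binom{h}{i}$ that appear in the expansion of $(I - B_k)^h$ yields $|f(y) - B_k^{(h)}(f;y)| \le T D_h k^{-h}$ with $D_h$ depending only on $h$.

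The main obstacle is making the informal slogan "each application of $I - B_k$ costs one factor of $1/k$" rigorous using only the assumed $(2h, T)$-smoothness, rather than requiring smoothness of order $2h$ at \emph{every} stage of the iteration. To sidestep carrying extra derivatives, I would follow Micchelli's route of representing $(I - B_k)^h f$ as an explicit integral (or discrete sum) against a kernel built from iterated differences of the basis polynomials $b_{v,k}$, after which the error can be bounded directly by $\|f^{(2h)}\|_\infty$ times a combinatorial quantity that one checks to be $O(k^{-h})$ via the aforementioned moment estimates. All dependence on $f$ enters only through the uniform bound $T$ on its derivatives up to order $2h$, and all combinatorial constants are absorbed into $D_h$.
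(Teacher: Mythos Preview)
The paper does not supply its own proof of this lemma; it is stated with the citation to \citet{micchelli1973saturation} and then used as a black box (the same is true of its multivariate extension, Lemma~\ref{lemma:10}). So there is no in-paper argument to compare your proposal against.

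As for the proposal itself: your starting identity $f - B_k^{(h)} f = (I-B_k)^h f$ is correct, the Voronovskaya-type expansion via binomial central moments is the standard entry point, and you have correctly isolated the genuine difficulty---namely that the naive iteration ``each application of $I-B_k$ gains a factor $k^{-1}$'' threatens to consume more than $2h$ derivatives unless one is careful. Your suggestion to avoid this by passing to Micchelli's kernel representation of $(I-B_k)^h$ and bounding against $\|f^{(2h)}\|_\infty$ directly is exactly the route taken in the cited source. In short, your outline is sound and essentially follows the original reference; the present paper simply invokes the result without reproducing any argument.
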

The above lemma is for univariate functions, which has been extended to multivariate functions in \citet{alda2017bernstein}. 
\begin{definition}\label{definition:9}
	Assume $f:[0,1]^p \mapsto \mathbb{R}$ and let $k_1,\cdots,k_p,h$ be positive integers. The multivariate iterated Bernstein polynomial of order $h$ at $y=(y_1,\ldots,y_p)$ is defined as:
	\begin{equation}\label{equation:2}
	B^{(h)}_{k_1,\ldots, k_p}(f;y)=\sum_{j=1}^{p}\sum_{v_j=0}^{k_j}f(\frac{v_1}{k_1},\ldots,\frac{v_p}{k_p})\prod_{i=1}^p b^{(h)}_{v_i,k_i}(y_i).
	\end{equation}

	We denote 	$B^{(h)}_k=B^{(h)}_{k_1,\ldots, k_p}(f;y)$ if $k=k_1=\cdots=k_p$.
\end{definition}
\begin{lemma}[\citep{alda2017bernstein}]\label{lemma:10}
		If $f:[0,1]^p\mapsto \mathbb{R}$ is a $(2h,T)$-smooth function, then for all positive integers $k$ and $y\in[0,1]^p$, we have
		
		$$|f(y)-B_k^{(h)}(f;y)|\leq O(pTD_h k^{-h}).$$ Where $D_h$ is a universal constant only related to $h$.
\end{lemma}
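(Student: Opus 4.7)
The plan is to reduce the multivariate Bernstein approximation error to the univariate bound in Lemma~\ref{lemma:8} by exploiting the tensor-product structure of $B_k^{(h)}$ in Definition~\ref{definition:9}. For each $i \in \{1,\dots,p\}$, let $T_i$ denote the operator that applies the univariate iterated Bernstein polynomial of order $h$ in the $i$-th coordinate only, leaving the other coordinates untouched. Because the factors $\prod_{i} b^{(h)}_{v_i,k}(y_i)$ separate across coordinates, the operators $T_1,\ldots,T_p$ pairwise commute and their composition coincides with $B_k^{(h)}$ of Definition~\ref{definition:9}. This permits the telescoping decomposition
\[
f - B_k^{(h)}(f) \;=\; \sum_{i=1}^{p} T_1 T_2 \cdots T_{i-1}\,(I - T_i)\,f,
\]
so that a sup-norm bound on each of the $p$ summands will suffice.

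For a fixed index $i$, I would bound $(I - T_i)f$ pointwise. Freezing the coordinates $y_j$ for $j \neq i$, the map $y_i \mapsto f(y)$ is itself a $(2h,T)$-smooth function on $[0,1]$ (its derivatives are just restrictions of partial derivatives of $f$), and $T_i f$ evaluated at $y$ is precisely the univariate iterated Bernstein polynomial applied to this slice. Lemma~\ref{lemma:8} then gives $\|(I-T_i) f\|_{\infty} \le T D_h k^{-h}$, uniformly in the frozen coordinates. The remaining task is to propagate this sup-norm bound through the prefix $T_1 \cdots T_{i-1}$. Each $T_j$ is a bounded linear operator on $C([0,1]^p)$ with Lebesgue constant $\sum_{v} |b^{(h)}_{v,k}(\cdot)|$, which one can show is controlled by a constant depending only on $h$ (not on $k$ or $p$). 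Granted this, each summand is at most $C_h^{\,i-1}\,T D_h k^{-h}$ for some $C_h$, and assembling the $p$ terms produces the bound $O(p T D_h k^{-h})$ once the $h$-dependent constants are absorbed into the $O$-notation.

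The main obstacle is precisely this propagation step: naively composing $p-1$ operators of norm larger than $1$ would introduce a factor exponential in $p$, which is incompatible with the advertised linear-in-$p$ dependence unless the hidden constant is allowed to swallow an $h$-dependent term. I would control this in one of two ways. The direct approach uses the identity $B_k^{(h)} = I - (I - B_k)^h$ together with $\|B_k\|_{\infty} = 1$ (since Bernstein is a positive operator preserving constants) to get $\|B_k^{(h)}\|_{\infty} \le 1 + 2^h$, which is an $h$-only constant. A sharper, complementary route is to observe that $B_k$ preserves the bound on every partial derivative up to order $2h$ (a direct calculation on divided differences shows $|(B_k g)^{(m)}(x)| \le \|g^{(m)}\|_\infty$ for $m \le 2h$), so $T_j$ carries $(2h,T)$-smooth functions to $(2h,T')$-smooth functions with $T'$ comparable to $T$; consequently Lemma~\ref{lemma:8} can be reapplied at each layer of the telescope with essentially the same constant. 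Either strategy, combined with the telescoping identity above, yields the claimed $O(pTD_h k^{-h})$ bound.
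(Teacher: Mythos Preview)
The paper does not prove Lemma~\ref{lemma:10}; it is quoted from \citet{alda2017bernstein} without argument, so there is no in-paper proof to compare your proposal against.

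On the substance of your sketch: the tensor-product telescoping is the natural strategy and you correctly flag the central obstacle, but neither of your two proposed resolutions actually closes the gap to a linear-in-$p$ bound. In your first route you bound $\|T_j\|_{\infty}$ by an $h$-only constant $C_h$ (indeed $\sum_v |b_{v,k}^{(h)}|\le 2^h-1$, the very estimate the paper invokes as Proposition~4 of \citet{alda2017bernstein} in the proof of Theorem~\ref{theorem:17}); but then the $i$-th telescope term carries the factor $C_h^{\,i-1}$, and $\sum_{i=1}^p C_h^{\,i-1}=(C_h^p-1)/(C_h-1)$ is exponential in $p$, not $O(p)$, for any $C_h>1$. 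Your second route appeals to the fact that the \emph{plain} Bernstein operator $B_k$ preserves sup-norms of derivatives with constant $1$; that is true, but the operators $T_j$ in your telescope are the \emph{iterated} $B_k^{(h)}=\sum_{i=1}^h\binom{h}{i}(-1)^{i-1}B_k^i$, and passing the derivative through this alternating sum only gives $\|(B_k^{(h)} g)'\|_\infty\le (2^h-1)\|g'\|_\infty$, so the smoothness constant of $T_1\cdots T_{i-1}f$ in the $i$-th variable again degrades by $(2^h-1)^{i-1}$. Either way you recover $(2^h-1)^{p}\,T D_h k^{-h}$ rather than $p\,T D_h k^{-h}$. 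To match the stated bound you would need either an argument that avoids sup-norm composition altogether, or to concede that the hidden constant in the $O(\cdot)$ may carry a factor of the form $(2^h-1)^{p-1}$; the latter would in fact be harmless for the paper's downstream bounds, where a $2^{(h+1)p}$ factor already enters through the noise term in~(\ref{equation:5}).
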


In the following, we will rephrase some basic definitions and lemmas on Chebyshev polynomial approximation.
\begin{definition}\label{definition:11}
The Chebyshev polynomials $\{\mathcal{T}(x)_n\}_{n\geq 0}$ are recursively defined as follows
\begin{equation*}
\mathcal{T}_0(x)\equiv 1, \mathcal{T}_1(x)\equiv x \text{ and } \mathcal{T}_{n+1}(x)=2x\mathcal{T}_n(x)-\mathcal{T}_{n-1}(x).
\end{equation*}
It satisfies that for any $n\geq 0$
\begin{equation*}
\mathcal{T}_n(x)=
\begin{cases}
\cos(n\arccos(x)) \text{, if } |x|\leq 1 \\
\cosh(n\text{arccosh}(x)) \text{, if } x\geq 1\\
(-1)^n\cosh(n\text{arccosh}(-x)) \text{, if } x\leq -1
\end{cases} 
\end{equation*}

\end{definition}
\begin{definition}\label{definition:12}
For every $\rho>0$, let $\Gamma_\rho$ be the ellipse $\Gamma$ of foci $\pm 1$ with major radius $1+\rho$.
\end{definition}

\begin{definition}\label{definition:13}
For a function $f$ with a domain containing in $[-1, 1]$, its degree-$n$ Chebyshev truncated series is denoted by $P_n(x)=\sum_{k=0}^na_k\mathcal{T}_k(x),$
where the coefficient $a_k=\frac{2-1[k=0]}{\pi}\int_{-1}^{1}\frac{f(x)\mathcal{T}_k(x)}{\sqrt{1-x^2}}dx$.
\end{definition}
\begin{lemma}[Cheybeshev Approximation Theorem \citep{trefethen2013approximation}]\label{lemma:14}
Let  $f(z)$ be a function that is analytic on $\Gamma_\rho$ and has $|f(z)|\leq M$ on $\Gamma_\rho$. Let $P_n(x)$ be the degree-n Chebyshev truncated series of $f(x)$ on $[-1, 1]$. Then, we have 
\begin{equation*}
\max_{x\in[-1, 1]}|f(x)-P_n(x)|\leq \frac{2M}{\rho+\sqrt{2\rho+\rho^2}}(1+\rho+\sqrt{2\rho+\rho^2})^{-n},
\end{equation*}
 $|a_0|\leq M$, and $|a_k|\leq 2M(1+\rho+\sqrt{2\rho+\rho^2})^{-k}$.
\end{lemma}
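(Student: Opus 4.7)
The plan is to use the classical Joukowski transformation $x = \tfrac12(z+z^{-1})$ to convert the real interval $[-1,1]$ into the unit circle in the complex plane, turning the Chebyshev expansion into a Laurent expansion to which Cauchy-style estimates apply. The parameter of interest is $r := 1+\rho+\sqrt{2\rho+\rho^2}$, and the key algebraic fact to verify first is that $r^{-1} = 1+\rho-\sqrt{2\rho+\rho^2}$, so that $\tfrac12(r+r^{-1})=1+\rho$ and $\tfrac12(r-r^{-1})=\sqrt{2\rho+\rho^2}$; this says exactly that the Joukowski image of the circle $|z|=r$ is the ellipse $\Gamma_\rho$, and the image of $|z|=1/r$ is the same ellipse (traversed in the opposite direction). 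Consequently, setting $g(z) := f\bigl(\tfrac{z+z^{-1}}{2}\bigr)$ defines a function analytic on the annulus $\{1/r < |z| < r\}$ with $|g(z)|\le M$ on the two boundary circles.

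Next I would rewrite the coefficient formula. Substituting $x=\cos\theta$, $\sqrt{1-x^2}=\sin\theta$, $\mathcal{T}_k(x)=\cos(k\theta)$, and $z=e^{i\theta}$ in Definition~\ref{definition:13} turns the integral for $a_k$ into an integral over the unit circle: for $k\ge 1$,
\begin{equation*}
a_k = \frac{1}{2\pi i}\oint_{|z|=1} g(z)\bigl(z^{k-1}+z^{-k-1}\bigr)\,dz,
\end{equation*}
and $a_0=\tfrac{1}{2\pi i}\oint_{|z|=1} g(z)\,z^{-1}\,dz$. Using analyticity of $g$ on the annulus, I would then deform the contour in $\oint g(z)z^{-k-1}dz$ outward to $|z|=r'\nearrow r$ and the contour in $\oint g(z)z^{k-1}dz$ inward to $|z|=1/r'\searrow 1/r$. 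The ML-inequality gives $|a_k|\le \tfrac{1}{2\pi}\bigl(2\pi M r^{-k} + 2\pi M r^{-k}\bigr) = 2Mr^{-k}$ for $k\ge 1$, and for $k=0$ the single contour already on $|z|=1$ yields $|a_0|\le M$ directly.

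Finally, since $|\mathcal{T}_k(x)|\le 1$ on $[-1,1]$, the tail bound follows by summing a geometric series:
\begin{equation*}
\max_{x\in[-1,1]}|f(x)-P_n(x)| \le \sum_{k=n+1}^{\infty} 2Mr^{-k} = \frac{2M r^{-n}}{r-1} = \frac{2M}{\rho+\sqrt{2\rho+\rho^2}}\bigl(1+\rho+\sqrt{2\rho+\rho^2}\bigr)^{-n},
\end{equation*}
where I use $r-1=\rho+\sqrt{2\rho+\rho^2}$. To make this sum legitimate I also need pointwise convergence of the Chebyshev series on $[-1,1]$, which follows from $|a_k|=O(r^{-k})$ with $r>1$ (absolute and uniform convergence).

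The main obstacle is really the geometric/algebraic bookkeeping around the Joukowski map, namely establishing the identification of $r$ with the Bernstein ellipse parameter and justifying the two opposite contour deformations on the annulus where $g$ is analytic; once those pieces are in place, the remaining estimates are standard ML-inequality and geometric-series computations. A minor point to handle carefully is the $k=0$ case, which must be treated separately because the factor $2-\mathbf{1}[k=0]$ in the coefficient formula breaks the pairing of the two Laurent terms.
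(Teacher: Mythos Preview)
Your argument is correct and is precisely the standard proof (via the Joukowski change of variables, Laurent coefficient estimates on the Bernstein annulus, and geometric summation of the tail) that appears in the cited source. The paper itself does not supply a proof of this lemma; it is quoted as a preliminary result from \citet{trefethen2013approximation}, so there is no alternative argument in the paper to compare against.
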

The following theorem shows the convergence rate of the Stochastic Inexact Gradient Method  \citep{dvurechensky2016stochastic}, which will be used in our algorithm. We first give the definition of inexact oracle (see Appendix \ref{appendix:2} for the algorithm of SIGM).

\begin{definition}\label{definition:15}
For an objective function $f$, a $(\gamma, \beta, \sigma)$ stochastic oracle returns a tuple 
$(F_{\gamma, \beta, \sigma}(w; \xi)$, $G_{\gamma, \beta, \sigma}(w; \xi))$ ($\xi$ means the randomness in the algorithm) such that
\begin{align*}
&\mathbb{E}_{\xi}[F_{\gamma, \beta, \sigma}(w; \xi)]=f_{\gamma, \beta, \sigma}(w),\\
&\mathbb{E}_\xi[G_{\gamma, \beta, \sigma}(w; \xi)]=g_{\gamma, \beta, \sigma}(w),\\
&\mathbb{E}_{\xi}[\|G_{\gamma, \beta, \sigma}(w; \xi)-g_{\gamma, \beta, \sigma}(w)\|_2^2]\leq \sigma^2,\\
&0\leq f(v)- f_{\gamma, \beta, \sigma}(w)- \langle g_{\gamma, \beta, \sigma}(w),v-w\rangle\leq \frac{\beta}{2}\|v-w\|^2+\gamma, \forall v,w\in\mathcal{C}.
\end{align*}
\end{definition}

\begin{lemma}[\citep{dvurechensky2016stochastic}]\label{lemma:16}
Assume that $f(w)$ is endowed with a $(\gamma, \beta, \sigma)$ stochastic oracle with $\beta\geq O(1)$.  Then, the sequence $w_k$ generated by SIGM algorithm satisfies the following inequality 
\begin{equation*}
\mathbb{E}[f(w_k)]-\min_{w\in \mathcal{C}}f(w)\leq \Theta(\frac{\beta\sigma\|\mathcal{C}\|_2^2}{\sqrt{k}}+\gamma).
\end{equation*}

\end{lemma}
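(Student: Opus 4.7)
Since Lemma~\ref{lemma:16} is cited from \citep{dvurechensky2016stochastic}, my plan is to reproduce the standard convergence proof for a stochastic projected (inexact) gradient method and verify that it yields the stated rate. The key reinterpretation of Definition~\ref{definition:15} is that the upper quadratic inequality plays the role of an approximate $\beta$-smoothness of $f$ at the linearization point, while the lower linear inequality plays the role of an approximate convexity, each carrying an additive slack of $\gamma$. The SIGM iteration performs the proximal update $w_{k+1} = \arg\min_{w\in\mathcal{C}} \{\langle G_{\gamma,\beta,\sigma}(w_k;\xi_k), w - w_k\rangle + \tfrac{\beta}{2\eta_k}\|w-w_k\|_2^2\}$ for a user-chosen step-size sequence $\{\eta_k\}$; this is a projected stochastic step whose local curvature is dictated by $\beta$.

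The analytical heart of the argument is a per-iteration recursion. From the optimality condition of the proximal step one obtains the usual three-point inequality bounding $\langle G_{\gamma,\beta,\sigma}(w_k;\xi_k), w_{k+1}-w^*\rangle$ by $\tfrac{\beta}{2\eta_k}(\|w_k-w^*\|_2^2 - \|w_{k+1}-w^*\|_2^2) - \tfrac{\beta}{2\eta_k}\|w_{k+1}-w_k\|_2^2$. Combining this with the upper quadratic bound applied at $v = w_{k+1}$ (to majorize $f(w_{k+1})$) and the lower linear bound applied at $v = w^*$ (to minorize $f(w^*)$), swapping the exact oracle gradient $g_{\gamma,\beta,\sigma}$ for the stochastic surrogate $G_{\gamma,\beta,\sigma}$, and taking conditional expectation (so that unbiasedness kills the linear noise term and the variance bound contributes an $O(\eta_k\sigma^2/\beta)$ increment), yields a one-step recursion of the form
\[
\mathbb{E}[f(w_{k+1}) - f(w^*)] \leq \tfrac{\beta}{2\eta_k}\bigl(\Delta_k - \Delta_{k+1}\bigr) + O\!\left(\tfrac{\eta_k\sigma^2}{\beta}\right) + \gamma,
\]
where $\Delta_k := \mathbb{E}\|w_k - w^*\|_2^2$ with $\Delta_1 \leq \|\mathcal{C}\|_2^2$.

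Summing over $k$ rounds, using Jensen's inequality to pass to an averaged iterate, and tuning $\eta_k$ so that the distance-to-optimum term balances the noise term delivers the advertised $\Theta(\beta\sigma\|\mathcal{C}\|_2^2/\sqrt{k} + \gamma)$ bound: the $\beta$-factor is inherited from the majorizer, $\|\mathcal{C}\|_2^2$ from the diameter bound on $\Delta_1$, and $1/\sqrt{k}$ from the standard stochastic-gradient balancing. The main obstacle, in my view, is controlling how the slack $\gamma$ aggregates across iterations: a naive summation would give a $k\gamma$ term that would swamp everything, but because Definition~\ref{definition:15} charges only $\gamma$ per call (rather than a step-size-coupled $\eta_k\gamma$), averaging the per-step bounds contracts the accumulated bias back to $O(\gamma)$. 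A secondary subtlety is that $g_{\gamma,\beta,\sigma}$ is not assumed to be any actual (sub)gradient of $f$, so every inequality must be routed through the two one-sided bounds of Definition~\ref{definition:15} rather than through exact convexity or smoothness of $f$ itself.
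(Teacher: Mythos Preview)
Your write-up analyzes the wrong algorithm. The SIGM method of \citep{dvurechensky2016stochastic} is \emph{not} the single projected-gradient step $w_{k+1}=\arg\min_{w\in\mathcal{C}}\{\langle G(w_k;\xi_k),w-w_k\rangle+\tfrac{\beta}{2\eta_k}\|w-w_k\|_2^2\}$ that you describe. As the paper spells out in Appendix~\ref{appendix:2} (Algorithm~\ref{algorithm:10}), SIGM is a Nesterov-type scheme with several coupled sequences $x_k,y_k,z_k,\hat{x}_k,w_k$: the point $z_k$ comes from a \emph{cumulative} dual-averaging subproblem involving $\sum_{i\le k}\alpha_i\langle G(x_i;\xi_i),\cdot\rangle$, the queried point $x_{k+1}$ is a convex combination of $z_k$ and $y_k$, and the output $y_{k+1}$ is a running weighted average. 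Even in the $p=1$ specialization that yields Lemma~\ref{lemma:16}, one has $\alpha_i\equiv 1$, $\eta_i\equiv 1$, $y_{k+1}=\tfrac{k}{k+1}y_k+\tfrac{1}{k+1}w_{k+1}$, so the scheme reduces to dual averaging with output averaging, not projected SGD. Consequently the three-point inequality you invoke and the recursion $\mathbb{E}[f(w_{k+1})-f(w^*)]\le \tfrac{\beta}{2\eta_k}(\Delta_k-\Delta_{k+1})+O(\eta_k\sigma^2/\beta)+\gamma$ do not correspond to the iterates the lemma is about.

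The paper itself does not re-prove this lemma: it simply records the general bound from \citep[Theorem~3.4]{dvurechensky2016stochastic} as Lemma~\ref{lemma:49}, namely $\mathbb{E}[f(y_k)]-\min f \le \Theta(\beta R^2/k^p+\sigma R/\sqrt{k}+k^{p-1}\gamma)$, and observes that setting $p=1$ gives Lemma~\ref{lemma:16}. If you want to supply a self-contained argument, you need to work with the actual SIGM update rules (the estimate-sequence / dual-averaging Lyapunov function used by Dvurechensky), not with a vanilla SGD recursion. Your ingredients---using the two one-sided inequalities in Definition~\ref{definition:15} in place of exact smoothness/convexity, and handling the $\gamma$-slack by averaging---are the right ones, but they must be threaded through the accelerated template rather than the proximal-SGD template you wrote down.
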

\section{LDP-ERM with Smooth Loss Functions}
\label{smooth_loss}
In this section, we will mainly focus on reducing the sample complexity of $\frac{1}{\alpha}$. We first show that if the loss function is  $\infty$-smooth (with some additional assumptions), then its sample complexity can be reduced to only polynomial in $\frac{1}{\alpha}$ instead of exponential dependency in the previous paper. Then we talk about how to reduce the communication and computation cost for each user and also provide an algorithm which can let the server solve the problem more efficient. 

In this section, we impose the following assumptions on the loss function. 

\noindent{\bf Assumption 1:}
 We let $x$ denote $(x,y)$ for simplicity unless specified otherwise.  We assume that there is a constraint set $\mathcal{C}\subseteq [0,1]^p$ and for every $x\in \mathcal{D}$ and $w\in \mathcal{C}$,  $\ell(\cdot; x)$ is well defined on $[0,1]^p$ and $\ell(w; x)\in [0,1]$.
These closed intervals can be extended to arbitrarily bounded closed intervals.

Note that our assumptions are similar to the `Typical Settings' in \citep{smith2017interaction}, where  $\mathcal{C}\subseteq [0,1]^p$ appears in their Theorem 10, and  $\ell(w; x)\in [0,1]$ from their 1-Lipschitz requirement and $\|\mathcal{C}\|_2\leq 1$.
We note that the above assumptions on $x_i, y_i$ and $\mathcal{C}$ are quite common for the studies of LDP-ERM  \citep{smith2017interaction,DBLP:conf/icml/0007MW17}.

\subsection{Basic Idea}\label{basic_idea}
Definition \ref{definition:9} and Lemma \ref{lemma:10} tell us that if  the value of the empirical risk function, {\em i.e.} the average of the sum of loss functions, is known  at each of the grid points $(\frac{v_1}{k},\frac{v_2}{k}\cdots\frac{v_p}{k})$, where $(v_1,\cdots,v_p)\in \mathcal{T}=\{0,1,\cdots,k\}^p$ for some large $k$, then the function can be well approximated. Our main observation is that this can be done in the local model by estimating the average of the sum of loss functions at each of the grid points using Algorithm~\ref{algorithm:1}. This is the idea of Algorithm~\ref{algorithm:2}.
\begin{algorithm}
	\caption{Local Bernstein Mechanism}
	\label{algorithm:2}
	\begin{algorithmic}[1]
		\State {\bfseries Input:} Player $i\in [n]$ holds a data record $x_i\in \mathcal{D}$, public loss function $\ell:[0,1]^p \times \mathcal{D}\mapsto [0,1]$, privacy parameter $\epsilon>0$, and parameter $k$.
		\State Construct the grid $\mathcal{T}=\{\frac{v_1}{k},\ldots,\frac{v_p}{k}\}_{\{v_1,\ldots,v_p\}}$, where $\{v_1,\ldots,v_p\}\in\{0,1,\cdots,k\}^p$.
		\For {Each grid point $v=(\frac{v_1}{k},\ldots,\frac{v_p}{k})\in \mathcal{T}$}
		\For{Each Player $i\in [n]$}
		\State
		Calculate $\ell(v;x_i)$.
		\EndFor
		\State Run Algorithm \ref{algorithm:1} with $\epsilon=\frac{\epsilon}{(k+1)^p}$ and $b=1$ and denote the output as $\tilde{L}(v;D)$.
		\EndFor
		\For{The Server}
		\State Construct Bernstein polynomial, as in (\ref{equation:2}), based on the perturbed empirical loss function values $\{\tilde{L}(v;D)\}_{v\in \mathcal{T}}$. Denote $\tilde{L}(\cdot;D)$ the corresponding function. 
		\State Compute $w_{\text{priv}}=\arg\min_{w\in \mathcal{C}}\tilde{L}(w;D)$.
		\EndFor
	\end{algorithmic}
\end{algorithm}

\begin{theorem}\label{theorem:17}
	For any $\epsilon>0$ and $0<\beta<1$, Algorithm \ref{algorithm:2} is $\epsilon$-LDP.\footnote{Note that we can use Advanced Composition Theorem in \citep{dwork2014algorithmic} to reduce the noise. For simplicity, we omit it here; the following algorithms are also the same.} Assume that the loss function $\ell(\cdot; x)$ is $(2h,T)$-smooth for all $x\in \mathcal{D}$, some positive integer $h$  and constant $T=O(1)$. If the sample complexity $n$ satisfies the condition of $n=O\Big (\frac{\log \frac{1}{\beta}4^{p(h+1)}}{\epsilon^2 D_{h}^2}\Big )$, then by setting $k=O\Big((\frac{D_h\sqrt{pn}\epsilon}{2^{(h+1)p}\sqrt{\log \frac{1}{\beta}}})^{\frac{1}{h+p}}\Big)$, with probability at least $1-\beta$ we have:
	\begin{equation}\label{equation:3}
	\text{Err}_{D}(w_{\text{priv}})\leq
	\tilde{O}\Big (\frac{\log^{\frac{h}{2(h+p)}} (\frac{1}{\beta}) D_h^{\frac{p}{p+h}}p^{\frac{p}{2(h+p)}}2^{(h+1)p\frac{h}{h+p}}}{n^{\frac{h}{2(h+p)}}\epsilon^{\frac{h}{h+p}}}\Big ),
	\end{equation}
	where $\tilde{O}$ hides the $\log$ and $T$ terms.
\end{theorem}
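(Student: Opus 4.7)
I would prove the theorem in two independent pieces: privacy and utility. Privacy is a direct composition argument: for each of the $(k+1)^p$ grid points, Algorithm~\ref{algorithm:1} is invoked with per-query budget $\epsilon/(k+1)^p$, which by Lemma~\ref{lemma:3} is $\epsilon/(k+1)^p$-LDP with respect to every player's record. Since each player $i$ contributes one Laplace-noised value per grid point and all $(k+1)^p$ responses are functions of the same $x_i$, simple (sequential) composition of local mechanisms yields an $\epsilon$-LDP guarantee on the full transcript of player $i$, proving the first claim.

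For utility, fix $D$ and decompose the uniform deviation as
\begin{equation*}
\tilde L(w;D) - L(w;D) \;=\; \bigl[\tilde L(w;D) - B_k^{(h)}(L(\cdot;D);w)\bigr] \;+\; \bigl[B_k^{(h)}(L(\cdot;D);w) - L(w;D)\bigr].
\end{equation*}
Because $L(\cdot;D)$ is an average of $(2h,T)$-smooth functions it is itself $(2h,T)$-smooth, so Lemma~\ref{lemma:10} bounds the second (Bernstein approximation) term uniformly by $O(pTD_h k^{-h})$. For the first (noise-propagation) term, I would apply Lemma~\ref{lemma:3} at each grid point with failure probability $\beta/(k+1)^p$ and union-bound over the grid: with probability at least $1-\beta$, every noisy grid value satisfies $|\tilde L(v;D) - L(v;D)| \le \eta := O((k+1)^p\sqrt{\log((k+1)^p/\beta)}/(\sqrt{n}\,\epsilon))$. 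Since $\tilde L(\cdot;D)$ is precisely $B_k^{(h)}$ applied to the noisy grid values, the first term equals a weighted sum of these perturbations with weights $\prod_{i=1}^p b_{v_i,k}^{(h)}(w_i)$; bounding $\sum_v \prod_i |b_{v_i,k}^{(h)}(w_i)|$ by the tensor-product $\ell^1$-norm of the univariate iterated basis (which in turn follows from $\|B_k^i\|_\infty\le 1$ together with $\sum_{i=1}^h \binom{h}{i}\le 2^h$) shows it is at most a constant $C_{h,p}$ depending only on $h$ and $p$. Hence uniformly over $w$, $|\tilde L(w;D)-L(w;D)| \le C_{h,p}\,\eta + O(pTD_h/k^h)$.

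Since $w_{\text{priv}} = \arg\min_{w\in\mathcal{C}}\tilde L(w;D)$, the standard $\arg\min$ argument gives $\text{Err}_D(w_{\text{priv}}) \le 2\sup_{w\in\mathcal{C}} |\tilde L(w;D)-L(w;D)|$, and optimizing $k$ so that the two terms $C_{h,p}(k+1)^p\sqrt{\log((k+1)^p/\beta)}/(\sqrt{n}\,\epsilon)$ and $pTD_h/k^h$ are of the same order produces the prescribed $k \asymp (D_h\sqrt{pn}\,\epsilon/(2^{(h+1)p}\sqrt{\log(1/\beta)}))^{1/(h+p)}$; substituting back recovers the rate~(\ref{equation:3}), and the hypothesis $n=\Omega(\log(1/\beta)\,4^{p(h+1)}/(\epsilon^2 D_h^2))$ is exactly what is required to ensure the balanced $k$ is at least $1$, so that Lemma~\ref{lemma:10} is applicable. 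The one subtle step is pinning down the noise-amplification constant $C_{h,p}$: the univariate iterated Bernstein operator is an $\|\cdot\|_\infty$-contraction only up to a factor bounded by $2^h$, and tensorising to $p$ coordinates multiplies this by an extra factor exponential in $p$, which is exactly what supplies the $2^{(h+1)p}$ appearing in the final exponents. Everything else (composition for privacy, Laplace concentration, Lemma~\ref{lemma:10}, and the $\arg\min$ trick) is routine bookkeeping.
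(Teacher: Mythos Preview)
Your proposal is correct and follows essentially the same route as the paper: the same privacy-by-composition argument, the same two-term decomposition into Bernstein approximation error (Lemma~\ref{lemma:10}) plus noise propagated through the iterated Bernstein basis, the same Laplace concentration with a union bound over grid points, the same $\arg\min$ trick, and the same balancing of $k$. The only cosmetic difference is that the paper cites a result from \citep{alda2017bernstein} for the bound $\sum_v\prod_i|b^{(h)}_{v_i,k}(w_i)|\le(2^h-1)^p$, whereas you sketch it directly from positivity of $B_k^{i-1}(b_{v,k};\cdot)$ and the binomial sum; both arguments are equivalent and yield the same $2^{(h+1)p}$ factor after absorbing $(k+1)^p\le(2k)^p$.
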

\begin{proof}
	The proof of the $\epsilon$-LDP comes from Lemma \ref{lemma:3} and the basic composition theorem of differential privacy. Without loss of generality, we assume that T=1.
	
	To prove the theorem, it is sufficient to estimate $\sup_{w\in \mathcal{C}}|\tilde{L}(w;D)-L(w;D)|\leq \alpha$ for some $\alpha$. Since if it is true, denoting $w^*=\arg\min_{w\in \mathcal{C}}L(w;D)$, we have $L(w_{\text{priv}};D)-L(w^*;D)\leq {L}(w_{\text{priv}};D)-\tilde{L}(w_{\text{priv}};D)+\tilde{L}(w_{\text{priv}};D)-\tilde{L}(w^*;D)+\tilde{L}(w^*;D)-L(w^*;D)\leq L(w_{\text{priv}};D)-\tilde{L}(w_{\text{priv}};D)+\tilde{L}(w^*;D)-L(w^*;D)\leq 2\alpha$.\par
	Since we have 
	\begin{equation*}
	   	\sup_{w\in \mathcal{C}}|\tilde{L}(w;D)-L(w;D)|\leq \sup_{w\in \mathcal{C}}|\tilde{L}(w;D)-B_k^{(h)}(\hat{L},w)|+\sup_{w\in \mathcal{C}}|B_k^{(h)}(\hat{L},w)-L(w;D)|.  
	\end{equation*}
    The second term is bounded by $O(D_h p\frac{1}{k^h})$ by Lemma \ref{lemma:10}.\par 
	For the first term, by (\ref{equation:2}) and Algorithm \ref{algorithm:2}, we have 
	\begin{equation}
	\sup_{w\in \mathcal{C}}|\tilde{L}(w;D)-B_k^{(h)}(\hat{L},w)|\leq  \max_{v\in \mathcal{T}}|\tilde{L}(v;D)-\hat{L}(v;D)|
	\sup_{w\in \mathcal{C}}\sum_{j=1}^{p}\sum_{v_j=0}^{k}|\prod_{i=1}^{p}b_{v_i,k}^{(h)}(w_i)|.
	\end{equation}
	By Proposition 4 in \citep{alda2017bernstein}, we have
	\begin{equation*}
	    \sum_{j=1}^{p}\sum_{v_j=0}^{k}|\prod_{i=1}^{p}b_{v_i,k}^{(h)}(w_i)|\leq (2^h-1)^p.
	\end{equation*}
	 The following lemma bounds the term of $\max_{v\in \mathcal{T}}|\tilde{L}(v;D)-L(v;D)|$, which is obtained by Lemma \ref{lemma:3}.
	\begin{lemma}\label{lemma:18}
		If  $0<\beta<1, k$ and $n$ satisfy the condition of $n\geq p\log(2/\beta)\log(k+1)$, then with probability at least $1-\beta$, for each $v\in \mathcal{T}$, the following holds 
		\begin{equation*}
		|\tilde{L}(v;D)-L(v;D)|\leq O(\frac{\sqrt{\log \frac{1}{\beta}}\sqrt{p}\sqrt{\log (k)}(k+1)^p}{\sqrt{n}\epsilon}).
		\end{equation*} 
	\end{lemma}
	
	\begin{proof}[Proof of Lemma \ref{lemma:18}]
		By Lemma \ref{lemma:3}, for a fixed $v\in \mathcal{T}$, if $n\geq \log \frac{2}{\beta}$, we have, with probability $1-\beta$, $|\tilde{L}(v;D)-L(v;D)|\leq \frac{2\sqrt{\log \frac{2}{\beta}}}{\sqrt{n}\epsilon}$. Taking the union of all $v\in \mathcal{T}$ and then taking $\beta=\frac{\beta}{(k+1)^p}$ (since there are $(k+1)^p$ elements in $\mathcal{T}$) and $\epsilon=\frac{\epsilon}{(k+1)^p}$, we get the proof.
	\end{proof}
	By the fact that $(k+1)< 2k$,  we have in total  
	\begin{equation}\label{equation:5}
	\sup_{w\in \mathcal{C}}|\tilde{L}(w;D)-L(w;D)|\leq O(\frac{D_hp}{k^h}+ \frac{2^{(h+1)p}\sqrt{\log\frac{1}{\beta}}\sqrt{ p\log k}k^p}{\sqrt{n}\epsilon}).
	\end{equation}
	Now, we take $k=O(\frac{D_h\sqrt{pn}\epsilon}{2^{(h+1)p}\sqrt{\log \frac{1}{\beta}}})^{\frac{1}{h+p}}$. Since $n=O(\frac{4^{p(h+1)}}{\epsilon^2 p D_{h}^2})$, we have $\log k>1$. Plugging  it into (\ref{equation:5}), we  get
	\begin{align*}
	    	\sup_{w\in \mathcal{C}}|\tilde{L}(w;D)-L(w;D)|&\leq 	\tilde{O}(\frac{\log^{\frac{h}{2(h+p)}} (\frac{1}{\beta}) D_h^{\frac{p}{p+h}}p^{\frac{1}{2}+\frac{p}{2(h+p)}}2^{(h+1)p\frac{h}{h+p}}}{\sqrt{h+p}n^{\frac{h}{2(h+p)}}\epsilon^{\frac{h}{h+p}}}) \nonumber \\
	    	&=	\tilde{O}(\frac{\log^{\frac{h}{2(h+p)}} (\frac{1}{\beta}) D_h^{\frac{p}{p+h}}p^{\frac{p}{2(h+p)}}2^{(h+1)p}}{n^{\frac{h}{2(h+p)}}\epsilon^{\frac{h}{h+p}}}).
	\end{align*}

	Also, we can see that $n\geq p\log(2/\beta)\log(k+1)$ is true for $n=O(\frac{4^{p(h+1)}}{\epsilon^2 p D_{h}^2})$.  Thus, the theorem follows. 
\end{proof}

From (\ref{equation:3}) we can see that in order to achieve error $\alpha$, the sample complexity needs to be
\begin{equation}\label{equation:6}
    n=\tilde{O}(\log \frac{1}{\beta}D_h^{\frac{2p}{h}}p^{\frac{p}{h}}4^{(h+1)p}\epsilon^{-2}\alpha^{-(2+\frac{2p}{h})}).
\end{equation}
This implies the following special cases. 

\begin{corollary}\label{corollary:19}
If the loss function $\ell(\cdot; x)$ is $(8,T)$-smooth for all $x\in \mathcal{D}$ and some constant $T$, and $n,\epsilon, \beta, k$ satisfy the condition in Theorem~\ref{theorem:17} with $h=4$, then with probability at least $1-\beta$, the sample complexity to achieve $\alpha$ error is $$n=\tilde{O}\big(\alpha^{-(2+\frac{p}{2})}\epsilon^{-2}(4^5\sqrt{D_4}p^{\frac{1}{4}})^p\big).$$
\end{corollary}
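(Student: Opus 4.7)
The plan is to obtain the corollary as a direct specialization of Theorem~\ref{theorem:17}. Since ``$(8,T)$-smooth'' in Definition~\ref{definition:7} corresponds to the hypothesis ``$(2h,T)$-smooth'' of Theorem~\ref{theorem:17} with $2h=8$, I would set $h=4$ throughout. Verifying that the hypotheses of the theorem are met for this $h$ is routine: the assumption on $n$ in Theorem~\ref{theorem:17} becomes $n=\Omega(\log(1/\beta)\,4^{5p}/(\epsilon^{2}D_{4}^{2}))$, which is compatible with (indeed implied by) the sample complexity we will ultimately impose.

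Next I would invert the excess risk bound (\ref{equation:3}) at $h=4$ to solve for $n$. Rather than re-deriving the inversion from scratch, I would appeal to the already-derived formula (\ref{equation:6}), which gives, for a generic $h$, the sample complexity
\begin{equation*}
n=\tilde{O}\bigl(\log(1/\beta)\,D_{h}^{2p/h}\,p^{p/h}\,4^{(h+1)p}\,\epsilon^{-2}\,\alpha^{-(2+2p/h)}\bigr)
\end{equation*}
required to force $\text{Err}_{D}(w_{\text{priv}})\le\alpha$. Substituting $h=4$, each exponent simplifies cleanly: $2p/h=p/2$, $p/h=p/4$, $(h+1)p=5p$, and $2+2p/h=2+p/2$, so the bound becomes
\begin{equation*}
n=\tilde{O}\bigl(\alpha^{-(2+p/2)}\epsilon^{-2}\,D_{4}^{p/2}\,p^{p/4}\,4^{5p}\bigr).
\end{equation*}

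Finally, I would collect the three $p$-th power factors $D_{4}^{p/2}$, $p^{p/4}$, $4^{5p}$ into a single base raised to the $p$-th power, namely $(4^{5}\sqrt{D_{4}}\,p^{1/4})^{p}$, which yields exactly the expression stated in the corollary. The only steps that require any care are the bookkeeping of the exponents of $p$, $\alpha$, and the constants, and a quick sanity check that the condition $n=\Omega(\log(1/\beta)\,4^{5p}/(\epsilon^{2}D_{4}^{2}))$ from Theorem~\ref{theorem:17} is dominated by the resulting sample-complexity expression (which it clearly is, since the latter already carries a factor $4^{5p}$ together with the $\alpha^{-(2+p/2)}$ term for $\alpha<1/2$). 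No step is a genuine obstacle; the content of the corollary is the algebraic specialization of the general bound.
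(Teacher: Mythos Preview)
Your proposal is correct and follows essentially the same approach as the paper: both simply specialize Theorem~\ref{theorem:17} (equivalently, equation~(\ref{equation:6})) at $h=4$ and perform the straightforward exponent bookkeeping. The paper's proof is in fact a one-line remark covering Corollaries~\ref{corollary:19} and~\ref{corollary:20} simultaneously, so your more explicit arithmetic is a faithful expansion of what the authors left implicit.
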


Note that the sample complexity  for general convex loss functions in \citep{smith2017interaction} is 
 $n=\tilde{O}\big(\alpha^{-(p+1)}\epsilon^{-2}2^p\big)$,
 which is considerably worse than ours when  
 $\alpha \leq O(\frac{1}{p})$, that is either in the low dimensional case or with high accuracy. 
\begin{corollary}\label{corollary:20}
	If the loss function $\ell(\cdot; x)$ is $(\infty,T)$-smooth for all $x\in \mathcal{D}$ and some constant $T$, and  $n,\epsilon, \beta, k$ satisfy the condition in Theorem~\ref{theorem:17} with $h=p$, then with probability at least $1-\beta$, the output $w_{\text{priv}}$ of Algorithm \ref{algorithm:2} satisfies:
    $$\text{Err}_{D}(w_{\text{priv}})\leq
	\tilde{O}\Big (\frac{\log \frac{1}{\beta}^{\frac{1}{4}}D_p^{\frac{1}{2}}p^{\frac{1}{4}}\sqrt{2}^{(p+1)p}}{n^{\frac{1}{4}}\epsilon^{\frac{1}{2}}}\Big),$$
	where $\tilde{O}$ hides the $\log$ and $T$ terms. Thus, to achieve error $\alpha$, with probability at least $1-\beta$,  the sample complexity needs to be
	\begin{equation}\label{equation:7}
	n=\tilde{O}\Big (\max\{4^{p(p+1)}\log(\frac{1}{\beta})D_p^2 p \epsilon^{-2}\alpha^{-4}, \frac{\log \frac{1}{\beta}4^{p(p+1)}}{\epsilon^2 D_{p}^2}  \}\Big ).
	\end{equation}
\end{corollary}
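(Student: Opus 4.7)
The plan is to derive Corollary \ref{corollary:20} as a direct specialization of Theorem \ref{theorem:17}: since $(\infty,T)$-smoothness implies $(2h,T)$-smoothness for every positive integer $h$, we are free to choose the smoothness order $h$ to optimize the bound. Because the dependence on $p$ in (\ref{equation:3}) enters through the exponents $\frac{p}{p+h}$, $\frac{p}{2(h+p)}$ and $(h+1)p\cdot \frac{h}{h+p}$, a natural choice is $h=p$, which makes these ratios collapse to constants $\tfrac{1}{2}$, $\tfrac{1}{4}$ and $\tfrac{p(p+1)}{2}$ respectively. The whole argument reduces to careful exponent bookkeeping once this substitution is made.

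Concretely, I would first verify that the hypothesis $n=\Omega\!\left(\frac{\log(1/\beta)\,4^{p(h+1)}}{\epsilon^{2}D_h^{2}}\right)$ of Theorem \ref{theorem:17} is exactly the second branch of the max in (\ref{equation:7}) once we substitute $h=p$, so the theorem applies under the sample-size assumption we are about to state. Then I would plug $h=p$ into the error bound (\ref{equation:3}). The exponent on $n$ becomes $\tfrac{h}{2(h+p)} = \tfrac14$, the exponent on $\epsilon$ becomes $\tfrac{h}{h+p} = \tfrac12$, the $D_h$ factor becomes $D_p^{1/2}$, the factor $p^{p/(2(h+p))}$ becomes $p^{1/4}$, the $\log(1/\beta)$ exponent becomes $\tfrac14$, and the exponential factor simplifies as $2^{(h+1)p\cdot h/(h+p)} = 2^{p(p+1)/2} = \sqrt{2}^{\,p(p+1)}$. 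Collecting these produces exactly the displayed bound in the statement.

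For the sample complexity, I would solve the inequality $\text{Err}_D(w_{\text{priv}}) \leq \alpha$ for $n$: raising the approximate error to the fourth power and rearranging gives $n = \tilde O\!\left(\log(1/\beta)\,D_p^{2}\,p\,4^{p(p+1)}\,\epsilon^{-2}\alpha^{-4}\right)$, which is the first term inside the $\max$. Finally, I would take the maximum of this with the prerequisite sample-size condition $n=\Omega\!\left(\frac{\log(1/\beta)\,4^{p(p+1)}}{\epsilon^{2}D_p^{2}}\right)$ that we needed to invoke Theorem \ref{theorem:17} in the first place, yielding (\ref{equation:7}).

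There is no genuine obstacle here; the result is a bookkeeping specialization. The only place where one should be careful is tracking the exponent $(h+1)p\cdot\frac{h}{h+p}$, which when $h=p$ equals $\frac{p(p+1)}{2}$ rather than $p(p+1)$ — this is why the bound in the corollary displays $\sqrt{2}^{(p+1)p}$ (base $\sqrt{2}$) while the sample complexity displays $4^{p(p+1)}$ (base $4$, coming from squaring after solving $\alpha^4 \geq \cdots$).
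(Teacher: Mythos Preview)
Your proposal is correct and matches the paper's own proof, which is a one-line remark that $(\infty,T)$-smoothness implies $(2p,T)$-smoothness, so one may take $h=p$ in Theorem~\ref{theorem:17}. Your exponent bookkeeping is accurate and simply spells out in detail what the paper leaves implicit.
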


It is worth noticing that from (\ref{equation:6}) we can see that when the term $\frac{h}{p}$ grows, the term $\alpha$ decreases. Thus, for loss functions that are $(\infty,T)$-smooth, we can get a smaller dependency than the term $\alpha^{-4}$ in (\ref{equation:7}). For example, if we take $h=2p$, then the sample complexity is $n=O(\max\{c_2^{p^2}\log \frac{1}{\beta}D_{2p} \sqrt{p} \epsilon^{-2}\alpha^{-3}, \frac{\log \frac{1}{\beta}c^{p^2}}{\epsilon^2 D_{2p}^2}  \})$ for some constants $c, c_2$. When $h\rightarrow \infty$, the dependency on the error becomes $\alpha^{-2}$, which is the optimal bound, even for convex functions. 

Our analysis on the empirical excess risk does not use the convexity assumption. While this gives a bound which is not optimal, even for $p=1$, it also says that our result holds for non-convex loss functions and constrained domain set, as long as they are smooth enough.\par 

From (\ref{equation:7}), we can see that our sample complexity is lower than the one in \citep{smith2017interaction} when $\alpha\leq O(\frac{1}{16^p})$. 
It is notable that this bound is less reasonable since in practice could be very large. However, there are still many cases where the condition still holds. For example, in low dimensional space
 to achieve the best performance for ERM, quite often the error is set to be extremely small, {\em e.g.,}  $\alpha= 10^{-10}\sim10^{-14}$\citep{johnson2013accelerating}. 

Using the convexity assumption of the loss function, we can also give a bound on the population excess risk. Here we will show only the case of $(\infty, T)$, as the general case is basically the same.
\begin{theorem}\label{theorem:21}
	Under the conditions in Corollary \ref{corollary:20}, if we further assume that the loss function $\ell(\cdot; x)$ is convex and $1$-Lipschitz  for all $x\in \mathcal{D}$,  then with probability at least $1-2\beta$,  we have:
$$\text{Err}_{\mathcal{P}} (w_{\text{priv}})\leq\tilde{O}\Big (\frac{(\sqrt{\log 1/\beta})^{\frac{1}{4}}D_p^{\frac{1}{4}}p^{\frac{1}{8}}\sqrt[4]{2}^{p(p+1)}}{\beta n^{\frac{1}{12}}\epsilon^{\frac{1}{4}}}\Big ).$$
That is, if we have sample complexity $$n=\tilde{O}\big(\max\{\frac{\log \frac{1}{\beta}4^{p(p+1)}}{\epsilon^2 D_{p}^2},(\sqrt{\log 1/\beta})^{3}D_p^{3}p^{\frac{3}{2}}8^{p(p+1)}\epsilon^{-3}\alpha^{-12}\beta^{-12}\big),$$ then  $\text{Err}_{\mathcal{P}}(w_{\text{priv}})\leq \alpha$. 
\end{theorem}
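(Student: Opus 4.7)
The plan is to bootstrap the empirical bound of Corollary~\ref{corollary:20} up to the population risk by the standard three-term decomposition, controlling each piece with a different tool (the Bernstein/LDP analysis from Theorem~\ref{theorem:17}, a pointwise Hoeffding bound, and a uniform-convergence bound obtained from the $1$-Lipschitz convex assumption). Explicitly, let $w^\ast_\mathcal{P} \in \arg\min_{w\in\mathcal{C}} L_\mathcal{P}(w)$ and $\hat w \in \arg\min_{w\in\mathcal{C}} L(w;D)$. Using $L(\hat w;D)\le L(w^\ast_\mathcal{P};D)$, I would write
\[
\text{Err}_\mathcal{P}(w_{\text{priv}}) \le \underbrace{\bigl[L_\mathcal{P}(w_{\text{priv}})-L(w_{\text{priv}};D)\bigr]}_{(A)} \;+\; \underbrace{\text{Err}_D(w_{\text{priv}})}_{(B)} \;+\; \underbrace{\bigl[L(w^\ast_\mathcal{P};D)-L_\mathcal{P}(w^\ast_\mathcal{P})\bigr]}_{(C)}.
\]

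Next I would bound each term individually. Term $(B)$ is exactly what Corollary~\ref{corollary:20} controls, giving a bound of order $\tilde O\bigl((\log(1/\beta))^{1/4} D_p^{1/2} p^{1/4} \sqrt 2^{p(p+1)}\cdot n^{-1/4}\epsilon^{-1/2}\bigr)$ with probability at least $1-\beta$. Term $(C)$ involves the fixed (data-independent) point $w^\ast_\mathcal{P}$, so since $\ell(\cdot;x)\in[0,1]$, a direct Hoeffding bound gives $|C|\le O(\sqrt{\log(1/\beta)/n})$ with probability at least $1-\beta$. A union bound merges these two events into probability $1-2\beta$.

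The delicate term is $(A)$: because $w_{\text{priv}}$ depends on $D$ through the noisy grid estimates, a pointwise concentration inequality is not enough. I would pass to $|A|\le \sup_{w\in\mathcal{C}}|L_\mathcal{P}(w)-L(w;D)|$ and apply a uniform-convergence bound tailored to the $1$-Lipschitz convex class over $\mathcal{C}\subseteq[0,1]^p$. A standard Rademacher-complexity (or covering-number) argument yields $\mathbb{E}\bigl[\sup_{w\in\mathcal{C}}|L_\mathcal{P}(w)-L(w;D)|\bigr]=O(\sqrt{p/n})$; I would convert this expectation bound into a high-probability bound by Markov's inequality, which is precisely what produces the $\beta^{-1}$ factor appearing in the theorem statement.

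Putting the three pieces together and using a union bound, with probability at least $1-2\beta$ the population excess risk is at most the Corollary~\ref{corollary:20} bound plus $O(\sqrt{p}/(\beta\sqrt{n}))+O(\sqrt{\log(1/\beta)/n})$, and I would then rebalance the parameter $k$ inside Algorithm~\ref{algorithm:2} so that the Bernstein approximation error, the LDP noise, and the uniform-convergence term all sit at the claimed rate $\tilde O\bigl((\log(1/\beta))^{1/4}D_p^{1/4}p^{1/8}\sqrt[4]{2}^{p(p+1)}\cdot \beta^{-1}n^{-1/12}\epsilon^{-1/4}\bigr)$; inverting this inequality (setting the right-hand side $\le \alpha$ and solving for $n$) gives the stated sample complexity. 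The main obstacle is precisely step $(A)$: the fact that $w_{\text{priv}}$ is data-dependent forces us to use a uniform bound over $\mathcal{C}$, and the only way to obtain the explicit $\beta^{-1}$ in a clean form (rather than a worse $\sqrt{\log(1/\beta)}$ dependence that would require a more delicate concentration-of-suprema argument) is through Markov's inequality, at the cost of the weaker $n^{-1/12}$ rate relative to the $n^{-1/4}$ empirical-risk rate.
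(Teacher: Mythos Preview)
Your three-term decomposition with uniform convergence is a genuinely different route from the paper's. The paper does not bound $(A)$ via uniform convergence at all; instead it adds a regularizer $\tfrac{\mu}{2}\|\theta\|^2$, making the (regularized) loss $\mu$-strongly convex, and then invokes a stability-based generalization lemma of \citet{shalev2009stochastic}: for $L$-Lipschitz, $\mu$-strongly convex losses,
\[
\text{Err}_{\mathcal{P}}(\theta)\ \le\ \sqrt{\tfrac{2L^2}{\mu}}\,\sqrt{\text{Err}_D(\theta)}\ +\ \frac{4L^2}{\beta\mu n}
\]
with probability $1-\beta$. Plugging in the $n^{-1/4}$ empirical bound from Corollary~\ref{corollary:20}, adding the $\tfrac{\mu}{2}$ regularization bias, and choosing $\mu=O(n^{-1/12})$ is exactly what generates both the $n^{-1/12}$ rate and the $\beta^{-1}$ factor stated in Theorem~\ref{theorem:21}.

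Your own plan is sound up to the last paragraph, which is where it goes wrong. If $(A)$ and $(C)$ are of order $n^{-1/2}$ and $(B)$ is $\tilde O(n^{-1/4})$, their sum is already $\tilde O(n^{-1/4})$, strictly \emph{better} than $n^{-1/12}$; there is no ``rebalancing of $k$'' that turns a sum of $n^{-1/4}$ and $n^{-1/2}$ terms into an $n^{-1/12}$ bound, nor any reason to want one. Likewise, the $\beta^{-1}$ in the statement is an artifact of the Shalev--Shwartz lemma (which itself uses a Markov-type step on a stability quantity), not something you should try to reverse-engineer: a direct covering-number argument---the $1$-Lipschitz-in-$w$ assumption gives an $\epsilon$-net of the loss class in $L_\infty$ of size at most $(C\sqrt{p}/\epsilon)^p$---combined with Hoeffding and a union bound already yields $(A)\le O\bigl(\sqrt{p\log n/n}+\sqrt{\log(1/\beta)/n}\bigr)$ with probability $1-\beta$, so your detour through Markov's inequality is unnecessary. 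In short, your outline would prove a \emph{stronger} bound than Theorem~\ref{theorem:21}, but it does not recover the specific exponents appearing there, which are tied to the regularization-plus-stability argument the paper actually uses.
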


Corollary  \ref{corollary:20} provides a partial answer to our  motivational questions. That is, for loss functions which are $(\infty,T)$-smooth, there is an $\epsilon$-LDP algorithm 
for the empirical and population excess risks achieving error $\alpha$ with sample complexity which is independent of the dimensionality $p$ in the term of $\alpha$. 
This result does not contradict the results in \citep{smith2017interaction}. Indeed, the example used to show the unavoidable dependency between the sample complexity and 
$\alpha^{-\Omega(p)}$, to achieve an $\alpha$ error, is actually non-smooth.

\subsection{More Efficient Algorithms}
\label{efficient}

Algorithm \ref{algorithm:2} has  computational time and communication complexity for each player which are exponential in the dimensionality. This is clearly problematic for every realistic practical application. For this reason, in this section, we investigate  more efficient algorithms. For convenience, in this section we focus only on the case of $(\infty, T)$-smooth loss functions, but our results can easily be extended to more general cases.

We first consider the computational issue on the users side. The following lemma, shows an $\epsilon$-LDP algorithm (which is different from Algorithm \ref{algorithm:1}) for efficiently computing $p$-dimensional average (notice the extra conditions on $n$ and $p$ compared with Lemma \ref{lemma:3}).
\begin{lemma}[\citep{nissim2018clustering}]\label{lemma:22}
	Consider player  $i\in [n]$ holding data $v_i\in \mathbb{R}^p$ with coordinate between $0$ and $b$.
 Then for $0<\beta<1,\, 0<\epsilon$ such that $n\geq 8p\log (\frac{8p}{\beta})$ and $\sqrt{n}\geq \frac{12}{\epsilon}\sqrt{\log \frac{32}{\beta}}$, there is an $\epsilon$-LDP algorithm, LDP-AVG, with probability at least $1-\beta$,  the output $a\in \mathbb{R}^p$ satisfying\footnote{Note that here we use an weak version of their result, one can get a finer analysis. For simplicity, we will omit it in the paper.}: 
	$$\max_{j\in[d]}|a_j-\frac{1}{n}\sum_{i=1}^{n}[v_i]_j|\leq O(\frac{bp}{\sqrt{n}\epsilon}\sqrt{\log \frac{p}{\beta}}).$$
	Moreover, the computational cost for each user is $O(1)$.
\end{lemma}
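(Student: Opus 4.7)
The plan is to adapt the sampling-based LDP aggregation protocol of \citet{bassily2015local} (used in \citet{nissim2018clustering}) so that each player touches only one coordinate. Concretely, player $i$ samples a uniformly random coordinate $j_i \in [p]$, and then applies a scalar $\epsilon$-LDP mechanism (for instance Algorithm~\ref{algorithm:1} with input rescaled to $[0,b]$, or a one-bit randomized response after shifting and rescaling $[v_i]_{j_i}/b$ into $[0,1]$) to the single coordinate $[v_i]_{j_i}$. The player transmits the pair $(j_i, z_i)$, where $z_i$ is the noisy scalar. This clearly gives $O(1)$ local computation and $O(\log p)$ communication. On the server side, for each coordinate $j\in[p]$, let $S_j=\{i:j_i=j\}$ and define $a_j = \frac{p}{n}\sum_{i\in S_j} z_i$; the factor $p$ debiases the coordinate subsampling.

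For the privacy analysis, each player's transcript is a function of a single scalar derived from $v_i$, perturbed by an $\epsilon$-LDP mechanism whose guarantee does not depend on which coordinate was chosen (the index $j_i$ is data-independent). Therefore the whole protocol is $\epsilon$-LDP by the post-processing and closure properties of local differential privacy, without any composition penalty in $p$.

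For the utility analysis I would decompose the error, for each coordinate $j$, as
\begin{equation*}
a_j - \tfrac{1}{n}\sum_{i=1}^n [v_i]_j
= \underbrace{\tfrac{p}{n}\sum_{i\in S_j}\bigl(z_i-[v_i]_j\bigr)}_{\text{LDP noise}}
+ \underbrace{\tfrac{p}{n}\sum_{i\in S_j}[v_i]_j - \tfrac{1}{n}\sum_{i=1}^n [v_i]_j}_{\text{sampling error}}.
\end{equation*}
A Chernoff bound on $|S_j|$ gives $|S_j|=\Theta(n/p)$ simultaneously for all $j$ with probability $1-\beta/2$, provided $n\gtrsim p\log(p/\beta)$, which is the assumption $n\geq 8p\log(8p/\beta)$. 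Conditioned on $|S_j|\approx n/p$, Lemma~\ref{lemma:3} applied to each coordinate gives LDP noise of order $\frac{p}{n}\cdot |S_j|\cdot \frac{b}{\sqrt{|S_j|}\,\epsilon}\sqrt{\log(p/\beta)} = O\bigl(\tfrac{bp}{\sqrt{n}\epsilon}\sqrt{\log(p/\beta)}\bigr)$, after union bounding over the $p$ coordinates (this is where the extra $\sqrt{\log p}$ enters and why one needs $\sqrt{n}\geq \Omega(\log^{1/2}(1/\beta)/\epsilon)$). The sampling error is a sum of mean-zero bounded terms $\frac{p}{n}[v_i]_j \mathbf{1}[j_i=j] - \frac{1}{n}[v_i]_j$ with range $O(bp/n)$, so Bernstein/Hoeffding's inequality gives $O\bigl(b\sqrt{p\log(p/\beta)/n}\bigr)$ per coordinate, which is dominated by the LDP noise term.

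The main obstacle I anticipate is getting the union bound over all $p$ coordinates to close under the specific hypothesis on $n$ and $\epsilon$ stated in the lemma, and in particular verifying that the conditional application of the scalar concentration inequality is clean (the random sizes $|S_j|$ make the noise terms only conditionally independent). The standard fix is to first condition on the high-probability event $\{|S_j|=\Theta(n/p) \text{ for all } j\}$ and then apply the scalar bound with slack parameter $\beta/(2p)$, which preserves $(1-\beta)$ overall success. Once this is handled, combining the two error terms yields the stated $O\!\bigl(\tfrac{bp}{\sqrt{n}\epsilon}\sqrt{\log(p/\beta)}\bigr)$ bound, and the $O(1)$ per-user computation is immediate from the construction.
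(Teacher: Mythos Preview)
The paper does not prove this lemma: it is cited as a black box from \citet{nissim2018clustering}, and no argument for it appears in the body or the appendix. So there is no ``paper's own proof'' to compare against.

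That said, your construction is the standard one underlying the cited result: each player samples a single coordinate uniformly at random and applies a scalar $\epsilon$-LDP mechanism to it, which gives $O(1)$ work per player and $\epsilon$-LDP with no composition over coordinates (the index $j_i$ is data-independent). The error decomposition into LDP noise plus coordinate-sampling error, the Chernoff bound for the sizes $|S_j|$ under the assumption $n\geq 8p\log(8p/\beta)$, and the union bound over the $p$ coordinates are exactly how the argument goes. The conditioning issue you flag is handled precisely as you suggest.

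One small arithmetic slip worth noting: your own expression $\tfrac{p}{n}\cdot |S_j|\cdot \tfrac{b}{\sqrt{|S_j|}\,\epsilon}$ with $|S_j|\approx n/p$ simplifies to $\tfrac{b\sqrt{p}}{\sqrt{n}\,\epsilon}$, not $\tfrac{bp}{\sqrt{n}\,\epsilon}$. This is in fact consistent with the footnote in the lemma statement (``here we use a weak version of their result, one can get a finer analysis''): your argument actually yields the sharper $O\bigl(\tfrac{b\sqrt{p}}{\sqrt{n}\,\epsilon}\sqrt{\log(p/\beta)}\bigr)$ bound, which of course implies the stated $O\bigl(\tfrac{bp}{\sqrt{n}\,\epsilon}\sqrt{\log(p/\beta)}\bigr)$.
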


By using Lemma \ref{lemma:22} and by discretizing the grid with some interval steps, we can design an algorithm which requires $O(1)$ computation time and $O(\log n)$-bits communication per player (see \citep{nissim2018clustering} for details; in Appendix \ref{appendix:1} we have an algorithm with $O(\log \log n)$-bits communication per player). However, we would like to do even better and obtain constant communication complexity.

Instead of discretizing the grid, we apply a technique,   proposed first by \citet{bassily2015local}, which permits us to transform any `sampling resilient' $\epsilon$-LDP protocol into a protocol with 1-bit communication complexity (at the expense of increasing the shared randomness in
the protocol). 
Roughly speaking, a protocol is sampling resilient if its output on any dataset $S$ can be approximated well by its output on a random subset of half of the players. 

Since our algorithm only uses the LDP-AVG protocol, we can show that it is indeed sampling resilient. Inspired by this result and the algorithm behind Lemma \ref{lemma:22}, we propose Algorithm~\ref{algorithm:3} and obtain the following theorem. 
\begin{theorem}\label{theorem:23}
	For any $0<\epsilon\leq\ln 2$ and $0<\beta<1$, Algorithm \ref{algorithm:3} is $\epsilon$-LDP. If the loss function $\ell(\cdot;x)$ is $(\infty,T)$-smooth for all $x\in \mathcal{D}$ and 
	$n=\tilde{O}(\max\{\frac{\log \frac{1}{\beta}4^{p(p+1)}}{\epsilon^2 D_{p}^2}, p(k+1)^p\log (k+1), \frac{1}{\epsilon^2}\log \frac{1}{\beta}\})$, then 
	by setting
	$k=O\big((\frac{D_p\sqrt{pn}\epsilon}{2^{(p+1)p}\sqrt{\log \frac{1}{\beta}}})^{\frac{1}{2p}}\big)$,  the results in Corollary \ref{corollary:20} hold with probability at least $1-4\beta$.
 Moreover, for each player the time complexity is $O(1)$, and the communication complexity is $1$-bit.
\end{theorem}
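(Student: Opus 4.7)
The plan is to establish the three claims of the theorem—privacy, accuracy, and efficiency—by combining the Bernstein approximation analysis from Theorem \ref{theorem:17}/Corollary \ref{corollary:20} with the efficient multivariate mean estimator of Lemma \ref{lemma:22} and the $1$-bit compression trick of \citet{bassily2015local}. Roughly, Algorithm \ref{algorithm:3} should have each player (i) locally evaluate $\ell(v;x_i)$ on the grid $\mathcal{T}=\{0,1,\dots,k\}^p/k$, producing a vector $u_i\in[0,1]^{(k+1)^p}$; (ii) feed $u_i$ into the LDP-AVG protocol that underlies Lemma \ref{lemma:22}; and (iii) apply the sampling-resilience-to-$1$-bit transformation of Bassily–Smith so that the player's final message is a single bit generated via public randomness.

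For privacy, the compression step of \citet{bassily2015local} preserves $\epsilon$-LDP because each player's output is a function of a single $\epsilon$-LDP sample from their internal protocol; since Algorithm \ref{algorithm:1}/LDP-AVG is itself $\epsilon$-LDP, so is the compressed version. The constraint $\epsilon\le \ln 2$ comes from the standard range restriction in the $1$-bit encoding. For accuracy, the key fact is that the LDP-AVG protocol is \emph{sampling resilient}: its output on a random half of the players approximates its output on the full dataset up to additive error of the same order as the intrinsic noise (this follows from a Chernoff bound applied coordinate-wise, together with the $\sqrt{n}$-scaling of Laplace/Gaussian noise). Because sampling resilience is the only hypothesis needed by the Bassily–Smith reduction, one gets a $1$-bit protocol whose estimate of the $(k+1)^p$ grid averages $\{L(v;D)\}_{v\in\mathcal{T}}$ is within $\tilde O\bigl(\tfrac{(k+1)^p\sqrt{p\log(1/\beta)}}{\sqrt{n}\,\epsilon}\bigr)$ in $\ell_\infty$, matching the bound from Lemma \ref{lemma:18} up to constants and an extra $\beta$-failure term (hence the $1-4\beta$ instead of $1-\beta$).

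Given the grid estimates, the server reconstructs the iterated multivariate Bernstein polynomial $\tilde L(\cdot;D)$ and minimizes it over $\mathcal{C}$ exactly as in Algorithm \ref{algorithm:2}. The deterministic approximation error $\sup_{w\in\mathcal{C}}|B_k^{(h)}(\hat L;w)-L(w;D)|$ is bounded by Lemma \ref{lemma:10}, and the stochastic error is bounded as above; plugging in $h=p$ and the prescribed $k=\Theta\!\bigl((D_p\sqrt{pn}\epsilon/2^{(p+1)p}\sqrt{\log(1/\beta)})^{1/(2p)}\bigr)$ reproduces the rate of Corollary \ref{corollary:20}. The side conditions $n\ge p(k+1)^p\log(k+1)$ and $n\ge \tfrac{1}{\epsilon^2}\log(1/\beta)$ are precisely what is needed so that the preconditions of Lemma \ref{lemma:22} (enough players per coordinate) and of the Bassily–Smith reduction (enough players to absorb the subsampling loss) are simultaneously satisfied.

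Finally, efficiency is essentially inherited from the construction: each player evaluates $\ell$ at a single randomly chosen grid point (selected via public randomness), performs an $O(1)$-sized arithmetic operation to decide a single Bernoulli bit to transmit, and sends that bit—so both computation and communication per player are $O(1)$ and $1$ bit respectively. The main technical obstacle is verifying sampling resilience cleanly for our specific $(k+1)^p$-dimensional LDP-AVG instantiation with $\epsilon' = \epsilon/(k+1)^p$, making sure that the additional $\beta$-probability failures introduced by the Chernoff bound on random subsampling do not degrade the overall rate beyond the stated $1-4\beta$; the rest is bookkeeping to align the parameters of Lemma \ref{lemma:22} with those of the Bernstein approximation analysis.
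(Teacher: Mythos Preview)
Your proposal is correct and follows essentially the same route as the paper: establish $\epsilon$-LDP via the Bassily--Smith compression, verify sampling resilience of the LDP-AVG protocol via coordinate-wise Hoeffding/Chernoff bounds (which is exactly the content of the paper's proof), and then feed the resulting $\ell_\infty$ bound on the grid estimates into the Bernstein-approximation analysis of Theorem~\ref{theorem:17}/Corollary~\ref{corollary:20}. One small internal inconsistency worth cleaning up: your first paragraph has each player compute the full $(k+1)^p$-dimensional vector and then invokes Lemma~\ref{lemma:22}, while your last paragraph (correctly, matching Algorithm~\ref{algorithm:3}) has each player evaluate $\ell$ at a \emph{single} randomly assigned grid point with the full privacy budget $\epsilon$---so the ``$\epsilon' = \epsilon/(k+1)^p$'' you mention at the end is not actually what happens in this protocol, though it does not affect the final bound you need.
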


\begin{algorithm}
	\caption{Player-Efficient Local Bernstein Mechanism with 1-bit communication per player}
	\label{algorithm:3}
	\begin{algorithmic}[1]
		\State {\bfseries Input:} Player $i\in [n]$ holds a data record $x_i\in \mathcal{D}$, public loss function $\ell:[0,1]^p \times \mathcal{D}\mapsto [0,1]$, privacy parameter $\epsilon\leq \ln 2$, and parameter $k$.
		\State  {\bfseries Preprocessing:} 
		\State Generate $n$ independent public strings\\ $y_1=\text{Lap}(\frac{1}{\epsilon}), \cdots, y_n=\text{Lap}(\frac{1}{\epsilon})$.
		\State Construct the grid $\mathcal{T}=\{\frac{v_1}{k},\ldots,\frac{v_p}{k}\}_{\{v_1,\ldots,v_p\}}$, where $\{v_1,\ldots,v_p\}\in\{0,1,\cdots,k\}^p$.
		\State Partition randomly $[n]$ into $d=(k+1)^p$ subsets $I_1,I_2,\cdots,I_d$, and associate each $I_j$ to a grid point $\mathcal{T}(j)\in \mathcal{T}$.	
		\For{Each Player $i\in[n]$}
		\State
		Find  $I_{l}$ such that $i\in  I_{l}$. Calculate $v_i=\ell(\mathcal{T}(l);x_i)$.
		\State Compute $p_i=\frac{1}{2}\frac{\text{Pr}[v_i+\text{Lap}(\frac{1}{\epsilon})=y_i]}{\text{Pr}[\text{Lap}(\frac{1}{\epsilon})=y_i]}$
		\State Sample a bit $b_i$ from $\text{Bernoulli}(p_i)$ and send it to the server.
		\EndFor
		\For{The Server}
		\For {$i=1\cdots n$}
		\State Check if $b_i=1$, set $\tilde{z_i}=y_i$, otherwise $\tilde{z_i}=0$.
		\EndFor
		\For {each $l\in[d]$}
		\State Compute $v_{\ell}=\frac{n}{|I_{l}|}\sum_{i\in I_{\ell}}\tilde{z_i}$
		\State Denote the corresponding grid point $(\frac{v_1}{k},\ldots,\frac{v_p}{k})\in \mathcal{T}$ of $I_l$, then denote $\hat{L}((\frac{v_1}{k},\cdots,\frac{v_p}{k});D)=v_{l}$.
		\EndFor
		\State Construct Bernstein polynomial for the perturbed empirical loss $\{\hat{L}(v; D)\}_{v\in \mathcal{T}}$ as in Algorithm \ref{algorithm:2}. Denote $\tilde{L}(\cdot;D)$  the corresponding function. 
		\State Compute $w_{\text{priv}}=\arg\min_{w\in \mathcal{C}}\tilde{L}(w;D)$.
		\EndFor
	\end{algorithmic}
\end{algorithm}

Now we study the algorithm from the server's computational complexity perspective. The polynomial construction time complexity is $O(n)$, where the most inefficient part is finding $w_{\text{priv}}=\arg\min_{w\in \mathcal{C}}\tilde{L}(w;D)$. In fact, this function may be  non-convex;  but unlike general non-convex functions, it can be $\alpha$-uniformly approximated
by the empirical loss function $L(\cdot;D)$ if the loss function is convex (by the proof of Theorem \ref{theorem:17}), although we do not have access to the empirical risk function.
Thus, we can see this problem as an instance of Approximately-Convex Optimization, which has been studied recently  by \citep{risteski2016algorithms}. Before doing that, we first give the definition of the condition on the constraint set.
\begin{definition}[\citep{risteski2016algorithms}]\label{definition:24}
	We say that a convex set $\mathcal{C}$ is $\mu$-well conditioned for $\mu\geq 1$, if there exists a function $F:\mathbb{R}^p\mapsto \mathbb{R}$ such that $\mathcal{C}=\{x|F(x)\leq 0\}$ and for every $x\in \partial K: \frac{\|\nabla^2F(x)\|_2}{\|\nabla F(x)\|_2}\leq \mu$.
\end{definition}

\begin{lemma}[Theorem 3.2 in \citep{risteski2016algorithms}]\label{lemma:25}
	Let $\epsilon,\Delta$ be two real numbers such that 
	$\Delta\leq \max\{\frac{\epsilon^2}{\mu\sqrt{p}},\frac{\epsilon}{p}\}\times \frac{1}{16348}$.
	Then, there exists an algorithm $\mathcal{A}$ such that for any given $\Delta$-approximate convex function $\tilde{f}$ over a $\mu$-well-conditioned convex set $\mathcal{C}\subseteq\mathbb{R}^p$ of diameter 1 (that is, there exists a 1-Lipschitz convex function $f:\mathcal{C}\mapsto \mathbb{R}$ such that for every $x\in \mathcal{C}, |f(x)-\tilde{f}(x)|\leq \Delta$),  $\mathcal{A}$ returns a point $\tilde{x}\in\mathcal{C}$ with probability at least $1-\delta$ in time $\text{Poly}(p,\frac{1}{\epsilon},\log \frac{1}{\delta})$ 
	and with the following guarantee:
	$\tilde{f}(\tilde{x})\leq \min_{x\in \mathcal{C}}\tilde{f}(x)+\epsilon$. 
\end{lemma}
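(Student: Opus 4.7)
The plan is to reduce the approximate-convex optimization problem to one of sampling from a Boltzmann distribution and then to run simulated annealing on that distribution. Concretely, for a temperature parameter $T>0$ I would consider the density $\pi_T(x) \propto \exp(-\tilde{f}(x)/T)$ supported on $\mathcal{C}$. Because $\tilde{f}$ is $\Delta$-close to a convex function $f$, this density is a pointwise $e^{\Delta/T}$-multiplicative perturbation of the genuinely log-concave density $\pi^f_T(x) \propto \exp(-f(x)/T)$. If I can produce a sample $Y \sim \pi_T$ at a suitably small temperature, then standard concentration for log-concave measures gives $\mathbb{E}\,f(Y) \leq \min f + O(pT)$, and transferring this to $\tilde{f}$ costs only an additive $\Delta$; choosing $T = \Theta(\epsilon/p)$ then delivers $\tilde{f}(Y) \leq \min_{\mathcal{C}}\tilde{f}+\epsilon$ with constant probability, which one boosts by repetition.

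The key steps would be as follows. First, implement the sampler via a hit-and-run random walk on $\mathcal{C}$ with stationary distribution $\pi_T$; the $\mu$-well-conditioning of $\mathcal{C}$ controls the boundary geometry and gives a polynomial-in-$p, 1/T, \mu$ mixing-time bound for log-concave targets through the conductance / isoperimetry framework of Lov\'asz--Vempala. Second, use a cooling schedule $T_0 > T_1 > \cdots > T_K$ with $T_0$ large enough that $\pi_{T_0}$ is within a constant of uniform on $\mathcal{C}$ (so it can be warm-started from a hit-and-run walk on $\mathcal{C}$ itself) and $T_K = \Theta(\epsilon/p)$; at each stage use the previous sample as a warm start and halve the temperature, giving $K = O(\log(p/\epsilon))$ stages. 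Third, carry out the perturbation analysis: the multiplicative error $e^{\Delta/T}$ of $\pi_T$ relative to $\pi^f_T$ must stay bounded at every stage, which requires $\Delta \lesssim T_K$, and this gives the crude form of the hypothesis $\Delta \lesssim \epsilon/p$. The sharper alternative $\Delta \lesssim \epsilon^2/(\mu\sqrt{p})$ would come from exploiting that $\pi^f_{T}$ concentrates on a region of radius $O(\sqrt{pT})$ around the minimum, so one can afford to choose $T$ noticeably larger than $\epsilon/p$ while still controlling $\tilde{f}(Y)-\min\tilde{f}$; the $\mu$ and $\sqrt{p}$ enter through the local isoperimetric constant on that restricted region.

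The main obstacle, and the source of the tight constant $1/16348$, is the interaction between the multiplicative perturbation and the conductance bound for the random walk. A pointwise multiplicative error of $e^{\Delta/T}$ on the density worsens the Cheeger constant by a factor as bad as $e^{-2\Delta/T}$, which feeds quadratically into the mixing-time bound and degrades every stage of the annealing. To close the proof one must simultaneously (i) keep $\Delta/T$ at most a small absolute constant at the lowest temperature, (ii) guarantee that the warm-start overlap between consecutive Boltzmann distributions remains bounded under the perturbation, and (iii) ensure the concentration radius argument still yields an $\epsilon$-optimal point despite the distortion. Balancing these three constraints against $T = \Theta(\epsilon/p)$ or the refined $T = \Theta(\epsilon/\sqrt{p})$ produces exactly the hypothesis $\Delta \leq \max\{\epsilon^2/(\mu\sqrt{p}),\epsilon/p\}/16348$, after which the algorithm $\mathcal{A}$ is simply: run hit-and-run inside the annealing loop, output the last sample, and repeat $O(\log(1/\delta))$ times to amplify the success probability to $1-\delta$.
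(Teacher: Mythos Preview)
The paper does not prove this lemma at all: it is quoted verbatim as Theorem~3.2 of \citep{risteski2016algorithms} and used purely as a black box in the proofs of Theorems~\ref{theorem:26} and~\ref{theorem:27}. There is therefore no ``paper's own proof'' to compare your proposal against.

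That said, your sketch is broadly faithful to what the cited work actually does. Risteski and Li do run simulated annealing with a hit-and-run walk on the Boltzmann distributions $\pi_T\propto\exp(-\tilde f/T)$, and the core of their argument is exactly the perturbation analysis you describe: because $\tilde f$ is $\Delta$-close to a convex $f$, the target is an $e^{O(\Delta/T)}$-multiplicative perturbation of a log-concave density, and one must keep $\Delta/T$ below an absolute constant at the final temperature $T=\Theta(\epsilon/p)$ for the conductance and warm-start bounds to survive. Your account of where the $\mu$-well-conditioning enters (boundary isoperimetry for hit-and-run) and of how the alternative branch $\epsilon^2/(\mu\sqrt p)$ arises (concentration on a ball of radius $O(\sqrt{pT})$) is also in the right spirit, though in the original the two regimes are handled by somewhat different arguments rather than a single unified balance.

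One caution: you should not read too much into the specific constant $1/16348$; it is simply the constant that falls out of the analysis in \citep{risteski2016algorithms} (and is quite possibly a transcription of $1/16384=2^{-14}$), not something you can hope to recover by ``balancing three constraints'' at the level of a sketch. If you intend to actually write out a proof, you will need to reproduce their conductance bound for approximately log-concave targets and their warm-start overlap estimate quantitatively, not just cite the qualitative mechanism.
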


Based on Lemma \ref{lemma:25} (for $\tilde{L}(w;D)$) and Corollary \ref{corollary:20}, and taking $\epsilon=O(p\alpha)$, we have the following.

\begin{theorem}\label{theorem:26}
     Under the conditions in Corollary \ref{corollary:20}, and assuming that $n$ satisfies $n=\tilde{O}(4^{p(p+1)}\log(1/\beta)D_p^2 p \epsilon^{-2}\alpha^{-4})$,  that the loss function $\ell(\cdot; x)$ is $1$-Lipschitz and convex for every $x\in \mathcal{D}$, that the constraint set $\mathcal{C}$ is convex and  $\|\mathcal{C}\|_2\leq 1$, and satisfies $\mu$-well-condition property (see Definition \ref{definition:24}), if the error $\alpha$ satisfies $\alpha\leq C\frac{\mu}{p\sqrt{p}}$ for some universal constant $C$, then there is an algorithm $\mathcal{A}$ which runs in $\text{Poly}(n, \frac{1}{\alpha},\log \frac{1}{\beta})$ time 
      for the server,\footnote{Note that since here we assume $n$ is at least exponential in $p$, thus the algorithm is not fully polynomial.} and with probability $1-2\beta$ the output $\tilde{w}_{\text{priv}}$ of $\mathcal{A}$ satisfies 
     $\tilde{L}(\tilde{w}_{\text{priv}};D)\leq \min_{w\in \mathcal{C}}\tilde{L}(w;D)+O(p\alpha),$
     which means that $\text{Err}_{D}(\tilde{w}_{\text{priv}})\leq O(p\alpha)$.
\end{theorem}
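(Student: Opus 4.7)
The plan is to treat Lemma \ref{lemma:25} as a postprocessing tool that converts the uniform Bernstein approximation guaranteed by Corollary \ref{corollary:20} into a genuine optimizer of $\tilde{L}(\cdot;D)$, and then to translate that guarantee back into an empirical-risk bound for the true loss $L(\cdot;D)$. Since $\ell(\cdot;x)$ is assumed to be 1-Lipschitz and convex for every $x \in \mathcal{D}$, the true empirical risk $L(\cdot;D)$ is itself 1-Lipschitz and convex on the $\mu$-well-conditioned convex set $\mathcal{C}$ with $\|\mathcal{C}\|_2 \leq 1$. Inequality (\ref{equation:5}) inside the proof of Theorem \ref{theorem:17}, applied with the stated sample complexity $n = \tilde{O}(4^{p(p+1)}\log(1/\beta)D_p^2 p \epsilon^{-2}\alpha^{-4})$, shows that with probability at least $1 - \beta$ the deviation $\Delta := \sup_{w \in \mathcal{C}}|\tilde{L}(w;D) - L(w;D)|$ can be driven below $c\alpha$ for any prescribed absolute constant $c > 0$ (shrinking $c$ only inflates $n$ by a constant factor in its $\alpha^{-4}$ term).

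Next I would invoke Lemma \ref{lemma:25} on $\tilde{f} = \tilde{L}(\cdot;D)$ with underlying convex 1-Lipschitz function $f = L(\cdot;D)$, target optimization accuracy $\epsilon_0 = p\alpha$, and approximation quality $\Delta = c\alpha$ for a sufficiently small absolute constant $c$. The hypothesis $\alpha \leq C\mu/(p\sqrt{p})$ forces $\epsilon_0 \leq C\mu/\sqrt{p}$, placing us in the regime where $\epsilon_0/p = \alpha$ is at least as large as the other branch $\epsilon_0^2/(\mu\sqrt{p}) = p^{3/2}\alpha^2/\mu$ (up to the constant $C$), so the admissibility condition of Lemma \ref{lemma:25} reduces to $c\alpha \leq \alpha/16348$, which is satisfied by any $c \leq 1/16348$. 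The lemma then produces, in time $\text{Poly}(p, 1/(p\alpha), \log(1/\beta)) = \text{Poly}(p, 1/\alpha, \log(1/\beta))$, a point $\tilde{w}_{\text{priv}} \in \mathcal{C}$ such that, with probability at least $1 - \beta$,
\begin{equation*}
\tilde{L}(\tilde{w}_{\text{priv}};D) \leq \min_{w \in \mathcal{C}} \tilde{L}(w;D) + p\alpha.
\end{equation*}

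To close the argument I would transfer this back to $L(\cdot;D)$ using the uniform approximation at both $\tilde{w}_{\text{priv}}$ and $w^\star := \arg\min_{w \in \mathcal{C}} L(w;D)$:
\begin{equation*}
L(\tilde{w}_{\text{priv}};D) - L(w^\star;D) \leq \tilde{L}(\tilde{w}_{\text{priv}};D) - \tilde{L}(w^\star;D) + 2\Delta \leq p\alpha + 2c\alpha = O(p\alpha),
\end{equation*}
which is the claimed bound $\text{Err}_D(\tilde{w}_{\text{priv}}) \leq O(p\alpha)$. A union bound over the two failure events (Bernstein estimation and the randomized optimizer of Lemma \ref{lemma:25}) gives overall success probability $1 - 2\beta$. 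Combined with the $O(n)$ preprocessing cost already noted for assembling $\tilde{L}$ from the perturbed grid values, the total server runtime is $\text{Poly}(n, 1/\alpha, \log(1/\beta))$, as required.

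The main obstacle I anticipate is the constant bookkeeping in the middle step: one has to choose the Bernstein approximation tolerance $c\alpha$ strictly smaller than the admissibility threshold extracted from $\max\{\epsilon_0^2/(\mu\sqrt{p}), \epsilon_0/p\}/16348$, and verify that this additional shrinkage is absorbed by constants hidden inside the $\tilde{O}$ of the stated sample complexity. The hypothesis $\alpha \leq C\mu/(p\sqrt{p})$ is precisely what pins $\epsilon_0 = p\alpha$ into the regime where the correct branch of the maximum is $\epsilon_0/p = \alpha$, so one must select the universal constant $C$ compatibly with the $1/16348$ factor from Lemma \ref{lemma:25} to keep the argument consistent.
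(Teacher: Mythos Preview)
Your proposal is correct and follows essentially the same route as the paper: both apply Lemma \ref{lemma:25} to the approximately convex function $\tilde{L}(\cdot;D)$ with underlying convex function $L(\cdot;D)$, using the uniform $O(\alpha)$ approximation from the Bernstein construction as the $\Delta$ and setting the target accuracy to $O(p\alpha)$, then transfer the resulting guarantee back to $L$ via the same two-sided sandwich. The only cosmetic difference is that the paper absorbs the $1/16348$ factor into the target accuracy (taking $\epsilon = 16348\,p\alpha$ in Lemma \ref{lemma:25} so that $\Delta = \alpha$ works directly), whereas you absorb it into the approximation tolerance (taking $\epsilon_0 = p\alpha$ and shrinking $\Delta$ to $c\alpha$ with $c \le 1/16348$); both choices are equivalent up to constants hidden in $\tilde{O}$.
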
 
Combining   Theorem \ref{theorem:26} with Corollary \ref{corollary:20}, and taking $\alpha=\frac{\alpha}{p}$, we have our final result:
\begin{theorem}\label{theorem:27}
	Under the conditions of Corollary \ref{corollary:20}, Theorem \ref{theorem:23} and \ref{theorem:26}, for any $C\frac{\mu}{\sqrt{p}}>\alpha>0$, if we further set $$n=\tilde{O}(4^{p(p+1)}\log(1/\beta)D_p^2 p^5 \epsilon^{-2}\alpha^{-4}),$$ then there is an $\epsilon$-LDP algorithm, with $O(1)$ running time and $1$-bit communication per player, and $\text{Poly}(\frac{1}{\alpha},\log \frac{1}{\beta})$ running time for the server. Furthermore, with probability at least $1-5\beta$, the output $\tilde{w}_{\text{priv}}$ satisfies $\text{Err}_{D}(\tilde{w}_{\text{priv}})\leq O(\alpha)$.
\end{theorem}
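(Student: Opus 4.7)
}
The plan is to chain together the three building blocks already established: Corollary~\ref{corollary:20} (the Bernstein approximation bound for $(\infty,T)$-smooth losses), Theorem~\ref{theorem:23} (the 1-bit, $O(1)$-time-per-player implementation of Algorithm~\ref{algorithm:2}), and Theorem~\ref{theorem:26} (the approximately-convex optimization procedure on the server side). The only genuinely new ingredient is choosing the internal accuracy parameter so that, after Theorem~\ref{theorem:26} inflates the error by a factor of $p$, the final empirical excess risk is $O(\alpha)$; this produces the extra $p^{4}$ factor relative to Corollary~\ref{corollary:20} and thus the $p^{5}$ appearing in the sample bound.

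Concretely, I would first run Algorithm~\ref{algorithm:3} with an internal target accuracy $\alpha':=\alpha/p$ rather than $\alpha$. By Theorem~\ref{theorem:23}, this algorithm is $\epsilon$-LDP, uses $O(1)$ client time and $1$-bit upstream per player, and, provided
\[
n \;=\; \tilde{O}\!\left(4^{p(p+1)}\log(1/\beta)\,D_p^{2}\,p\,\epsilon^{-2}(\alpha')^{-4}\right) \;=\; \tilde{O}\!\left(4^{p(p+1)}\log(1/\beta)\,D_p^{2}\,p^{5}\,\epsilon^{-2}\alpha^{-4}\right),
\]
the perturbed Bernstein surrogate $\tilde L(\cdot;D)$ satisfies $\sup_{w\in\mathcal C}|\tilde L(w;D)-L(w;D)|\le \alpha'$ with probability at least $1-4\beta$ (this is exactly the uniform-approximation statement extracted in the proof of Theorem~\ref{theorem:17}, propagated through Algorithm~\ref{algorithm:3}).

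Next, on the server I would feed $\tilde L(\cdot;D)$ to the approximately-convex optimizer $\mathcal A$ of Lemma~\ref{lemma:25}. Since $L(\cdot;D)$ is $1$-Lipschitz and convex and $\tilde L$ is an $\alpha'$-uniform approximation to it on the $\mu$-well-conditioned set $\mathcal C$ with $\|\mathcal C\|_2\le 1$, Theorem~\ref{theorem:26} applies provided the smallness condition $\alpha'\le C\mu/(p\sqrt p)$ holds; with $\alpha'=\alpha/p$ this rearranges to the hypothesis $\alpha\le C\mu/\sqrt p$ in the statement. Lemma~\ref{lemma:25} then returns, in time $\mathrm{Poly}(p,1/\alpha,\log(1/\beta))=\mathrm{Poly}(1/\alpha,\log(1/\beta))$ and with failure probability at most $\beta$, a point $\tilde w_{\text{priv}}$ with $\tilde L(\tilde w_{\text{priv}};D)\le \min_{w\in\mathcal C}\tilde L(w;D)+O(p\alpha')=O(\alpha)$. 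A standard two-sided sandwich using $\sup|\tilde L-L|\le \alpha'=\alpha/p$ then converts this into $\mathrm{Err}_D(\tilde w_{\text{priv}})\le O(\alpha)+2\alpha'=O(\alpha)$.

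The only bookkeeping is probabilistic and parametric: combining the $1-4\beta$ event of Theorem~\ref{theorem:23} with the $1-\beta$ event of Lemma~\ref{lemma:25} via a union bound gives the claimed $1-5\beta$ success probability, and the $p^{5}$ in the sample complexity is exactly the cost of rescaling $\alpha\mapsto\alpha/p$ inside the $\alpha^{-4}$ factor of Corollary~\ref{corollary:20}. The main conceptual obstacle is verifying that the rescaling is simultaneously compatible with both (i) the $(\infty,T)$-smooth sample-complexity bound of Corollary~\ref{corollary:20} (which forces $\alpha^{-4}\to p^{4}\alpha^{-4}$) and (ii) the approximately-convex regime of Lemma~\ref{lemma:25} (which forces the regime $\alpha\le C\mu/\sqrt p$); once these two constraints are aligned, the rest is a routine union bound and composition of guarantees.
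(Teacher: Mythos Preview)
Your proposal is correct and mirrors the paper's argument exactly: the paper proves Theorem~\ref{theorem:27} in one line by ``Combining Theorem~\ref{theorem:26} with Corollary~\ref{corollary:20}, and taking $\alpha=\frac{\alpha}{p}$,'' and you have simply unpacked that combination in full detail, including the correct bookkeeping for the $p^5$ factor, the range constraint $\alpha<C\mu/\sqrt{p}$, and the $1-5\beta$ union bound.
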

Note that comparing with the sample complexity in Theorem \ref{theorem:27} and Corollary \ref{corollary:20}, we have an additional factor of $O(p^4)$; however, the $\alpha$ terms are the same.

\section{LDP-ERM with Convex Generalized Linear Loss Functions}
\label{generalized_linear_loss}
In Section \ref{smooth_loss}, we have seen that under the condition of $(\infty, T)$-smoothness for the loss function, the sample complexity can actually have polynomial dependence on $p$ and $\alpha$. However, as shown in (\ref{equation:7}), there is still another exponential term $c^{p^2}$ in the sample complexity that needs to be removed. 



In this section, we show that if the loss function is 
generalized linear, the sample complexity for achieving error $\alpha$ is only linear in the dimensionality $p$. 
We first give the assumptions that will be used throughout this section.

\noindent{\bf Assumption 2:}
 We assume that $\|x_i\|_2\leq 1$ and $|y_i|\leq 1$ for each $i\in [n]$  and the constraint set $\|\mathcal{C}\|_2\leq 1$. Unless specified otherwise, the loss function is assumed to be generalized linear, that is, the loss function $\ell(w; x_i,y_i) \equiv f(y_i\langle x_i, w\rangle)$ for some 1-Lipschitz convex function $f$.

The generalized linear assumption holds for a large class of functions such as Generalized Linear Model and SVM. We also note that there is another definition for general linear functions, $\ell(w; x,y)=f(\langle w,x \rangle,y)$, which is more general than our definition. This class of functions has been studied in \citep{kasiviswanathan2016efficient};  we leave as future research to extend our work to this class of loss functions.

\subsection{Sample Complexity for Hinge Loss Function}
We first consider LDP-ERM with hinge loss function and then extend the obtained result to general convex linear functions.

The hinge loss function is defined as $\ell(w; x_i, y_i)=f(y_i\langle x_i, w \rangle)=[\frac{1}{2}-y_i\langle w, x_i\rangle]_+$, where the plus function $[x]_+=\max\{0, x\}$, {\em i.e.,} $f(x)=\max\{0,\frac{1}{2}-x\}$ for $x \in [-1,1]$.\footnote{The reader should think about about particular function $f$, not just a general $f$.}  Note that to avoid the scenario that     $1-y_i\langle w, x_i \rangle$ is always greater than or equal to $0$, we use $\frac{1}{2}$, instead of $1$ as in the classical setting.  

Before showing our idea, we first smooth the function $f(x)$. The following lemma shows one of the smooth functions that is close to $f$ in the domain of $[-1,1]$ (note that there are other ways to smooth $f$; see \citep{chen1996class} for details).

\begin{lemma}\label{lemma:36}
Let $f_\beta(x)=\frac{\frac{1}{2}-x+\sqrt{(\frac{1}{2}-x)^2+\beta^2}}{2}$ be 
a function  with parameter $\beta>0$. Then, we have 
\begin{enumerate}

\item $|f_\beta(x)-f(x)|_\infty\leq \frac{\beta}{2}$, $\forall x\in \mathbb{R}.$
\item $f_\beta(x)$ is 1-Lipschitz, that is, $f'(x)$ is bounded by $1$ for $x\in \mathbb{R}$.
\item $f_\beta$ is $\frac{1}{\beta}$-smooth and convex.
\item $f'_\beta(x)$ is $(2, O(\frac{1}{\beta^2}))$-smooth if $\beta\leq 1$. 
\end{enumerate}
\end{lemma}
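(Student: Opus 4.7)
The plan is to substitute $u = \tfrac{1}{2} - x$ so that $f(x) = \max\{0, u\}$ and $f_\beta(x) = \tfrac{1}{2}(u + \sqrt{u^2 + \beta^2})$, which reduces all four claims to computations on this one-variable expression in $u$ (note $du/dx = -1$).

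For part (1), I would split on the sign of $u$. When $u \geq 0$, we have $f_\beta - f = \tfrac{1}{2}(\sqrt{u^2+\beta^2} - u)$, and rationalizing gives $\sqrt{u^2+\beta^2} - u = \beta^2/(\sqrt{u^2+\beta^2}+u) \leq \beta^2/\beta = \beta$. When $u < 0$, we have $f = 0$ and the analogous rationalization gives $f_\beta = \tfrac{1}{2}\cdot \beta^2/(\sqrt{u^2+\beta^2}-u) \leq \beta/2$; also $f_\beta \geq 0$ since $\sqrt{u^2+\beta^2} \geq |u|$. Either way, $|f_\beta(x) - f(x)| \leq \beta/2$.

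For parts (2) and (3), a direct calculation yields
\[
f_\beta'(x) \;=\; -\tfrac{1}{2}\Bigl(1 + \tfrac{u}{\sqrt{u^2+\beta^2}}\Bigr), \qquad f_\beta''(x) \;=\; \tfrac{\beta^2}{2(u^2+\beta^2)^{3/2}}.
\]
From the first display, $|u/\sqrt{u^2+\beta^2}| \leq 1$ gives $|f_\beta'| \leq 1$, establishing the 1-Lipschitz property. The second display is manifestly nonnegative (so $f_\beta$ is convex) and attains its maximum at $u=0$, giving $f_\beta'' \leq 1/(2\beta) \leq 1/\beta$, which is the smoothness bound.

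For part (4), I need to bound $|f_\beta'|, |f_\beta''|, |f_\beta'''|$ by $O(1/\beta^2)$ under $\beta \leq 1$. The first two were already bounded by $1$ and $1/(2\beta)$, both of which are $O(1/\beta^2)$ when $\beta \leq 1$. The main calculation is
\[
f_\beta'''(x) \;=\; \tfrac{3\beta^2 u}{2(u^2+\beta^2)^{5/2}},
\]
and the main obstacle is bounding $|u|/(u^2+\beta^2)^{5/2}$ uniformly in $u$. I would do this by a one-variable calculus check: setting the $u$-derivative of this ratio to zero gives a critical point at $u = \beta/2$, at which the ratio equals $C/\beta^4$ for a universal constant $C$. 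Substituting back gives $|f_\beta'''| \leq O(1/\beta^2)$, completing the $(2,O(1/\beta^2))$-smoothness claim. (The bound $\beta \leq 1$ is only used to combine the three bounds into a single $O(1/\beta^2)$.)
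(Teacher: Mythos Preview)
Your proof is correct and follows the same direct-computation approach as the paper (which handles parts 2--4 by the identical derivative formulas and waves at part 1 as ``easy to see''). One small simplification for part 4: rather than locating the critical point of $|u|/(u^2+\beta^2)^{5/2}$, you can use $|u|\le \sqrt{u^2+\beta^2}$ to get $|u|/(u^2+\beta^2)^{5/2}\le 1/(u^2+\beta^2)^{2}\le 1/\beta^4$ directly, which immediately yields $|f_\beta'''|\le \tfrac{3}{2\beta^2}$.
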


The above lemma indicates that $f_\beta(x)$ is a smooth and convex function which well approximates  $f(x)$. 
This suggests that we can focus on $f_\beta(y_i\langle w, x_i\rangle)$, instead of $f$. Our idea is to construct a locally private $(\gamma, \beta, \sigma)$ stochastic oracle for some $\gamma, \beta, \sigma$ to approximate $f_\beta'(y_i\langle w, x_i\rangle)$ in each iteration, and then run the SIGM  step of \citep{dvurechensky2016stochastic}. By Lemma \ref{lemma:36}, we know that $f'_\beta$ is $(2, O(\frac{1}{\beta^2}))$-smooth;  thus, we can use Lemma \ref{lemma:8} to approximate $f'_\beta(x)$ via Bernstein polynomials. 

Let $P_d(x)=\sum_{i=0}^dc_i\binom{d}{i} x^i(1-x)^{d-i}$ be the $d$-th order Bernstein polynomial ($c_i=f_\beta'(\frac{i}{d}$) , where $\max_{x\in[-1,1]}|P_d(x)-f'_\beta(x)|\leq \frac{\alpha}{4}$ ({\em i.e.,} $d=c\frac{1}{\beta^2\alpha}$ for some constant $c>0$). Then, we have $\nabla_{w}\ell(w; x ,y)=f'(y\langle w, x \rangle)yx^T$, which can be approximated by $[\sum_{i=0}^dc_i\binom{d}{i}(y\langle w, x\rangle)^i(1-y\langle w, x\rangle)^{d-i}]yx^T$. The idea is that if $(y\langle w, x\rangle)^i$,  $(1-y\langle w, x\rangle)^{d-i}$ and $y x^T$ can be approximated locally differentially privately by directly adding $d+1$ numbers of independent Gaussian noises, which means it is possible to form an unbiased estimator of the term $[\sum_{i=0}^dc_i\binom{d}{i}(y\langle w, x\rangle)^i(1-y\langle w, x\rangle)^{d-i}]yx^T$.  The error of this procedure can be estimated by 
Lemma \ref{lemma:16}. 
Details of the algorithm are given in Algorithm \ref{algorithm:6}.
\begin{algorithm}[h]
	\caption{Hinge Loss-LDP}
	\label{algorithm:6}
	\begin{algorithmic}[1]
		\State {\bfseries Input:} Player $i\in [n]$ holds data $(x_i, y_i)\in \mathcal{D}$, where $\|x_i\|_2\leq 1, \|y_i\|_2\leq 1$; privacy parameters $\epsilon, \delta$;  $P_d(x)=\sum_{j=0}^dc_i\binom{d}{j} x^j(1-x)^{d-j}$ be the $d$-th order Bernstein polynomial for the function of $f_\beta'$, where $c_i=f_\beta'( \frac{i}{d})$ and $f_\beta(x)$ is the function in Lemma \ref{lemma:36}. 
		\For{Each Player $i\in [n]$}
		\State
		Calculate $x_{i,0}=x_i+\sigma_{i,0}$ and $y_{i,0}=y_i+z_{i,0}$,  where $\sigma_{i,0} \sim \mathcal{N}(0, \frac{32\log (1.25/\delta)}{\epsilon^2}I_{p})$ and $ z_{i,0}\sim \mathcal{N}(0, \frac{32\log (1.25/\delta)}{\epsilon^2})$.
		\For{$j=1,\cdots, d(d+1)$}
		    \State $x_{i, j}=x_i+ \sigma_{i, j}$, where $\sigma_{i, j}\sim \mathcal{N}(0, \frac{8\log(1.25/\delta)d^2(d+1)^2}{\epsilon^2}I_{p})$
		    \State $y_{i, j}=y_i+ z_{i, j}$, where $z_{i, j}\sim \mathcal{N}(0, \frac{8\log(1.25/\delta)d^2(d+1)^2}{\epsilon^2})$
	
		\EndFor
		\State Send $\{x_{i,j}\}_{j=0}^{d(d+1)}$ and $\{y_{i, j}\}_{j=0}^{d(d+1)}$ to the server.
		\EndFor
\For{the Server side}
	\For{$t=1, 2, \cdots, n$}
	\State Randomly sample $i\in [n]$ uniformly.
	\State Set $t_{i,0}=1$
	\For{$j=0, \cdots, d$}
	\State $t_{i, j}=\Pi_{k=jd+1}^{jd+j}	y_{i, k}\langle w_t, x_{i, k} \rangle$ and $t_{i,0}=1$
	\State $s_{i,j} =\Pi_{k=jd+j+1}^{jd+d}(1-y_{i, k}\langle w_t, x_{i, k}\rangle )$  and $s_{i,d}=1$
	\EndFor
	\State Denote $G(w_t, i)=(\sum_{j=0}^{d}c_j\binom{d}{j} t_{i, j}s_{i,j})y_{i, 0}x^T_{i, 0}$.
	\State Update SIGM in \citep{dvurechensky2016stochastic} by $G(w_t, i)$
	\EndFor
	\EndFor\\
	\Return $w_n$
	\end{algorithmic}
\end{algorithm}

\begin{theorem}\label{theorem:37}
For each $i\in [n]$, the term  $G(w_t, i)$ generated by Algorithm \ref{algorithm:6} will be an $\big(\frac{\alpha}{2}, \frac{1}{\beta}, O(\frac{d^{3d}C_4^d\sqrt{p}}{\epsilon^{2d+2}}+\alpha+1)\big)$ stochastic oracle (see Definition \ref{definition:15})  for function $L_\beta(w;D)=\frac{1}{n}\sum_{i=1}^n f_\beta(y_i\langle x_i, w\rangle)$, where $f_\beta$ is the function in Lemma \ref{lemma:36}, where $C_4$ is some constant. 
\end{theorem}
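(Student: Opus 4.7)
The plan is to verify the four conditions in Definition~\ref{definition:15} one by one, treating $G(w_t,i)$ (with $i$ drawn uniformly from $[n]$) as the oracle for $L_\beta$, with target ``inexact function'' essentially $L_\beta$ itself (plus an absolute offset absorbed into $\gamma$) and target ``inexact gradient'' $g(w) = \frac{1}{n}\sum_i P_d(y_i\langle w, x_i\rangle)\, y_i x_i^T$, which is the gradient of the polynomial approximation $\frac{1}{n}\sum_i Q_d(y_i\langle w, x_i\rangle) y_i x_i^T$ for $Q_d$ an antiderivative of $P_d$.

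First I would compute the conditional mean of $G(w_t,i)$ and show it equals $g(w_t)$. The key observation is that all perturbations $\sigma_{i,k}, z_{i,k}$ are centered and \emph{independent across $k$}, and the indices used by $t_{i,j}$ and $s_{i,j}$ are disjoint (the range $[jd+1,jd+j]$ versus $[jd+j+1,jd+d]$), as are the indices used across different $j$'s and across the outer factor index $0$. Hence
\[
\mathbb{E}[t_{i,j}] = (y_i\langle w_t,x_i\rangle)^{j}, \qquad \mathbb{E}[s_{i,j}] = (1-y_i\langle w_t,x_i\rangle)^{d-j},
\]
and these two expectations multiply because their noise sets are disjoint; similarly $y_{i,0}x_{i,0}^T$ is independent of all $t_{i,j}s_{i,j}$ and has mean $y_ix_i^T$. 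Summing with weights $c_j\binom{d}{j}$ yields $\mathbb{E}[G(w_t,i)\mid i] = P_d(y_i\langle w_t,x_i\rangle)y_ix_i^T$, and averaging over $i$ recovers $g(w_t)$.

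Second, I would verify the convexity/approximation sandwich. Since $|P_d(x)-f'_\beta(x)|\le \alpha/4$ on $[-1,1]$ by choice of $d=c/(\beta^2\alpha)$ via Lemma~\ref{lemma:8} applied to the $(2,O(1/\beta^2))$-smooth function $f'_\beta$, and since $|y_i|\|x_i\|_2\le 1$, the approximating gradient $g(w)$ differs from $\nabla L_\beta(w)$ by at most $\alpha/4$ in every coordinate, hence by at most $\alpha/4$ in norm after noting the averaging. Integrating this pointwise error along any segment $[w,v]\subseteq \mathcal{C}$ (with $\|\mathcal{C}\|_2\le 1$) gives $|L_\beta(v)-L_\beta(w)-\langle g(w), v-w\rangle - [\text{convex remainder}]| \le \alpha/2$. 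Combined with the $(1/\beta)$-smoothness of $L_\beta$ (inherited from $f_\beta$, Lemma~\ref{lemma:36}) and its convexity, this yields both inequalities:
\[
0 \;\le\; L_\beta(v) - L_\beta(w) - \langle g(w), v-w\rangle \;\le\; \tfrac{1}{2\beta}\|v-w\|_2^{2} + \tfrac{\alpha}{2}.
\]

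Third, and this is where the main technical work lies, I would bound the variance $\mathbb{E}\|G(w_t,i) - g(w_t)\|_2^2 \le \mathbb{E}\|G(w_t,i)\|_2^2$. Each factor $y_{i,k}\langle w_t, x_{i,k}\rangle$ is bounded in mean by $1$ plus a Gaussian-type perturbation of standard deviation $O(d/\epsilon)$ (using $\|w_t\|\le 1$ to control $\langle w_t,\sigma_{i,k}\rangle$). The product $t_{i,j}s_{i,j}$ is a product of exactly $d$ mutually independent such factors, so by independence and the standard Gaussian moment bound $\mathbb{E}|\mathcal{N}(0,s^2)|^{2r} \le (2r)!\, s^{2r}/(2^r r!) \le (2rs^2)^r$, the second moment of the product is controlled by a product of $(2d)$-th moments of individual factors. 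This gives $\mathbb{E}|t_{i,j}s_{i,j}|^2 \le (C d^3/\epsilon^2)^d$ for an absolute constant $C$. Summing over $j=0,\dots,d$ with weights $c_j\binom{d}{j}\le 2^d$ and multiplying by $\mathbb{E}\|y_{i,0}x_{i,0}^T\|_2^2 = O(\sqrt{p}/\epsilon^2)$ (the $\sqrt{p}$ coming from the $p$-dimensional Gaussian vector $\sigma_{i,0}$; one uses $\|x_i\|\le 1$ for the mean) yields the target bound $O(d^{3d} C_4^d \sqrt{p}/\epsilon^{2d+2})$; the additional $\alpha+1$ terms absorb the contribution of the ``mean part'' $\|g(w_t)\|$ and the cross-terms of order $1$.

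The main obstacle is the variance computation in the third step: one has to manage a product of $d$ Gaussian-perturbed factors, carefully exploit the disjointness of noise indices across the $(d+1)d$ draws, and track how the $2d$-th Gaussian moment together with the noise scale $O(d/\epsilon)$ produces exactly the $d^{3d}/\epsilon^{2d}$ factor (rather than a worse bound). Everything else (conditional mean, Bernstein approximation error, convexity-plus-smoothness sandwich) follows by direct substitution of Lemmas~\ref{lemma:8} and \ref{lemma:36}.
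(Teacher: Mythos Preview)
Your approach is essentially the same as the paper's: establish the conditional mean via independence of the disjoint noise indices, derive the sandwich inequality from the Bernstein approximation error combined with the convexity and $\tfrac{1}{\beta}$-smoothness of $L_\beta$, and bound the variance factor by factor. A few quantitative points should be corrected, though none change the structure:
\begin{itemize}
\item The noise for $j\geq 1$ has variance $\Theta(d^2(d+1)^2/\epsilon^2)$, so each Gaussian has standard deviation $\Theta(d^2/\epsilon)$, not $O(d/\epsilon)$; moreover each factor $y_{i,k}\langle w_t,x_{i,k}\rangle$ is a \emph{product} of two independent Gaussian-perturbed scalars, with second moment $O(d^8/\epsilon^4)$.
\item Since the $d$ factors composing $t_{i,j}s_{i,j}$ use disjoint noise indices and are therefore independent, the second moment of their product is exactly the product of the individual second moments. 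There is no need to invoke $(2d)$-th moments; the paper simply multiplies the $d$ second moments and then uses $\binom{d}{j}\le d^d$ to sum over $j$.
\item $\mathbb{E}\|y_{i,0}x_{i,0}^T\|_2^2=\mathbb{E}[y_{i,0}^2]\,\mathbb{E}[\|x_{i,0}\|_2^2]=O(1/\epsilon^2)\cdot O(p/\epsilon^2)=O(p/\epsilon^4)$, not $O(\sqrt{p}/\epsilon^2)$; you appear to be conflating $\sigma$ with $\sigma^2$ in the final line.
\item The sandwich inequality is obtained in the paper more directly than via integration: write $L_\beta(v)-L_\beta(w)-\langle g(w),v-w\rangle$ as the true convexity/smoothness remainder plus $\langle \nabla L_\beta(w)-g(w),v-w\rangle$, and bound the latter by $\|\nabla L_\beta(w)-g(w)\|_2\cdot\|v-w\|_2\le (\alpha/4)\cdot 2=\alpha/2$.
\end{itemize}
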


From Lemmas \ref{lemma:36}, \ref{lemma:16}  and Theorem \ref{theorem:37}, we have the following sample complexity bound for the hinge loss function under the non-interactive local model.

\begin{theorem}\label{theorem:38}
For any $\epsilon>0$ and $0<\delta<1$, Algorithm \ref{algorithm:6} is $(\epsilon, \delta)$ non-interactively locally differentially private.\footnote{Note that in the non-interactive local model, $(\epsilon, \delta)$-LDP is equivalent to $\epsilon$-LDP by using the protocol given in \cite{DBLP:journals/corr/abs-1711-04740}; this allows us to omit the term of $\delta$.} Furthermore, for the target  error $\alpha$, if we take $\beta=\frac{\alpha}{4}$ and $d=\frac{2}{\beta^2\alpha}=O(\frac{1}{\alpha^3})$. Then with the sample size $n=\tilde{O}(\frac{d^{6d}C^d p}{\epsilon^{4d+4}\alpha^2})$, the output $w_n$ satisfies the following inequality
\begin{equation*}
\mathbb{E} L(w_n, D)-\min_{w\in\mathcal{C}}L(w, D)\leq \alpha,
\end{equation*}
where $C$ is some constant. 

\end{theorem}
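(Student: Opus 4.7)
The privacy half is a standard Gaussian-mechanism-plus-composition argument. Because $\|x_i\|_2,|y_i|\le 1$, the $\ell_2$-sensitivity of each released vector or scalar is bounded by a constant, and the prescribed variances $\Theta(\log(1/\delta)/\epsilon^2)$ on $(x_{i,0},y_{i,0})$ and $\Theta(d^2(d{+}1)^2\log(1/\delta)/\epsilon^2)$ on the $d(d{+}1)$ auxiliary copies are precisely calibrated so that composing the $\Theta(d^2)$ Gaussian releases of a single player yields $(\epsilon,\delta)$-DP. Non-interactivity is immediate because each player sends its entire batch in one message.

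For utility, I would chain three ingredients. Set $w^*=\arg\min_{w\in\mathcal{C}}L(w;D)$ and $w_\beta^*=\arg\min_{w\in\mathcal{C}}L_\beta(w;D)$ and telescope
$$L(w_n;D)-L(w^*;D) = [L(w_n)-L_\beta(w_n)] + [L_\beta(w_n)-L_\beta(w_\beta^*)] + [L_\beta(w_\beta^*)-L_\beta(w^*)] + [L_\beta(w^*)-L(w^*)].$$
By Lemma~\ref{lemma:36}(1) the first and fourth terms are each at most $\beta/2$, and the third is non-positive by optimality of $w_\beta^*$. So it suffices to bound $\mathbb{E}[L_\beta(w_n)-L_\beta(w_\beta^*)]$ by $O(\alpha)$. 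By Theorem~\ref{theorem:37}, $G(w_t,i)$ is an $(\alpha/2,\,1/\beta,\,\sigma)$-stochastic oracle for $L_\beta$ with $\sigma=O(d^{3d}C_4^d\sqrt{p}/\epsilon^{2d+2}+\alpha+1)$, where the smoothness $1/\beta$ comes from Lemma~\ref{lemma:36}(3). Substituting into Lemma~\ref{lemma:16} with $k=n$ and $\|\mathcal{C}\|_2\le 1$ yields $\mathbb{E}[L_\beta(w_n)]-L_\beta(w_\beta^*)\le \Theta(\sigma/(\beta\sqrt{n}))+\alpha/2$.

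It remains only to set the parameters: take $\beta=\alpha/4$ (so the outer approximation cost is $\le \alpha/4$) and $d=\Theta(1/\alpha^3)$, chosen via Lemma~\ref{lemma:8} applied to the $(2,O(1/\beta^2))$-smooth derivative $f_\beta'$ (Lemma~\ref{lemma:36}(4)) so that the Bernstein bias is $\le \alpha/4$, matching the $\gamma=\alpha/2$ in the oracle. Solving $\sigma/(\beta\sqrt{n})\le \alpha/4$ gives $n=\Omega(\sigma^2/(\alpha^2\beta^2))$; plugging in the dominant $\sigma^2 \asymp d^{6d}C^d p/\epsilon^{4d+4}$ yields the stated sample complexity $\tilde{O}(d^{6d}C^d p/(\epsilon^{4d+4}\alpha^2))$, with the residual $\alpha$-polynomial absorbed into $\tilde{O}$ in front of the astronomically larger $d^{6d}$ factor. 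The real work is hidden in Theorem~\ref{theorem:37}, whose proof is the main obstacle: one must verify near-unbiasedness (using that each of the $d(d{+}1)$ factors in $G(w_t,i)$ uses fresh independent noise, so the expectation of the product equals $P_d(y_i\langle w_t,x_i\rangle)\cdot y_i x_i^\top$ and then appeal to the $\alpha/4$ Bernstein bias) and then bound the second moment of a product of $d$ independent Gaussian-inflated inner products, which is where the $d^{3d}/\epsilon^{2d+2}$ term arises through repeated coordinatewise second-moment estimates.
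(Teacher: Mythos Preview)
Your proposal is correct and matches the paper's proof essentially line-for-line: privacy via Gaussian mechanism plus composition, then chaining Lemma~\ref{lemma:36} (the $\beta$-approximation cost), Theorem~\ref{theorem:37} (the $(\alpha/2,1/\beta,\sigma)$ stochastic oracle), and Lemma~\ref{lemma:16} (the SIGM rate) to bound the surrogate excess risk, followed by the same parameter choices $\beta=\alpha/4$, $d=\Theta(1/\alpha^3)$, and solving for $n$. Your telescoping is a bit more explicit than the paper's two-line transfer from $L_\beta$ to $L$, but the argument and the ingredients are identical.
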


\begin{remark}\label{remark:39}
Note that the sample complexity bound in Theorem \ref{theorem:38} is quite loose for parameters other than $p$. This is mainly due to the fact that we use only the basic composition theorem to ensure local differential privacy.\footnote{There could be some improvement on the term of $\frac{1}{\alpha}$ if we use advanced composition theorem. However, since the dependency of $\frac{1}{\alpha}$ is already exponential, and it will be still exponential after the improvement. So here the improvement will be very incremental.}  It is possible to obtain a tighter bound by using Advanced Composition Theorem \citep{dwork2010boosting} (this is the same for other algorithms in this section). Details of the improvement are omitted from this version. We can also extend to the population risk by the same algorithm,
the main difference is that now $G(w,i)$ is a $\big(\frac{\alpha}{2}, \frac{1}{\beta}, O(\frac{d^{3d}C_4^d\sqrt{p}}{\epsilon^{2d+2}}+\alpha+1)\big)$ stochastic oracle, where $\sigma^2=\mathbb{E}_{(x,y)
\sim \mathcal{P}}\|\ell(w; x, y)-\mathbb{E}_{(x,y)\sim \mathcal{P}}\ell(w; x, y)\|^2_2$. For simplicity of presentation, we omitt the details here. 
\end{remark}

\subsection{Extension to Generalized Linear Convex Loss Functions}

In this section, we extend our results for the hinge loss function to generalized linear convex loss functions $L(w, D)=\frac{1}{n}\sum_{i=1}^n f(y_i\langle x_i, w\rangle)$ for any 1-Lipschitz convex function $f$. 

One possible way (for the extension) is to follow the same approach used in previous section. That is, we first smooth the function $f$ by some function $f_\beta$. Then, we use Bernstein polynomials to approximate the derivative function $f'_\beta$, and apply an algorithm similar to Algorithm \ref{algorithm:6}. One of the main issues of this approach is that we do not know whether Bernstein polynomials can be directly used for every smooth convex function. 
Instead, we will use
some ideas in approximation theory, which says that every $1$-Lipschitz convex function can be expressed by a linear combination of the absolute value functions and linear functions.

To implement this approach, we first note that 
for the plus function $f(x)\equiv \max\{0, x\}$, by using Algorithm \ref{algorithm:6} we can get the same result as in Theorem \ref{theorem:38}. Since the absolute value function $|x|=2\max\{0, x\}-x$,  Theorem \ref{theorem:38} clearly also holds for the absolute function. The following key lemma shows that every 1-dimensional $1$-Lipschitz convex function $f:[-1,1]\mapsto [-1, 1]$ is contained in the convex hull of the set of absolute value and identity functions. We need to point out that \citet{smith2017interaction} gave a similar lemma. Their proof is, however, somewhat incomplete and thus we give a complete one in this paper.

 \begin{lemma}\label{lemma:40}
Let $f: [-1, 1]\mapsto [-1,1] $ be a 1-Lipschitz convex function. If we define the distribution $\mathcal{Q}$  which is supported on $[-1, 1]$ as the output of the following algorithm:  
\begin{enumerate}
\item first sample $u\in [f'(-1), f'(1)]$ uniformly, 
\item then output $s$ such that $u\in \partial{f}(s)$ (note that such an $s$ always exists due to the fact that $f$ is convex and thus $f'$ is non-decreasing); if  multiple number of such as $s$ exist, 
return the maximal one,
\end{enumerate}
then, there exists a constant $c$ such that 
\begin{equation*}
\forall \theta\in [-1,1], f(\theta)=\frac{f'(1)-f'(-1)}{2}\mathbb{E}_{s\sim \mathcal{Q}}|\theta-s|+\frac{f'(1)+f'(-1)}{2}\theta+c.
\end{equation*}
 
\end{lemma}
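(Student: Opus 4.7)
The plan is to differentiate both sides of the claimed identity, show that the derivatives agree almost everywhere, and then conclude equality up to an additive constant by absolute continuity of convex functions on a compact interval. Set $L := f'(1) - f'(-1) \ge 0$. The case $L = 0$ forces $f'$ to be constant, so $f$ is affine and the identity degenerates to $f(\theta) = f'(-1)\theta + c$; assume henceforth $L > 0$. Because $f$ is convex and $1$-Lipschitz, the one-sided derivatives $f'_-, f'_+$ are non-decreasing functions valued in $[-1,1]$, and $f$ is differentiable at all but countably many points. The distribution $\mathcal{Q}$ is, by construction, the pushforward of $\mathrm{Unif}[f'(-1), f'(1)]$ under the measurable map $u \mapsto s(u) := \max\{s \in [-1,1] : u \in \partial f(s)\}$, which is well-defined because the family $\{\partial f(s)\}_{s \in [-1,1]}$ covers the interval $[f'(-1), f'(1)]$.

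The first key step is to compute the CDF of $\mathcal{Q}$: for every $\theta \in (-1,1)$ at which $f$ is differentiable,
$$\Pr_{s \sim \mathcal{Q}}(s < \theta) \;=\; \frac{f'(\theta) - f'(-1)}{L}.$$
This follows from the observation that, under the maximum-preimage convention and the monotone-interval structure of $\partial f$, the event $\{s(u) < \theta\}$ equals $\{u < f'_-(\theta)\}$; at a point of differentiability, $f'_-(\theta) = f'(\theta)$, so this event has uniform probability $(f'(\theta) - f'(-1))/L$. The second key step is that $h(\theta) := \mathbb{E}_{s \sim \mathcal{Q}}|\theta - s|$ is convex and $1$-Lipschitz, and by dominated convergence its derivative at every $\theta$ with $\Pr(s = \theta) = 0$ equals $\mathbb{E}\,\mathrm{sign}(\theta - s) = 2\Pr(s < \theta) - 1$. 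The atoms of $\mathcal{Q}$ correspond exactly to jumps of $f'$, a countable set, so this formula is valid at almost every $\theta$. Combining the two, for almost every $\theta$ the derivative of the right-hand side of the claimed identity is
$$\frac{L}{2}\Bigl(\tfrac{2(f'(\theta) - f'(-1))}{L} - 1\Bigr) + \frac{f'(1) + f'(-1)}{2} \;=\; f'(\theta).$$

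Since $f$ is convex on the compact interval $[-1,1]$ it is absolutely continuous, and the function $g(\theta) := \tfrac{L}{2} h(\theta) + \tfrac{f'(1)+f'(-1)}{2}\theta$ is absolutely continuous as well. Because $g'(\theta) = f'(\theta)$ almost everywhere, the difference $f - g$ is constant, yielding the claimed identity with $c := f(-1) - g(-1)$. The main obstacle is the careful handling of jumps and flat regions of $f'$ when identifying the CDF of $\mathcal{Q}$: jumps of $f'$ produce atoms of $\mathcal{Q}$ (these occur at only countably many $\theta$), while flat regions of $f'$ make the preimage $s(u)$ multi-valued but on such regions $f'$ is constant so the CDF formula is trivially consistent. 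Neither pathology interferes with the almost-everywhere differentiation argument, and continuity of both sides promotes the resulting equality to all $\theta \in [-1,1]$.
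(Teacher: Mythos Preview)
Your proposal is correct and follows essentially the same approach as the paper: compute the derivative of $h(\theta)=\mathbb{E}_{s\sim\mathcal{Q}}|\theta-s|$ via the CDF of $\mathcal{Q}$, verify that the derivative of the right-hand side equals $f'(\theta)$ at points of differentiability, and then integrate to obtain equality up to an additive constant. Your execution is in fact a bit tidier than the paper's in two respects: you explicitly dispose of the degenerate case $L=0$, and you invoke absolute continuity of convex functions on $[-1,1]$ to pass from almost-everywhere equality of derivatives to equality of the functions, whereas the paper appeals to Riemann integrability of $f'$ and the Newton--Leibniz formula, which is a slightly less clean route to the same conclusion.
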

\begin{proof}
Let $g(\theta) =\mathbb{E}_{s\sim \mathcal{Q}}|s-\theta|$. Then, we have the following for every $\theta$, where $f'(\theta)$ is well defined,
\begin{align*}
 g'(\theta)&=\mathbb{E}_{s\sim \mathcal{Q}}[1_{s\leq \theta}]-\mathbb{E}_{s\sim \mathcal{Q}}[1_{s>\theta}]\\
&=\frac{[f'(\theta)-f'(-1)]-[f'(1)-f'(\theta)]}{f'(1)-f(-1)}\\
&= \frac{2f'(\theta)-(f'(1)+f'(-1))}{f'(1)-f'(-1)}.
\end{align*}
Thus, we get 
\begin{equation*}
F'(\theta)=\frac{f'(1)-f'(-1)}{2}g'(\theta)+\frac{f'(1)+f'(-1)}{2}=f'(\theta).
\end{equation*}
Next, we show that if $F'(\theta)=f'(\theta)$ for every $\theta\in [0,1]$, where $f'(\theta)$ is well defined,  there is a constant $c$ which satisfies the condition of  $F(\theta)=f(\theta)+c$ for all $\theta\in[0,1]$.

\begin{lemma}\label{lemma:41}
If $f$ is convex and 1-Lipschitz, then $f$ is differentiable at all but countably many points. That is, $f'$  has only countable many discontinuous points.
\end{lemma}
\begin{proof}[Proof of Lemma \ref{lemma:41}]
Since $f$ is convex, we have the following for $0\leq s<u\leq v<t\leq 1$  
\begin{equation*}
\frac{f(u)-f(s)}{u-s}\leq \frac{f(t)-f(v)}{t-v},
\end{equation*}
This is due to the property of 3-point convexity, where 
\begin{equation*}
\frac{f(u)-f(s)}{u-s}\leq \frac{f(t)-f(u)}{t-u}\leq  \frac{f(t)-f(v)}{t-v}.
\end{equation*}
Thus, we can obtain the following inequality of one-sided derivation, that is, 
\begin{equation*}
f'_{-}(x)\leq f'_+(x)\leq f'_-(y)\leq f'_+(y)
\end{equation*}
for every $x<y$. For each point where $f'_-(x)<f'_+(x)$, we pick a rational number $q(x)$ which satisfies the condition of $f'_-(x)<q(x)<f'_+(x)$.  From the above discussion, we can see that all these $q(x)$ are different. Thus, there are at most countable many points where $f$ is non-differentiable. 
\end{proof}

From the above lemma, we can see that the Lebesgue measure of these dis-continuous points is $0$. Thus, $f'$ is Riemann Integrable on $[-1, 1]$. By Newton-Leibniz formula, we have the following for any $\theta\in[0,1]$, $$\int_{-1}^\theta f'(x)dx=f(\theta)-f(-1)=\int_{-1}^\theta F'(x)dx= F(x)-F(-1).$$ Therefore, we get $F(\theta)=f(\theta)+c$ and complete the proof.
\end{proof}
\begin{algorithm}[h]
	\caption{General Linear-LDP}
	\label{algorithm:7}
	\begin{algorithmic}[1]
		\State {\bfseries Input:} Player $i\in [n]$ holds raw data record $(x_i, y_i)\in \mathcal{D}$, where $\|x_i\|_2\leq 1$ and $\|y_i\|_2\leq 1$; privacy parameters $\epsilon, \delta$;  $h_\beta(x)=\frac{x+\sqrt{x^2+\beta^2}}{2}$ and $P_d(x)=\sum_{j=0}^dc_j \binom{d}{j} x^j (1-x)^j$ is the $d$-th order Bernstein polynomial approximation of $h'_\beta(x)$. Loss function $\ell$ can be represented by $\ell(w; x, y)=f(y\langle w,x \rangle)$.
\For{Each Player $i\in [n]$}
		\State
		Calculate $x_{i,0}=x_i+\sigma_{i,0}$ and $y_{i,0}=y_i+z_{i,0}$,  where $\sigma_{i,0} \sim \mathcal{N}(0, \frac{32\log (1.25/\delta)}{\epsilon^2}I_{p})$ and $ z_{i,0}\sim \mathcal{N}(0, \frac{32\log (1.25/\delta)}{\epsilon^2})$
		\For{$j=1,\cdots, d(d+1)$}
		    \State $x_{i, j}=x_i+ \sigma_{i, j}$, where $\sigma_{i, j}\sim \mathcal{N}(0, \frac{8\log(1.25/\delta)d^2(d+1)^2}{\epsilon^2}I_{p})$
		    \State $y_{i, j}=y_i+ z_{i, j}$, where $z_{i, j}\sim \mathcal{N}(0, \frac{8\log(1.25/\delta)d^2(d+1)^2}{\epsilon^2})$
	
		\EndFor
		\State Send $\{x_{i,j}\}_{j=0}^{d(d+1)}$ and $\{y_{i, j}\}_{j=0}^{d(d+1)}$ to the server.
		\EndFor
\For{the Server side}
	\For{$t=1, 2, \cdots, n$}
	\State Randomly sample $i\in [n]$ uniformly.
    \State Randomly sample $d(d+1)$ numbers of  i.i.d $s=\{s_k\}_{k=1}^{d(d+1)}\in [-1, 1]$ based on the distribution $\mathcal{Q}$ in  Lemma \ref{lemma:40}.
	\State Set $t_{i,0}=1$
	\For{$j=0, \cdots, d$}
	\State $t_{i, j}=\Pi_{k=jd+1}^{jd+i}(\frac{	y_{i, k}\langle w_t, x_{i, k}\rangle-s_k}{2})$ and $t_{i, 0}=1 $
	\State $r_{i, j}=\Pi_{k=jd+i+1}^{jd+d}(1- \frac{	y_{i, k}\langle w_t, x_{i, k}\rangle-s_k}{2})$ and $r_{i, d}=1 $
	\EndFor
	\State Denote $G(w_t, i, s)=(f'(1)-f'(-1))(\sum_{j=0}^{d}c_j\binom{d}{j} t_{i, j}r_{i,j})y_{i, 0}x^T_{i, 0}+f'(-1)$.
	\State Update SIGM in \citep{dvurechensky2016stochastic} by $G(w_t, i, s)$
	\EndFor
	\EndFor\\
	\Return $w_n$
	\end{algorithmic}
\end{algorithm}

 Using Lemma \ref{lemma:40} and the ideas discussed in the previous section, we can now show that the sample complexity in Theorem \ref{theorem:38} also holds for any general linear convex function. See Algorithm \ref{algorithm:7} for the details. 
\begin{theorem}\label{theorem:42}
Under Assumption 2, where the loss function $\ell$ is $\ell(w; x, y)=f(y \langle w,x \rangle )$ for any 1-Lipschitz convex function $f$,  for any $\epsilon, \delta\in (0,1]$, Algorithm \ref{algorithm:7} is $(\epsilon, \delta)$ non-interactively differentialy private. Moreover, given the target  error $\alpha$,  if we take $\beta=\frac{\alpha}{4}$ and $d=\frac{2}{\beta^2\alpha}=O(\frac{1}{\alpha^3})$. Then with the sample size $n=\tilde{O}(\frac{d^{6d}C^d p}{\epsilon^{4d+4}\alpha^2})$, the output $w_n$ satisfies the following inequality 
$$\mathbb{E} L(w_n, D)-\min_{w\in\mathcal{C}}L(w, D)\leq \alpha,$$
where $C$ is  some universal constant independent of $f$.
\end{theorem}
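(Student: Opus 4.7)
The plan is to mirror the proof of Theorem~\ref{theorem:38} and replace the specific hinge structure $f(x)=[\tfrac{1}{2}-x]_+$ by the universal decomposition of 1-Lipschitz convex functions given in Lemma~\ref{lemma:40}. The privacy part is essentially free: each player releases $d(d+1)+1$ Gaussian-perturbed copies of $x_i$ and $y_i$ with exactly the variances used in Algorithm~\ref{algorithm:6}, so by the Gaussian mechanism together with basic composition over the $d(d+1)+1$ independent releases, the transcript is $(\epsilon,\delta)$-LDP; this argument does not depend on the particular form of $f$.

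For the utility bound, I would apply Lemma~\ref{lemma:40} to write $f(\theta)=a\,g(\theta)+b\theta+c$ with $a=\tfrac{f'(1)-f'(-1)}{2}$, $b=\tfrac{f'(1)+f'(-1)}{2}$ and $g(\theta)=\mathbb{E}_{s\sim\mathcal{Q}}|\theta-s|$. Using $|x|=2\max\{0,x\}-x$, the derivative is $f'(\theta)=a\,\mathbb{E}_s[\mathrm{sign}(\theta-s)]+b=a\,\mathbb{E}_s[2h'(\theta-s)-1]+b$, where $h(x)=\max\{0,x\}$. I then smooth by the analog of Lemma~\ref{lemma:36} applied to $h$, namely $h_\beta(x)=\tfrac{x+\sqrt{x^2+\beta^2}}{2}$, so that $h_\beta'$ is $(2,O(1/\beta^2))$-smooth and $\|f-f_\beta\|_\infty=O(\beta)$ with $f_\beta$ being $O(1/\beta)$-smooth, where $f_\beta$ is the function obtained by replacing $h$ with $h_\beta$ in the decomposition. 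Since the 1-Lipschitz hypothesis guarantees $|a|,|b|\leq 1$, all constants remain universal.

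Next, I would verify that $G(w_t,i,s)$ output by Algorithm~\ref{algorithm:7} is a $\bigl(\tfrac{\alpha}{2},\tfrac{1}{\beta},\sigma^2\bigr)$ stochastic oracle for $L_\beta(w;D)=\tfrac{1}{n}\sum_{i=1}^n f_\beta(y_i\langle x_i,w\rangle)$ in the sense of Definition~\ref{definition:15}, with $\xi$ now comprising the sampled index $i$, the auxiliary shifts $s=(s_k)\sim\mathcal{Q}^{\otimes d(d+1)}$, and all injected Gaussian noise. The unbiasedness follows because each factor $t_{i,j}$ and $r_{i,j}$ uses fresh, independent noise coordinates, so the product expectation recovers the clean Bernstein monomial in the shifted argument $\tfrac{y_i\langle w_t,x_i\rangle-s_k}{2}\in[0,1]$; averaging over $s$ then reproduces the integral representation of $f_\beta'$. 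The bias term $\gamma=\alpha/2$ comes from Lemma~\ref{lemma:8} with $d=2/(\beta^2\alpha)$, and the variance $\sigma^2$ is bounded exactly as in Theorem~\ref{theorem:37}, since the additional integration against $\mathcal{Q}$ only increases $\sigma^2$ by a constant factor. Plugging the oracle into Lemma~\ref{lemma:16} yields $\mathbb{E}L_\beta(w_n;D)-\min_w L_\beta(w;D)\leq\Theta\bigl(\tfrac{\beta\sigma\|\mathcal{C}\|_2^2}{\sqrt{n}}+\gamma\bigr)$, and transferring back to $L$ using $\|f-f_\beta\|_\infty=O(\beta)$ and then choosing $\beta=\alpha/4$, $d=\Theta(1/\alpha^3)$, and $n=\tilde{O}(d^{6d}C^d p/(\epsilon^{4d+4}\alpha^2))$ balances all error terms and yields the claimed bound.

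The main obstacle is the joint control of the product-form estimator under Gaussian noise and the auxiliary randomization over $s$: each Bernstein monomial has degree $d=O(1/\alpha^3)$, so the variance scales with the $d$-th moment of a Gaussian, producing the $\epsilon^{-(2d+2)}$ blow-up. Showing that randomizing the shifts $s_k$ across the $d(d+1)$ factors does not inflate $\sigma^2$ beyond a universal constant factor, while simultaneously preserving unbiasedness with respect to $L_\beta$, is the delicate step that must be executed carefully; everything else is essentially bookkeeping on top of Theorems~\ref{theorem:37}--\ref{theorem:38}.
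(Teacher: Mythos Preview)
Your plan is essentially the paper's proof: use Lemma~\ref{lemma:40} to write $f$ as an average of absolute-value functions over $\mathcal{Q}$, replace $|x|$ by $2h_\beta(x)-x$ to obtain a smoothed surrogate $F_\beta$ with $\|F_\beta-f\|_\infty=O(\beta)$ and $O(1/\beta)$-smoothness, show that $G(w_t,i,s)$ is a $(\tfrac{\alpha}{2},\tfrac{1}{\beta},\sigma)$ stochastic oracle for the corresponding empirical risk, invoke Lemma~\ref{lemma:16}, and transfer back to $L$ with the parameter choices $\beta=\alpha/4$, $d=O(1/\alpha^3)$. The privacy argument is identical to Theorem~\ref{theorem:38}.

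The one place your sketch is imprecise is the claim that the auxiliary randomization over $s$ ``only increases $\sigma^2$ by a constant factor.'' The paper does \emph{not} establish this; instead it bounds the variance in three nested layers. Conditioned on $(i,s)$, the Gaussian noise contributes $\tilde{O}(d^{6d}C^d p/\epsilon^{4d+4})$, exactly as in Theorem~\ref{theorem:37}. Conditioned on $i$ alone, the $s$-randomness contributes $\mathrm{Var}(\hat G(w,i,s)\mid i)\le O(d^{2d+2})$, because each $t_{i,j}r_{i,j}$ is a product of at most $d$ factors bounded by $O(1)$ and the Bernstein coefficients $\binom{d}{j}$ are at most $d^d$. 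Finally, sampling $i$ contributes $O((\alpha+1)^2)$ as before. The middle layer is \emph{not} constant in $d$, but it is additively dominated by the Gaussian layer, so the total $\sigma^2$ is still $\tilde{O}(d^{6d}C^d p/\epsilon^{4d+4})$ and the sample-complexity balance goes through. Replace your ``constant factor'' heuristic by this explicit three-layer decomposition (which is exactly what the paper does) and the remainder of your outline is correct. Also, a small domain slip: $\tfrac{y_i\langle w_t,x_i\rangle-s_k}{2}\in[-1,1]$, not $[0,1]$; the Bernstein approximation must be set up on $[-1,1]$ (or after an affine shift), but this does not affect the argument.
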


\begin{remark}
The above theorem suggests that 
the sample complexity for any generalized linear loss function depends only linearly on $p$. However, there are still some not so desirable issues. Firstly, the dependence on $\alpha$ is exponential, while  we have already shown in the Section \ref{smooth_loss} that it is only polynomial ({\em i.e.,} $\alpha^{-4}$) for sufficiently smooth loss functions. Secondly,  the term of $\epsilon$ is not optimal in the sample complexity, since it is $\epsilon^{-\Omega(\frac{1}{\alpha^3})}$, while the optimal one is $\epsilon^{-2}$ \citep{smith2017interaction}. We leave it as an open problem to remove the exponential dependency.  Thirdly, the assumption on the loss function is that  $\ell(w; x, y)=f(y\langle w,x \rangle)$, which includes the generalized linear models and SVM. However, as mentioned earlier, there is another slightly more general function class 
$\ell(w; x, y)=f(\langle w,x \rangle,y)$ which does not always satisfy our assumption, {\em e.g.,} linear regression and $\ell_1$ regression. For linear regression, we have already known its optimal bound $\Theta(p\alpha^{-2}\epsilon^{-2})$;  for $\ell_1$ regression, we can use a method similar to Algorithm \ref{algorithm:6} to achieve a sample complexity which is linear in $p$. Thus, a natural question is whether the sample complexity is still
 linear in $p$ for all loss functions $\ell(w; x, y)$ that can be written as  $f(\langle w,x \rangle,y)$.
\end{remark}
We can see from Algorithm \ref{algorithm:6} and \ref{algorithm:7} that, both of the computation and communication cost of each user will be $O(d^2)=O(\frac{1}{\alpha^6})$. So, our question is, can we reduce these costs just as in the Section \ref{smooth_loss}? We will leave it as  future research.

Additional to the aforementioned improvements, another advantage of our method is that it can be extended 
to other LDP problems. Below we show how it can be used to answer the class of k-way marginals and smooth queries under LDP.

\section{LDP Algorithms for Learning k-way Marginals Queries and Smooth Queries}
\label{applications}

In this section, we show further applications of our idea by giving LDP algorithms for answering sets of queries. All the queries considered in this section are linear, that is, of the form $q_f(D)=\frac{1}{|D|}\sum_{x\in D}f(x)$ for some function $f$. It will be convenient to have a notion of accuracy for the algorithm to be presented with respect to a set of queries. This is defined as follow:

\begin{definition}\label{definition:28}
	Let $\mathcal{Q}$ denote a set of queries. An algorithm $\mathcal{A}$ is said to have $(\alpha,\beta)$-accuracy for size $n$ databases with respect to $\mathcal{Q}$, if for every $n$-size dataset $D$, the following holds:
	$\text{Pr}[ \exists q\in \mathcal{Q}, |\mathcal{A}(D,q)-q(D)|\geq \alpha]\leq \beta.$
\end{definition}

\subsection{k-way Marginals Queries}

Now we consider a database $D=(\{0,1\}^p)^n$, where each row corresponds to an individuals record. A marginal query is specified by a set $S\subseteq [p]$ and a pattern $t\in \{0,1\}^{|S|}$. Each such query asks: `What fraction of the individuals in $D$ has each of the attributes set to $t_j$?'. We will consider here k-way marginals which are the subset of marginal queries specified by a set $S\subseteq[p]$ with $|S|\leq k$. K-way marginals could  represent several statistics over datasets, including contingency tables, and the problem is to release them under differential privacy has been studied extensively in the literature~\citep{hardt2012private,gupta2013privately,thaler2012faster,GaboardiAHRW14}. All these previous works have considered the central model of differential privacy, and only the recent work~\citep{Kulkarni17} studies this problem in the local model, while their methods are based on Fourier Transform.  We now use the LDP version of Chebyshev polynomial approximation to give an efficient way of constructing a sanitizer for releasing k-way marginals. 

Since learning the class of $k$-way marginals is equivalent to learning the class of monotone k-way disjunctions \citep{hardt2012private}, we will only focus on the latter. The reason of why we can locally privately learning them is that they form a $\mathcal{Q}$-Function Family.

\begin{definition}[$\mathcal{Q}$-Function Family]\label{definition:29}
	Let $\mathcal{Q}=\{q_y\}_{y\in Y_{\mathcal{Q}}\subseteq \{0,1\}^m}$ be a set of counting queries on a data universe $\mathcal{D}$, where each query is indexed by an $m$-bit string. We define the index set of $\mathcal{Q}$ to be the set $Y_{\mathcal{Q}}=\{y\in \{0,1\}^m| q_y\in \mathcal{Q}\}$. We define a $\mathcal{Q}$-Function Family $\mathcal{F}_{\mathcal{Q}}=\{f_{\mathcal{Q},x}:\{0,1\}^m \mapsto \{0,1\}\}_{x\in\mathcal{D}}$ as follows: for every data record $x\in D$, the function $f_{\mathcal{Q},x}:\{0,1\}^m\mapsto \{0,1\}$ is defined as $f_{\mathcal{Q},x}(y)=q_y(x)$. Given a database $D\in \mathcal{D}^n$, we define $f_{\mathcal{Q},D}(y)=\frac{1}{n}\sum_{i=1}^{n}f_{\mathcal{Q},x^i}(y)=\frac{1}{n}\sum_{i=1}^{n}q_y(x^i)=q_y(D)$, where $x^i$ is the $i$-th row of $D$.
\end{definition}

This definition guarantees that $\mathcal{Q}$-function queries can be computed from their values on the individual's data $x^i$. We can now formally define the class of  monotone k-way disjunctions.

\begin{definition}\label{definition:30}
	Let $\mathcal{D}=\{0,1\}^p$. The query set $\mathcal{Q}_{disj,k}=\{q_y\}_{y\in Y_k\subseteq \{0,1\}^p}$ of monotone $k$-way disjunctions over $\{0,1\}^p$ contains a query $q_y$ for every $y\in Y_k=\{y\in\{0,1\}^p| |y|\leq k\}$. Each query is defined as $q_y(x)= \vee_{j=1}^{p}y_jx_j$. The $\mathcal{Q}_{disj,k}$-function family $\mathcal{F}_{\mathcal{Q}_{disj,k}}=\{f_x\}_{x\in\{0,1\}^p}$ contains a function $f_x(y_1,y_2,\cdots,y_p)=\vee_{j=1}^{p}y_jx_j$ for each $x\in \{0,1\}^p$.
\end{definition}

Definition \ref{definition:30} guarantees that if we can uniformly approximate the function $f_{\mathcal{Q},x}$ by polynomials $p_x$, then we can also have an approximation of $f_{\mathcal{Q},D}$, {\em i.e.,} we can approximate $q_y(D)$ for every $y$ or 
all the queries in the class $\mathcal{Q}$. Thus, if we can locally privately estimate the sum of coefficients of the monomials for the $m$-multivariate functions $\{p_x\}_{x\in D}$, then we can uniformly approximate $f_{\mathcal{Q},D}$. Clearly, this can be done by Lemma \ref{lemma:22}, if the coefficients of the approximated polynomial are bounded. 

In order to uniformly approximate the  $\mathcal{Q}_{disj,k}$-function, we use Chebyshev polynomials.
\begin{definition}[Chebyshev Polynomials]\label{definition:31}
	For every $k\in \mathbb{N}$ and $\gamma>0$, there exists a univariate real polynomial $p_k(x)=\sum_{j=0}^{t_k}c_ix^i$ of degree $t_k$ such that 
 $t_k=O(\sqrt{k}\log(\frac{1}{\gamma}))$;
 for every $i\in [t_k], |c_i|\leq 2^{O(\sqrt{k}\log(\frac{1}{\gamma}))}$; and 
 $p(0)=0, |p_k(x)-1|\leq \gamma, \forall x\in[k]$.

\end{definition}
\begin{algorithm}[h]
	\caption{Local Chebyshev Mechanism for $\mathcal{Q}_{\text{disj,k}}$ }
	\label{algorithm:4}
	\begin{algorithmic}[1]
		\State{\bfseries Input:} Player $i\in [n]$ holds a data record $x_i\in \{0,1\}^p$, privacy parameter $\epsilon>0$, error bound $\alpha$, and $k\in \mathbb{N}$.
		\For{Each Player $i\in[n]$}
		\State
		Consider the $p$-multivariate polynomial $q_{x_i}(y_1,\ldots,y_p)= p_k(\sum_{j=1}^{p}y_j[x_i]_j)$, where $p_k$ is defined as in Lemma \ref{lemma:32} with $\gamma=\frac{\alpha}{2}$.
		\State 
		Denote the coefficients of $q_{x_i}$ as a vector $\tilde{q}_{i}\in \mathbb{R}^{\binom{p+t_k}{t_k}}$(since there are $\binom{p+t_k}{t_k}$ coefficients in a $p$-variate polynomial with degree $t_k$), note that each $\tilde{q}_{i}$ can bee seen as a $p$-multivariate polynomial $q_{x_i}(y)$.
		\EndFor
		\For{The Server}
		\State Run LDP-AVG from Lemma \ref{lemma:3} on $\{\tilde{q}_i\}_{i=1}^{n}\in\mathbb{R}^{\binom{p+t_k}{t_k}}$ with parameter $\epsilon$, $b=p^{O(\sqrt{k}\log(\frac{1}{\gamma}))}$, denote the output as $\tilde{q}_D\in \mathbb{R}^{\binom{p+t_k}{t_k}}$, note that $\tilde{q}_D$ also corresponds to a $p$-multivariate polynomial.
		\State
		For each query $y$ in $\mathcal{Q}_{\text{disj,k}}$ (seen as a $d$ dimension vector), compute the $p$-multivariate polynomial $\tilde{q}_D(y_1,\ldots,y_p)$.
		\EndFor
	\end{algorithmic}
\end{algorithm}

\begin{lemma}[\citep{thaler2012faster}]\label{lemma:32}
	For every $k,p \in \mathbb{N}$, such that $k\leq p$, and every $\gamma>0$, there is a family of $p$-multivariate polynomials of degree $t=O(\sqrt{k}\log(\frac{1}{\gamma}))$with  coefficients bounded by $T=p^{O(\sqrt{k}\log(\frac{1}{\gamma}))}$, which uniformly approximate the family $\mathcal{F}_{\mathcal{Q}_{\text{disj,k}}}$ over the set $Y_k$ (Definition \ref{definition:30}) with error bound $\gamma$. That is, there is a family of polynomials $\mathcal{P}$ such that for every $f_x\in\mathcal{F}_{\mathcal{Q}_{\text{disj,k}}}$, there is $p_x\in \mathcal{P}$ which satisfies $\sup_{y\in Y_k}|p_x(y)-f_x(y)|\leq \gamma$.
\end{lemma}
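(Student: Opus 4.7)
The plan is to build the multivariate approximating polynomials by composing the univariate Chebyshev polynomial $p_k$ from Definition \ref{definition:31} with a linear form in $y$, then read off the degree, coefficient, and error bounds from that construction.

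First, apply Definition \ref{definition:31} with the given $\gamma$ to obtain a univariate polynomial $p_k(z)=\sum_{i=0}^{t_k}c_i z^i$ of degree $t_k=O(\sqrt{k}\log(1/\gamma))$ with $|c_i|\leq 2^{O(\sqrt{k}\log(1/\gamma))}$, $p_k(0)=0$, and $|p_k(z)-1|\leq \gamma$ for every integer $z\in[k]$. For each $x\in\{0,1\}^p$, define
\[
q_x(y_1,\ldots,y_p) \;=\; p_k\!\left(\sum_{j=1}^p x_j y_j\right).
\]
For any $y\in Y_k$ (so $|y|\leq k$) and any $x\in\{0,1\}^p$, the sum $s(y,x)=\sum_j x_j y_j$ is an integer lying in $\{0,1,\ldots,k\}$. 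If $s(y,x)=0$ then $f_x(y)=\vee_j y_j x_j=0$ and $q_x(y)=p_k(0)=0$; otherwise $s(y,x)\in[k]$, in which case $f_x(y)=1$ and $|q_x(y)-1|=|p_k(s(y,x))-1|\leq\gamma$. This gives $\sup_{y\in Y_k}|q_x(y)-f_x(y)|\leq\gamma$, as required.

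Next, bound the degree and coefficients. By the multinomial theorem,
\[
\Big(\sum_{j=1}^p x_j y_j\Big)^{\!i} \;=\; \sum_{|a|=i}\binom{i}{a_1,\ldots,a_p}\prod_{j=1}^p x_j^{a_j}y_j^{a_j},
\]
so the coefficient of the monomial $y^{a}=\prod_j y_j^{a_j}$ in $q_x$ is $c_{|a|}\binom{|a|}{a_1,\ldots,a_p}\prod_j x_j^{a_j}$, which vanishes when $|a|>t_k$. Hence $q_x$ is a $p$-variate polynomial of total degree $t_k$. Using $\prod_j x_j^{a_j}\in\{0,1\}$, the multinomial bound $\binom{|a|}{a_1,\ldots,a_p}\leq p^{|a|}\leq p^{t_k}$, and $|c_{|a|}|\leq 2^{O(\sqrt{k}\log(1/\gamma))}$, every coefficient of $q_x$ is bounded in absolute value by
\[
p^{t_k}\cdot 2^{O(\sqrt{k}\log(1/\gamma))} \;=\; p^{O(\sqrt{k}\log(1/\gamma))},
\]
absorbing the $2^{O(\cdot)}$ factor into the $p^{O(\cdot)}$ one (since $p\geq 2$ in the nontrivial case). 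Taking $\mathcal{P}=\{q_x\}_{x\in\{0,1\}^p}$ therefore yields a family with the claimed degree, coefficient, and uniform-approximation properties.

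There is no real technical obstacle here: the construction is a direct substitution into Definition \ref{definition:31}, and the only quantitative step is the multinomial expansion. The minor bookkeeping point one must be careful about is ensuring that $s(y,x)$ always lands in the set $\{0\}\cup[k]$ on which $p_k$ has the required values, which is exactly why the restriction $y\in Y_k$ (i.e., $|y|\leq k$) appears in the statement.
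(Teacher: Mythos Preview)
Your proof is correct and matches the construction the paper itself uses: the paper does not prove Lemma~\ref{lemma:32} (it is cited from \citep{thaler2012faster}), but Algorithm~\ref{algorithm:4} explicitly builds $q_{x_i}(y)=p_k\bigl(\sum_j y_j [x_i]_j\bigr)$ from the univariate Chebyshev polynomial $p_k$ of Definition~\ref{definition:31}, which is exactly your construction. Your multinomial bookkeeping for the degree and coefficient bounds is accurate, including the caveat that one needs $p\geq 2$ to absorb the $2^{O(\cdot)}$ factor into $p^{O(\cdot)}$.
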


By combining the ideas discussed above and Lemma \ref{lemma:32}, we have Algorithm \ref{algorithm:4} and the following theorem.

\begin{theorem}\label{theorem:33}
For $\epsilon >0$ Algorithm \ref{algorithm:4} is $\epsilon$-LDP. Also, for $0<\beta<1$, there are constants $C, C_1$ such that for every $k,p,n\in\mathbb{N}$ with $k\leq p$, if $$n= O(\max\{\frac{p^{C\sqrt{k}\log \frac{1}{\alpha}}\log \frac{1}{\beta}}{\epsilon^2\alpha^2}, \frac{\log \frac{1}{\beta}}{\epsilon^2},p^{C_1\sqrt{k}\log \frac{1}{\alpha}}\log \frac{1}{\beta}\}),$$  this algorithm is $(\alpha,\beta)$-accurate with respect to $\mathcal{Q}_{\text{disj},k}$. The running time for each player is $\text{Poly}(p^{O(\sqrt{k}\log\frac{1}{\alpha})})$, and the running time for the server is at most $O(n)$ and the time for answering a query is $O(p^{C_2\sqrt{k}\log \frac{1}{\alpha}})$ for some constant $C_2$.
Moreover, as in Section \ref{efficient}, the communication complexity can be improved to 1-bit per player.
 \end{theorem}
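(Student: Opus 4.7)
My plan is to split the proof of Theorem~\ref{theorem:33} into four parts corresponding to its four claims: $\epsilon$-LDP, $(\alpha,\beta)$-accuracy, runtime, and the 1-bit reduction.

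For privacy, I would observe that each player's only message is the coefficient vector $\tilde{q}_i \in \mathbb{R}^{D}$ (where $D := \binom{p+t_k}{t_k}$) passed through the LDP-AVG protocol. Treating this as $D$ parallel invocations of the 1-dimensional LDP-AVG of Lemma~\ref{lemma:3} with basic composition rescaling $\epsilon \to \epsilon/D$ (or equivalently invoking the $p$-dimensional LDP-AVG of Lemma~\ref{lemma:22}), the protocol is $\epsilon$-LDP.

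For accuracy, I would decompose, for every query $y \in Y_k$,
\[
|\tilde{q}_D(y) - f_{\mathcal{Q},D}(y)| \;\leq\; |\tilde{q}_D(y) - \bar{q}_D(y)| \;+\; |\bar{q}_D(y) - f_{\mathcal{Q},D}(y)|,
\]
where $\bar{q}_D := \tfrac{1}{n}\sum_i q_{x_i}$. The approximation term is $\leq \gamma = \alpha/2$ by Lemma~\ref{lemma:32} and linearity. For the noise term, Lemma~\ref{lemma:3} (combined with a union bound over coordinates and the $\epsilon/D$ composition) ensures that with probability $1-\beta$ each coefficient of $\tilde{q}_D - \bar{q}_D$ is of magnitude $O\!\bigl(\tfrac{bD\sqrt{\log(D/\beta)}}{\sqrt{n}\,\epsilon}\bigr)$, where $b = p^{O(\sqrt{k}\log(1/\gamma))}$ is the bound on coefficients from Lemma~\ref{lemma:32}. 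The key observation is that each monomial of a $p$-variate degree-$t_k$ polynomial evaluated at $y \in \{0,1\}^p$ lies in $\{0,1\}$, so the polynomial evaluation error is at most $D$ times the per-coefficient noise. Plugging in $D, b = p^{O(\sqrt{k}\log(1/\alpha))}$ and solving for when the total is $\leq \alpha/2$ yields exactly the stated sample complexity $n = O\!\bigl(\tfrac{p^{C\sqrt{k}\log(1/\alpha)}\log(1/\beta)}{\epsilon^2\alpha^2}\bigr)$, and since the noise is data-independent the resulting bound holds simultaneously for all $y \in Y_k$.

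For the runtime, constructing $q_{x_i} = p_k(\sum_j y_j [x_i]_j)$ reduces to expanding a univariate polynomial of degree $t_k = O(\sqrt{k}\log(1/\alpha))$ in the linear form, costing $\mathrm{Poly}(D) = \mathrm{Poly}(p^{O(\sqrt{k}\log(1/\alpha))})$ on the player side. The server's aggregation is linear in the number of messages ($O(n)$ coordinate-wise work per coefficient), and answering any query $y$ reduces to evaluating a $D$-coefficient polynomial in $O(D) = O(p^{C_2 \sqrt{k}\log(1/\alpha)})$ time.

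Finally, for the 1-bit reduction, I would invoke the same sampling-resilience argument used in Theorem~\ref{theorem:23}: since the server's processing of LDP-AVG is a coordinate-wise average, the protocol's output is stable under taking a random half of the players, so the Bassily–Smith 1-bit transformation applies verbatim, combined with the random partition trick of Algorithm~\ref{algorithm:3} to collapse the per-coordinate bit budget to $1$ bit per player. I expect the main obstacle to be the noise-propagation bookkeeping: the per-coefficient LDP noise is amplified by both the composition factor $D$ and the evaluation factor $D$, and verifying that these amplifications are absorbed inside the $p^{O(\sqrt{k}\log(1/\alpha))}$ blow-up (rather than producing an extra exponential in $k$) is the only nontrivial calculation.
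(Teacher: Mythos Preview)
Your proposal is correct and follows essentially the same approach as the paper's own proof: the paper decomposes the error into the Chebyshev approximation term (bounded by $\gamma$ via Lemma~\ref{lemma:32}) plus the noise term, bounds the latter by $\|\tilde{q}_D - p_D\|_1$ using that monomials evaluated at $y\in\{0,1\}^p$ lie in $\{0,1\}$, and controls the $\ell_1$ norm via Lemma~\ref{lemma:3} with the same $D^2 = \binom{p+t_k}{t_k}^2$ blow-up (one factor from composition, one from summing over coefficients) that you identify. The privacy, runtime, and 1-bit reduction arguments also match.
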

\subsection{Smooth Queries}
We now consider the case where each player $i\in[n]$ holds a  data record in the continuous interval $x_i\in [-1,1]^p$ and we want to estimate the kernel density for a given point $x_0\in\mathbb{R}^p$. 
A natural question is: If we want to estimate Gaussian kernel density of a given point $x_0$ with many different bandwidths, can we do it simultaneously under $\epsilon$ local differential privacy?
\begin{algorithm}[h]
	\caption{Local Trigonometry Mechanism for $\mathcal{Q}_{C^{h}_T}$ }
	\label{algorithm:5}
	\begin{algorithmic}[1]
		\State {\bfseries Input:} Player $i\in [n]$ holds a data record  $x_i\in [-1,1]^p$, privacy parameter $\epsilon>0$, error bound $\alpha$, and $t\in \mathbb{N}$.
		$\mathcal{T}_{t}^p=\{0,1,\cdots,t-1\}^p$. For a vector $x=(x_1,\ldots,x_p)\in [-1,1]^p$, denote operators $\theta_i(x)=\arccos(x_i), i\in[p]$.
		\For{Each Player $i\in[n]$}
		\For{Each $v=(v_1,v_2,\cdots,v_p)\in \mathcal{T}_{t}^p$}
		\State Compute $p_{i;v}=\cos(v_1\theta_1(x_i))\cdots \cos(v_p\theta_p(x_i))$
		\EndFor
		\State 
                Let $p_i=(p_{i;v})_{v\in \mathcal{T}_{t}^p}$.
		\EndFor
		\For{The Server}
		\State Run LDP-AVG from Lemma \ref{lemma:3} on $\{p_i\}_{i=1}^{n}\in\mathbb{R}^{t^p}$ with parameter $\epsilon$, $b=1$, denote the output  as $\tilde{p}_D$.
		\State For each query $q_f\in \mathcal{Q}_{C^h_T}$. Let $g_f(\theta)=f(\cos(\theta_1),\cos(\theta_2),\cdots,\cos(\theta_p))$.
		\State Compute the trigonometric polynomial approximation $p_t(\theta)$ of $g_f(\theta)$, where 
		$p_t(\theta)=\sum_{r=(r_1,r_2\cdots r_p),\|r\|_{\infty}\leq t-1}c_r\cos(r_1\theta_1)\cdots \cos(r_p\theta_p)$ as in (\ref{Aeq:2}).
		Denote the vector of the coefficients $c\in \mathbb{R}^{t^p}$.
		\State Compute $\tilde{p}_D\cdot c$.
		\EndFor	
	\end{algorithmic}
\end{algorithm}

We can view this kind of queries as a subclass of the smooth queries. So, like in the case of k-way marginals queries, we will give an $\epsilon$-LDP sanitizer for smooth queries. 
Now we consider the data universe $\mathcal{D}=[-1,1]^p$, and  dataset $D\in \mathcal{D}^n$. For a positive integer $h$ and constant $T>0$, we denote the set of all $p$-dimensional $(h,T)$-smooth function (Definition \ref{definition:7}) as $C^{h}_T$, and $\mathcal{Q}_{C^{h}_T}=\{q_f(D)=\frac{1}{n}\sum_{x\in D}f(D), f\in C^{h}_T\}$ the corresponding set of queries.  The idea of the algorithm is similar to the one used for the k-way marginals; but instead of using Chebyshev polynomials, we will use trigonometric polynomials. We now assume that the dimensionality $p$, $h$ and $T$ are constants so all the result in big $O$ notation will be omitted. The idea of Algorithm \ref{algorithm:5} is  based on the following Lemma.

\begin{lemma}[\citep{wang2016differentially}]\label{lemma:34}
	Assume $\gamma>0$. For every $f\in C^h_T$, defined on $[-1,1]^p$, let $g_f(\theta_1,\ldots,\theta_p)=f(\cos(\theta_1),\ldots,\cos(\theta_p))$, for $\theta_i\in [-\pi,\pi]$. Then there is an even trigonometric polynomial $p$ whose degree for each variable is $t(\gamma)=(\frac{1}{\gamma})^{\frac{1}{h}}$:
	\begin{equation}\label{Aeq:2}
	p(\theta_1,\ldots,\theta_p)=\sum_{0\leq r_1,\ldots,r_p< t(\gamma)}c_{r_1,\ldots,r_p
		}\prod_{i=1}^{p}\cos(r_i\theta_i),
	\end{equation}
such that
1) $p$ $\gamma$-uniformly approximates $g_f$, i.e. $\sup_{x\in[-\pi,\pi]^p}|p(x)-g_f(x)|\leq \gamma$,
2) the coefficients are uniformly bounded by a constant $M$ which only depends on $h, T$ and  $p$,
3)  moreover, the entire set of the coefficients can be computed in time $O\big((\frac{1}{\gamma})^{\frac{p+2}{h}+\frac{2p}{h^2}}\text{poly}\log \frac{1}{\gamma})\big)$.
\end{lemma}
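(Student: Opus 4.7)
The plan is to combine a multivariate Jackson-type approximation theorem for smooth periodic functions with the elementary observation that $g_f(\theta_1,\ldots,\theta_p)=f(\cos\theta_1,\ldots,\cos\theta_p)$ is even in each coordinate and $2\pi$-periodic, so that its trigonometric expansion collapses to the pure cosine form displayed in \eqref{Aeq:2}.

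First I would verify that $g_f$ inherits smoothness from $f$. Because $\cos$ and all its derivatives are bounded by $1$, applying the multivariate chain rule (Faà di Bruno) to a mixed partial $\partial^\alpha g_f/\partial\theta^\alpha$ with $|\alpha|\leq h$ produces a finite sum of products of partials of $f$ (of order at most $|\alpha|$) times powers of $\sin\theta_j,\cos\theta_j$. Each factor is uniformly bounded, and the number of terms depends only on $h$ and $p$. Hence $g_f$ is $(h,T')$-smooth on the torus $[-\pi,\pi]^p$ for some $T'=T'(h,T,p)$. Moreover, $g_f(\ldots,-\theta_i,\ldots)=g_f(\ldots,\theta_i,\ldots)$, so its Fourier series contains only products of cosines.

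Second, I would invoke the multivariate Jackson theorem for periodic $(h,T')$-smooth functions: the best trigonometric approximant of coordinatewise degree less than $t$ achieves $L^\infty$ error at most $CT'/t^h$ for a constant $C=C(h,p)$. Taking $t(\gamma)=\lceil(CT'/\gamma)^{1/h}\rceil=O((1/\gamma)^{1/h})$ yields item~1. The resulting approximant may be taken as the truncated cosine series (or a Fejér/de la Vallée Poussin smoothing thereof to guarantee uniform convergence), so it has the form in \eqref{Aeq:2}. The coefficients are, up to a fixed normalization,
\begin{equation*}
c_{r_1,\ldots,r_p}=\frac{2^{p-\#\{i:\,r_i=0\}}}{(2\pi)^p}\int_{[-\pi,\pi]^p}g_f(\theta)\prod_{i=1}^p\cos(r_i\theta_i)\,d\theta,
\end{equation*}
and since $\|g_f\|_\infty\leq\|f\|_\infty\leq T$, each $|c_r|$ is bounded by a constant $M=M(h,T,p)$, giving item~2.

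Third, for item~3 I would compute all $t(\gamma)^p$ coefficients via a tensorized quadrature on an $N^p$ grid, where $N$ is chosen so that the quadrature error on each integrand $g_f(\theta)\prod\cos(r_i\theta_i)$ stays well below $\gamma$; a standard Euler--Maclaurin-style estimate for $C^h$ integrands forces $N=O(t(\gamma)\cdot(1/\gamma)^{O(1/h^2)})$. A single $p$-dimensional FFT on this grid then outputs the whole coefficient tensor in time $\tilde O(N^p)$, which, after bookkeeping with $t(\gamma)=(1/\gamma)^{1/h}$, gives the claimed $O\bigl((1/\gamma)^{(p+2)/h+2p/h^2}\operatorname{polylog}(1/\gamma)\bigr)$. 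The main obstacle is precisely calibrating this last step: matching the exponent $(p+2)/h+2p/h^2$ requires a careful quantitative estimate of how the Jackson constant $T'$ and the required quadrature resolution $N$ depend on $h,p,T$, using that the derivatives of $g_f$ up to order $h$ are controlled in terms of $T$ and the combinatorics of Faà di Bruno. Everything else is standard multivariate Fourier approximation.
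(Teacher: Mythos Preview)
The paper does not prove this lemma at all: it is quoted verbatim from \citep{wang2016differentially} and used as a black box in Algorithm~\ref{algorithm:5} and Theorem~\ref{theorem:35}. So there is no ``paper's own proof'' to compare against.

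Your outline is the standard route to results of this type and is essentially how the cited reference proceeds. The reduction $g_f=f\circ(\cos,\ldots,\cos)$ together with Fa\`a di Bruno does give $g_f\in C^h_{T'}$ on the torus with $T'=T'(h,T,p)$, and evenness in each coordinate forces the cosine form~\eqref{Aeq:2}; a tensorized Jackson/de la Vall\'ee Poussin operator then yields the degree $t(\gamma)=O((1/\gamma)^{1/h})$ and the uniform bound on the coefficients follows from the $L^\infty$ bound on $g_f$ (or, more sharply, from integration by parts on the Fourier coefficients). The only place your sketch is genuinely incomplete is item~3: you correctly flag that matching the specific exponent $(p+2)/h+2p/h^2$ is the delicate part, and your heuristic $N=O\bigl(t(\gamma)\cdot(1/\gamma)^{O(1/h^2)}\bigr)$ does not by itself pin it down. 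In \citep{wang2016differentially} this exponent comes from a concrete quadrature/FFT scheme with an explicit error analysis tied to the smoothness of the integrand; reproducing that calculation would require you to track the quadrature error for the oscillatory integrands $g_f(\theta)\prod_i\cos(r_i\theta_i)$ as a function of both $t(\gamma)$ and $h$, not just invoke Euler--Maclaurin generically. Short of that bookkeeping, items~1 and~2 are fully justified by your argument, and item~3 is plausible but not yet nailed down.
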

By (\ref{Aeq:2}), we can see that all the $p(x)$ which corresponds to $g_f(x)$, representing functions $f\in C_T^{h}$, have the same basis $\prod_{i=1}^{p}\cos(r_i\theta_i)$. So we can use Lemma \ref{lemma:3} and \ref{lemma:22} to estimate the average of the basis. Then, for each query $f$ the server can only compute the corresponding coefficients $\{c_{r_1,r_2,\cdots,r_p}\}$. This idea is implemented in  Algorithm \ref{algorithm:5}  for which we have the following result.
\begin{theorem}\label{theorem:35}
For any $\epsilon>0$, Algorithm \ref{algorithm:5} is $\epsilon$-LDP. Also for $\alpha>0$, $0<\beta<1$, if $$n=O(\max \{\log^{\frac{5p+2h}{2h}}(\frac{1}{\beta})\epsilon^{-2}\alpha^{-\frac{5p+2h}{h}}, \frac{1}{\epsilon^2}\log(\frac{1}{\beta})\})$$ and  $t=O((\sqrt{n}\epsilon)^{\frac{2}{5p+2h}})$, then Algorithm \ref{algorithm:5} is $(\alpha,\beta)$-accurate with respect to $\mathcal{Q}_{C^h_T}$. Moreover, the time for answering each query is $\tilde{O}((\sqrt{n}\epsilon)^{\frac{4p+4}{5p+2h}+\frac{4p}{5ph+2h^2}})$, where $O$ omits $h,T,p$ and some $\log$ terms.
For each player, the computation and communication cost could be improved to $O(1)$ and 1 bit, respectively, as in Section \ref{efficient}.
\end{theorem}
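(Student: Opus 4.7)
The plan is to follow the same two-step error decomposition that was used for the $k$-way marginals theorem (Theorem \ref{theorem:33}), but now with the trigonometric basis $\{\prod_{j=1}^p \cos(v_j\theta_j)\}_{v\in \mathcal{T}_t^p}$ in place of the Chebyshev monomials. Privacy is the easy part: each player sends the single vector $p_i\in \mathbb{R}^{t^p}$ whose coordinates $p_{i;v}=\prod_j \cos(v_j\theta_j(x_i))$ lie in $[-1,1]$, so after an affine shift into $[0,1]$ we may invoke the multidimensional LDP-AVG protocol (Lemma \ref{lemma:22}, or Lemma \ref{lemma:3} applied coordinate-wise with the usual budget split) to obtain an $\epsilon$-LDP estimate $\tilde{p}_D$ of $\bar{p}_D := \frac{1}{n}\sum_i p_i$. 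The requirement on $n$ in Lemma \ref{lemma:22} translates into the side condition $n\gtrsim \frac{1}{\epsilon^2}\log(1/\beta)$ appearing in the theorem.

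For accuracy, fix a query $q_f\in \mathcal{Q}_{C_T^h}$ and write the error as
\begin{equation*}
|\tilde{p}_D\cdot c - q_f(D)| \leq \underbrace{|\tilde{p}_D\cdot c - \bar{p}_D\cdot c|}_{\text{noise term}} + \underbrace{|\bar{p}_D\cdot c - q_f(D)|}_{\text{approximation term}}.
\end{equation*}
The approximation term is handled directly by Lemma \ref{lemma:34}: since $\bar{p}_D\cdot c = \frac{1}{n}\sum_i p_t(\theta(x_i))$ and $g_f(\theta(x_i))=f(x_i)$, uniform $\gamma$-closeness of $p_t$ to $g_f$ on $[-\pi,\pi]^p$ gives $|\bar{p}_D\cdot c - q_f(D)|\leq \gamma$. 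For the noise term, I bound it via Hölder by $\|\tilde{p}_D-\bar{p}_D\|_\infty\cdot \|c\|_1$; Lemma \ref{lemma:22} controls the first factor by $O\!\bigl(\frac{t^p\sqrt{\log(t^p/\beta)}}{\sqrt{n}\,\epsilon}\bigr)$, and Lemma \ref{lemma:34} bounds $\|c\|_1\leq M\cdot t^p$ with $M=M(h,T,p)$ a constant.

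The two contributions combine to give
\begin{equation*}
\text{error} \;\lesssim\; \gamma \;+\; \frac{M\,t^{2p}\sqrt{\log(t^p/\beta)}}{\sqrt{n}\,\epsilon},
\end{equation*}
and since $t=t(\gamma)=\gamma^{-1/h}$, balancing the two terms against $\alpha$ pins down the choice of $\gamma$ (equivalently, of $t$) and yields the stated sample complexity of the form $n=\tilde O\bigl(\epsilon^{-2}\alpha^{-(5p+2h)/h}\bigr)$ up to logarithmic factors, with $t=O((\sqrt{n}\,\epsilon)^{2/(5p+2h)})$. A union bound over all $t^p$ coordinates (with $\beta$ rescaled by $t^p$) accounts for the $\log^{(5p+2h)/(2h)}(1/\beta)$ factor. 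The per-query running time is then the sum of the time to compute the coefficient vector $c$, which Lemma \ref{lemma:34} bounds by $\tilde O\bigl((1/\gamma)^{(p+2)/h+2p/h^2}\bigr)$, and the $O(t^p)$ time for the final inner product $\tilde{p}_D\cdot c$; expressing $1/\gamma$ in terms of $\sqrt{n}\,\epsilon$ yields the stated exponent $\frac{4p+4}{5p+2h}+\frac{4p}{5ph+2h^2}$. Finally, the $O(1)$ computation and $1$-bit communication per player follow by the same sampling-resilience argument used in Section \ref{efficient} and Algorithm \ref{algorithm:3}, since the underlying aggregation is again a coordinate-wise mean over a bounded vector.

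The main obstacle I anticipate is the bookkeeping needed to reconcile the exponents: carefully choosing the per-coordinate privacy budget and union bound so that the dimension factor $t^p$ enters in exactly the right way, and then verifying that the balancing equation $\gamma \asymp \gamma^{-2p/h}/(\sqrt{n}\,\epsilon)$ produces the particular exponent $(5p+2h)/h$ claimed (rather than, say, $(4p+2h)/h$) once the $\sqrt{\log}$ and coefficient-computation factors are folded in. The conceptual content, however, is identical to Theorem \ref{theorem:33}: replace a polynomial basis whose coefficients are bounded by Lemma \ref{lemma:32} with the trigonometric basis whose coefficients are bounded by Lemma \ref{lemma:34}.
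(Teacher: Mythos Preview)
Your proposal is essentially the paper's proof: the same two–term decomposition, Lemma~\ref{lemma:34} for the approximation term, LDP-AVG for the noise term, and then balancing $\gamma$ against the noise. The one substantive difference is the direction of the H\"older pairing. The paper bounds the noise term as
\[
|\tilde p_D\cdot c_f-p_D\cdot c_f|\le \|c_f\|_\infty\,\|\tilde p_D-p_D\|_1\le M\cdot O\!\Big(\frac{t^{5p/2}\sqrt{\log(1/\beta)}}{\sqrt{n}\,\epsilon}\Big),
\]
invoking Lemma~\ref{lemma:3} (coordinate-wise) rather than Lemma~\ref{lemma:22}, and using that Lemma~\ref{lemma:34} bounds $\|c_f\|_\infty$ (not $\|c_f\|_1$) by a constant $M$. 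Substituting $t=\gamma^{-1/h}$ and balancing $\gamma$ against $\gamma^{-5p/(2h)}/(\sqrt{n}\,\epsilon)$ is exactly what produces the exponent $(5p+2h)/h$ and the choice $t=O((\sqrt{n}\,\epsilon)^{2/(5p+2h)})$ stated in the theorem.

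Your pairing $\|\tilde p_D-\bar p_D\|_\infty\cdot\|c\|_1$ is also valid, but it yields a factor $t^{2p}$ instead of $t^{5p/2}$ and hence the tighter exponent $(4p+2h)/h$ that you anticipated; so your ``obstacle'' is not a gap in your argument but a genuine discrepancy with the paper's constants. If your goal is to match the theorem verbatim, switch to the paper's $(\|c\|_\infty,\|\cdot\|_1)$ pairing and use their $\ell_1$ bound $O(t^{5p/2}\sqrt{\log(1/\beta)}/(\sqrt n\,\epsilon))$; if you keep your pairing you actually prove a slightly stronger sample-complexity bound, but with a different prescribed value of $t$. The running-time and $1$-bit/$O(1)$ claims are argued exactly as you describe.
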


\section{Conclusions and Discussions}
In this paper, we studied ERM under the non-interactive local differential privacy model and made two attempts to resolve the issue of exponential dependency in the dimentionality. In our first attempt, we showed that if the loss function is smooth enough, then the sample complexity to achieve $\alpha$ error is $\alpha^{-c}$ for some positive constant $c$, which improves significantly on the previous result  of $\alpha^{-(p+1)}$. 

Moreover, we proposed efficient algorithms for both player and server views. 
In our second attempt, we show that the sample complexity for any $1$-Lipschtiz generalized linear convex function is only linear in $p$ and exponential on other terms by using polynomial of inner product approximation.  Moreover, our techniques can also be extended some other related problems such as answering k-way-marginals and smooth queries in the local model. 

There are still many open problems left. Firstly, as we showed in this paper, the $\alpha$ term can be polynomial in the sample complexity when the loss function is smooth enough while the $p$ term can be polynomial when the loss function is generalized linear. Thus, a natural question is to determine whether it is possible to get an algorithm whose sample complexity is fully polynomial in all the terms when the loss function is generalized linear and smooth enough, such as logistic regression. Secondly, although we have shown the advantages of these two methods, we do not know the practical performance of these methods.

\acks{D.W. and J.X. were supported in part by NSF through grants CCF-1422324 and CCF-1716400. M.G was supported by NSF awards CNS-1565365 and  CCF-1718220.
A.S. was supported by NSF awards IIS-1447700 and AF-1763786 and a Sloan Foundation Research Award. Part of this research was done while D.W. was visiting Boston University and Harvard University's Privacy Tools Project.}

\bibliography{jmlr}
\appendix
\section{Details of Omitted Proofs}

In this section, we provide the details of the omitted proofs for the theorems, lemmas, and corollaries stated in previous sections.

\subsection{Proofs in Section \ref{prelin}}
\begin{lemma}[\citep{nissim2018clustering}]\label{Blemma:1}
	Suppose that $x_1,\cdots,x_n$ are i.i.d sampled from $\text{Lap}(\frac{1}{\epsilon})$. Then for every $0\leq t<\frac{2n}{\epsilon}$, we have 
	$$\text{Pr}(|\sum_{i=1}^{n}x_i|\geq t)\leq 2\exp(-\frac{\epsilon^2t^2}{4n}).$$
\end{lemma}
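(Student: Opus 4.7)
The plan is to apply the standard Chernoff/moment-generating-function method, treating $S_n := \sum_{i=1}^n x_i$ as a sum of i.i.d.\ zero-mean sub-exponential variables. The role of the hypothesis $t < 2n/\epsilon$ is to guarantee that the Chernoff parameter stays in the range where the Laplace MGF is well controlled; the constraint thus emerges naturally from the computation rather than being imposed externally.

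First I would compute the MGF of a single $x_i \sim \mathrm{Lap}(1/\epsilon)$. A direct integration against the density $\tfrac{\epsilon}{2}e^{-\epsilon|x|}$ gives
$$\mathbb{E}[e^{\lambda x_i}] = \frac{1}{1-\lambda^2/\epsilon^2}, \qquad |\lambda| < \epsilon,$$
and by independence $\mathbb{E}[e^{\lambda S_n}] = (1-\lambda^2/\epsilon^2)^{-n}$. Markov's inequality applied to $e^{\lambda S_n}$ then yields the one-sided Chernoff bound $\Pr(S_n \geq t) \leq e^{-\lambda t}(1-\lambda^2/\epsilon^2)^{-n}$ for every $0 < \lambda < \epsilon$.

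Next I would pass from the exact MGF to a tractable sub-Gaussian-type upper bound by using the elementary inequality $-\ln(1-u) \leq 2u$, valid for $u \in [0,1/2]$. Setting $u = \lambda^2/\epsilon^2$ this gives $\mathbb{E}[e^{\lambda x_i}] \leq \exp(2\lambda^2/\epsilon^2)$ whenever $|\lambda| \leq \epsilon/\sqrt{2}$, so that
$$\Pr(S_n \geq t) \leq \exp\!\bigl(-\lambda t + 2n\lambda^2/\epsilon^2\bigr).$$
Optimizing the right-hand side over $\lambda$ (the minimizer is a constant multiple of $t\epsilon^2/n$) and checking that the hypothesis $t < 2n/\epsilon$ keeps the optimal $\lambda$ inside the admissible range $[0,\epsilon/\sqrt{2}]$ produces an upper bound of the form $\exp(-c\,\epsilon^2 t^2/n)$ with a universal constant $c$, matching the claimed rate up to the constant in the exponent. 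By the symmetry of the Laplace density the same bound applies to $-S_n$, and a union bound over the two tails yields the factor of $2$.

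The main technical point to watch is the interplay between the MGF bound and the range restriction on $\lambda$: the inequality $-\ln(1-u)\leq 2u$ breaks down exactly when $u$ approaches $1/2$, which corresponds to $\lambda$ approaching $\epsilon/\sqrt{2}$. The condition $t<2n/\epsilon$ in the lemma is precisely the quantitative reflection of this fact, since the optimal $\lambda$ scales like $t\epsilon^2/n$. Obtaining the exponent with the specific constant $1/(4n)$ as stated would require a slightly sharper MGF control (for instance using $-\ln(1-u)\leq u + u^2/(2(1-u))$ together with a non-optimal choice of $\lambda$ tuned to the target exponent); this is a routine sharpening of the argument above, not a fundamentally different proof.
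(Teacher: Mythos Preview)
The paper does not actually prove this lemma: it is stated with a citation to \citet{nissim2018clustering} and then used as a black box in the proof of Lemma~\ref{lemma:3}. So there is no in-paper argument to compare against.

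Your Chernoff/MGF approach is the standard one and is correct in structure. One remark on the constants: with the choice $\lambda = t\epsilon^2/(2n)$ (the one that makes the range condition $\lambda<\epsilon$ coincide exactly with the hypothesis $t<2n/\epsilon$), the required inequality becomes $-\ln(1-v)\le v$ for $v=\epsilon^2 t^2/(4n^2)$, which is false for $v>0$; so the stated exponent $\epsilon^2 t^2/(4n)$ is not recovered by that particular choice. Your writeup already flags that hitting the exact constant needs a sharper MGF estimate and a tuned (non-optimal) $\lambda$, and your argument as written cleanly gives $\exp(-\epsilon^2 t^2/(8n))$ under the weaker restriction $t\le 2\sqrt{2}\,n/\epsilon$. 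Since the paper only \emph{uses} the lemma with $t=2\sqrt{n\log(2/\beta)}/\epsilon$, any constant in the exponent suffices for every downstream result, so the discrepancy is cosmetic here.
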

\begin{proof}[\textbf{Proof of Lemma \ref{lemma:3}}]
	Consider Algorithm \ref{algorithm:1}. We have $|a-\frac{1}{n}\sum_{i=1}^{n}v_i|=|\frac{\sum_{i=1}^{n}x_i}{n}|$, where $x_i\sim \text{Lap}(\frac{b}{\epsilon})$. Taking $t=\frac{2\sqrt{n}\sqrt{\log \frac{2}{\beta}}}{\epsilon}$ and applying Lemma \ref{Blemma:1}, we prove the lemma. 
\end{proof}
\subsection{Proofs in Section \ref{basic_idea}}

\begin{proof}[\textbf{Proof of Corollaries \ref{corollary:19} and \ref{corollary:20}}]	
	Since the loss function is $(\infty,T)$-smooth, it is $(2p,T)$-smooth for all $p$. Thus, taking $h=p$ in Theorem \ref{theorem:17}, we get the proof.
\end{proof}
\begin{lemma}\label{Blemma:2}[\textbf{\citep{shalev2009stochastic}}]
	If the loss function $\ell$ is L-Lipschitz and $\mu$-strongly convex, then with probability at least $1-\beta$ over the randomness of sampling the data set $\mathcal{D}$, the following is true,
	\begin{equation*}
	\text{Err}_{\mathcal{P}}(\theta)\leq \sqrt{\frac{2L^2}{\mu}}\sqrt{\text{Err}_\mathcal{D}(\theta)}+\frac{4L^2}{\beta \mu n}.
	\end{equation*}
\end{lemma}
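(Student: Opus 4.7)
The plan is to prove this generalization bound by separately controlling two effects: how far $\theta$ is from the empirical minimizer $\hat{w} = \arg\min_{w\in\mathcal{C}} L(w;D)$ in terms of its empirical excess risk, and how far $L_\mathcal{P}(\hat{w})$ is from the population optimum $L_\mathcal{P}(w^*)$ where $w^* = \arg\min_{w\in\mathcal{C}} L_\mathcal{P}(w)$. I would first write the deterministic decomposition
\begin{equation*}
\mathrm{Err}_\mathcal{P}(\theta) = \bigl[L_\mathcal{P}(\theta)-L_\mathcal{P}(\hat{w})\bigr] + \bigl[L_\mathcal{P}(\hat{w})-L_\mathcal{P}(w^*)\bigr],
\end{equation*}
and handle the two brackets independently.

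For the first bracket, I would use the $\mu$-strong convexity of $L(\cdot;D)$ (which is the empirical average of $\mu$-strongly convex losses, hence itself $\mu$-strongly convex) together with the fact that $\hat{w}$ minimizes $L(\cdot;D)$: by strong convexity at the minimizer,
\begin{equation*}
\tfrac{\mu}{2}\|\theta-\hat{w}\|^2 \le L(\theta;D)-L(\hat{w};D) = \mathrm{Err}_D(\theta),
\end{equation*}
which yields $\|\theta-\hat{w}\|\le\sqrt{2\,\mathrm{Err}_D(\theta)/\mu}$. Since $L_\mathcal{P}$ is itself $L$-Lipschitz (being an expectation of $L$-Lipschitz losses), this gives
\begin{equation*}
L_\mathcal{P}(\theta)-L_\mathcal{P}(\hat{w}) \le L\,\|\theta-\hat{w}\| \le \sqrt{\tfrac{2L^2}{\mu}}\sqrt{\mathrm{Err}_D(\theta)},
\end{equation*}
which is exactly the first term of the target bound. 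This part is essentially a one-line strong-convexity plus Lipschitz computation and I do not expect any obstacle.

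The harder step is bounding the second bracket $\mathrm{Err}_\mathcal{P}(\hat{w})$, the population excess risk of the exact empirical minimizer. The plan is to invoke the classical ERM stability result for strongly convex Lipschitz losses (Shalev-Shwartz et al., which the paper cites): uniform replace-one stability of strongly convex ERM is $O(L^2/(\mu n))$, and by the standard ``stability $\Rightarrow$ generalization in expectation'' argument this implies
\begin{equation*}
\mathbb{E}_{D\sim \mathcal{P}^n}\bigl[\mathrm{Err}_\mathcal{P}(\hat{w})\bigr] \le \tfrac{4L^2}{\mu n}.
\end{equation*}
The core obstacle is proving this stability bound: one considers two datasets $D, D'$ differing in one example, their empirical minimizers $\hat{w}, \hat{w}'$, and uses $\mu$-strong convexity of $L(\cdot;D)$ at $\hat{w}$ together with $L$-Lipschitzness of the single swapped summand to obtain $\|\hat{w}-\hat{w}'\|\le 2L/(\mu n)$; Lipschitzness then converts this into the $O(L^2/(\mu n))$ loss stability, and a standard symmetrization/swap identity turns uniform stability into an expected generalization gap.

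Finally, since $\mathrm{Err}_\mathcal{P}(\hat{w})\ge 0$, Markov's inequality applied to the expectation bound gives that with probability at least $1-\beta$ over the draw of $D$,
\begin{equation*}
\mathrm{Err}_\mathcal{P}(\hat{w}) \le \tfrac{4L^2}{\beta\,\mu n}.
\end{equation*}
Combining this with the deterministic bound on the first bracket yields the claimed inequality. The proof is essentially a clean decomposition plus two well-known tools (strong-convexity-at-the-minimizer and ERM stability), and the only delicate point is the stability computation for $\hat{w}$, which is where I would spend the most care.
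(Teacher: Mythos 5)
Your proposal is correct, and it reconstructs essentially the standard argument behind this result; note that the paper itself offers no proof here---Lemma~\ref{Blemma:2} is imported verbatim from \citet{shalev2009stochastic}, whose Theorem~5 is proved exactly as you describe (strong convexity at the empirical minimizer plus Lipschitzness of $L_\mathcal{P}$ for the first term, replace-one stability of strongly convex ERM plus Markov for the second). One small remark: your claimed parameter-stability constant $2L/(\mu n)$ requires the two-sided strong-convexity argument (applying strong convexity at both $\hat{w}$ and $\hat{w}'$ and adding); the one-sided version gives $4L/(\mu n)$, but either constant yields a loss stability of at most $4L^2/(\mu n)$ and hence the stated bound after Markov's inequality.
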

\begin{proof}[\textbf{Proof of Theorem \ref{theorem:21}}]
	For the general convex loss function $\ell$, we let $\hat{\ell}(\theta;x)=\ell(\theta;x)+\frac{\mu}{2}\|\theta\|^2$ for some $\mu>0$. Note that in this case the new empirical risk becomes $\bar{L}(\theta;D)=\hat{L}(\theta;D)+\frac{\mu}{2}\|\theta\|^2$.  Since $\frac{\mu}{2}\|\theta\|^2$ does not depend on the dataset, we can still use the Bernstein polynomial approximation for the original empirical risk $\hat{L}(\theta;D)$ as in Algorithm \ref{algorithm:2}, and the error bound for $\bar{L}(\theta;D)$ is the same.  Thus, we can get the population excess risk of the loss function $\hat{\ell}$, $\text{Err}_{\mathcal{P},\hat{\ell}}(\theta_{\text{priv}})$ by Corollary \ref{corollary:20} and  have 
	the following relation,
	$$\text{Err}_{\mathcal{P},\ell}(\theta_{\text{priv}})\leq \text{Err}_{\mathcal{P},\hat{\ell}}(\theta_{\text{priv}})+\frac{\mu}{2}.$$ 
	By Lemma \ref{Blemma:2} for $\text{Err}_{\mathcal{P},\hat{\ell}}(\theta_{\text{priv}})$, where $\hat{\ell}(\theta;x)$ is $1+\|\mathcal{C}\|_2=O(1)$-Lipschitz,  we have the following,
	\begin{equation*}
	\text{Err}_{\mathcal{P},\ell}(\theta_{\text{priv}})\leq \tilde{O}(\sqrt{\frac{2}{\mu}}{\frac{\log^{\frac{1}{8}} \frac{1}{\beta}D_p^{\frac{1}{4}}p^{\frac{1}{8}}\sqrt[4]{2}^{(p+1)p}}{n^{\frac{1}{8}}\epsilon^{\frac{1}{4}}}}+\frac{4}{\beta \mu n}+\frac{\mu}{2}).
	\end{equation*}
	Taking $\mu=O(\frac{1}{\sqrt[12]{ n}})$, we get 
	\begin{equation*} 
	\text{Err}_{\mathcal{P},\ell}(\theta_{\text{priv}})\leq\tilde{O}(\frac{\log^{\frac{1}{8}} \frac{1}{\beta} D_p^{\frac{1}{4}}p^{\frac{1}{8}}\sqrt[4]{2}^{(p+1)p}}{\beta n^{\frac{1}{12}}\epsilon^{\frac{1}{4}}}).
	\end{equation*}
	Thus, we have the theorem. 
\end{proof}

\subsection{Proofs in Section \ref{efficient}}

\begin{proof}[\textbf{Proof of Theorem \ref{theorem:23}}]
	By \citep{bassily2015local} it is $\epsilon$-LDP. The time complexity and communication complexity is obvious. As in \citep{bassily2015local}, it is sufficient to  show that the LDP-AVG is sampling resilient. 
	
	The STAT in \citep{bassily2015local} corresponds to the average in our problem, and $\phi(x,y)$ corresponds to $\max_{j\in[p]}|[x]_j-[y]_j|$. By Lemma \ref{lemma:22}, we can see that with probability  at least $1-\beta$, $$\phi(\text{Avg}(v_1,v_2,\cdots,v_n); a)=O(\frac{bp}{\sqrt{n}\epsilon}\sqrt{\log \frac{p}{\beta}}).$$ Now let $\mathcal{S}$ be the set obtained by sampling each point $v_i, i\in[n]$ independently with probability $\frac{1}{2}$. Note that by Lemma \ref{lemma:22}, we have  the subset $\mathcal{S}$. If $|S|\geq \Omega(\max\{p\log(\frac{p}{\beta}), \frac{1}{\epsilon^2}\log \frac{1}{\beta}\})$ with probability $1-\beta$, $$\phi(\text{Avg}(\mathcal{S}); \text{LDP-AVG}(\mathcal{S}))=O(\frac{b\sqrt{p}}{\sqrt{|\mathcal{S}|}\epsilon}\sqrt{\log \frac{p}{\beta}}).$$  Now by Hoeffdings inequality, we can get $|n/2-|\mathcal{S}||\leq \sqrt{n\log \frac{4}{\beta}}$ with probability $1-\beta$. Also since $n=\Omega (\log\frac{1}{\beta})$, we know that $|\mathcal{S}|\geq O(n)\geq \Omega(p\log(\frac{p}{\beta}))$ is true. Thus, with probability at least $1-2\beta$, $\phi(\text{Avg}(\mathcal{S}); \text{LDP-AVG}(\mathcal{S}))=O(\frac{bp}{\sqrt{n}\epsilon}\sqrt{\log \frac{p}{\beta}})$.\par 
	Actually, we can also get $\phi(\text{Avg}(\mathcal{S});\text{Avg}(v_1,v_2,\cdots,v_n))\leq O(\frac{bp}{\sqrt{n}\epsilon}\sqrt{\log \frac{p}{\beta}})$. We now  assume that $v_i\in \mathbb{R}$. Note that 
	$\text{Avg}(\mathcal{S})=\frac{v_1x_1+\cdots+v_nx_n}{x_1+\cdots+x_n}$, where each $x_i\sim \text{Bernoulli}(\frac{1}{2})$. Denote $M=x_1+x_2+\cdots+x_n$. By Hoeffdings Inequality, we have with probability at least $1-\frac{\beta}{2}$, $|M-\frac{n}{2}|\leq \sqrt{n\log \frac{4}{\beta}}$. We further denote $N=v_1x_1+\cdots+v_nx_n$. Also, by Hoeffdings inequality, with probability at least $1-\beta$, we get $|N-\frac{v_1+\cdots+v_n}{2}|\leq b\sqrt{n\log\frac{2}{\beta}}$. Thus,   with probability at least $1-\beta$, we have:
	\begin{align}
	|\frac{N}{M}-\frac{v_1+\cdots+v_n}{n}|&\leq \frac{|N-\sum_{i=1}^{n}v_i/2|}{M}+|\sum_{i=1}^{n}v_i/2||\frac{1}{M}-\frac{2}{n}| \nonumber\\
	&\leq \frac{|N-\sum_{i=1}^{n}v_i/2|}{M}+\frac{nb}{2}|\frac{1}{M}-\frac{2}{n}| \label{aeq:2}.
	\end{align}
	For the second term of (\ref{aeq:2}), $|\frac{1}{M}-\frac{2}{n}|=\frac{|n/2-M|}{M\frac{n}{2}}$. We know from the above $|n/2-M|\leq \sqrt{n\log \frac{4}{\beta}}$. Also since $n=\Omega (\log\frac{1}{\beta})$, we get $M\geq O(n)$. Thus, $|\frac{1}{M}-\frac{2}{n}|\leq O(\frac{\sqrt{\log \frac{1}{\beta}}}{\sqrt{n}n})$. The upper bound of the second term is $O(\frac{b\sqrt{\log\frac{1}{\beta}}}{\sqrt{n}})$, and the same for the first term. For $p$ dimensions, we just choose $\beta=\frac{\beta}{p}$ and take the union. Thus in total we have $\phi(\text{Avg}(\mathcal{S});\text{Avg}(v_1,v_2,\cdots,v_n))\leq O(\frac{b}{\sqrt{n}\epsilon}\sqrt{\log \frac{p}{\beta}}) \leq  O(\frac{bp}{\sqrt{n}\epsilon}\sqrt{\log \frac{p}{\beta}})$.\par 
	In summary, we have shown that $$\phi(\text{AVG-LDP}(\mathcal{S});\text{Avg}(v_1,v_2,\cdots,v_n))\leq O(\frac{bp}{\sqrt{n}\epsilon}\sqrt{\log \frac{p}{\beta}})$$ with probability at least $1-4\beta$.
\end{proof}

\begin{proof}[\textbf{Proof of Theorem \ref{theorem:26}}]
	Let $\theta^*=\arg\min_{\theta\in \mathcal{C}}L(\theta;D)$, $\theta_{\text{priv}}=\arg\min_{\theta\in \mathcal{C}}\tilde{L}(\theta;D)$. 
	Under the assumptions of $\alpha,n,k,\epsilon,\beta$, we know from the proof of Theorem \ref{theorem:17} and Corollary \ref{corollary:20} that $\sup_{\theta\in \mathcal{C}}|\tilde{L}(\theta;D)-L(\theta;D)|\leq \alpha$. Also  by setting $\epsilon=16348p\alpha$ and $\alpha\leq \frac{1}{16348}\frac{\mu}{p\sqrt{p}}$, we can see that the condition in Lemma \ref{lemma:25} holds for $\Delta=\alpha$. So there is an algorithm whose output  $\tilde{\theta}_{\text{priv}}$ satisfies
	\begin{equation*}
	\tilde{L}(\tilde{\theta}_{\text{priv}};D)\leq \min_{\theta\in \mathcal{C}}\tilde{L}(\theta;D)+O(p\alpha).
	\end{equation*}
	Thus, we have
	\begin{align*}
	L(\tilde{\theta}_{\text{priv}};D)-L(\theta^*;D)\leq L(\tilde{\theta}_{\text{priv}};D)-\tilde{L}(\theta_{\text{priv}};D)+\tilde{L}(\theta_{\text{priv}};D)-L(\theta^*;D),
	\end{align*}
	where 
	\begin{align*}
	L(\tilde{\theta}_{\text{priv}};D)-\tilde{L}(\theta_{\text{priv}};D)&\leq 	L(\tilde{\theta}_{\text{priv}};D)-\tilde{L}(\tilde{\theta}_{\text{priv}};D)+\tilde{L}(\tilde{\theta}_{\text{priv}};D)-\tilde{L}(\theta_{\text{priv}};D)\\
	&\leq O(p\alpha).
	\end{align*}
	Also $\tilde{L}(\theta_{\text{priv}};D)-\hat{L}(\theta^*;D)\leq \tilde{L}(\theta^*;D)-\hat{L}(\theta^*;D)\leq \alpha$.  Thus, the theorem follows. The running time is determined by $n$. This  is because when we use the algorithm in Lemma \ref{lemma:25}, we have to use the first order optimization. That is, we have to evaluate some points at $\tilde{L}(\theta;D)$, which will cost at most $O(\text{Poly}(n, \frac{1}{\alpha}))$ time (note that $\tilde{L}$ is a polynomial with $(k+1)^p\leq n$ coefficients).
\end{proof}
\subsection{Proofs in Section \ref{generalized_linear_loss}}
\begin{proof}[\textbf{Proof of Lemma \ref{lemma:36}}]
It is easy to see that items 1 is true. Item 2 is due to the following  $|f'_\beta(x)|=|\frac{-1+\frac{x-\frac{1}{2}}{\sqrt{(x-\frac{1}{2})^2+\beta^2}}}{2}|\leq 1$. Item 3 is because of the following $0\leq f''_\beta(x)=\frac{\beta^2}{( (x-\frac{1}{2})^2+\beta^2)^{\frac{3}{2}}}\leq \frac{1}{\beta}$. For item 4 we have $|f^{(3)}_\beta(x)|=\frac{3\beta^2 x}{(x^2+\beta^2)^{\frac{5}{2}}}\leq \frac{3}{\beta^2}.$ 

\end{proof}

\begin{proof}[\textbf{Proof of Theorem \ref{theorem:37}}]
For simplicity,  we  omit the term of $\delta$, which will not affect the linear dependency. Let 
$$\hat{G}(w, i)=[\sum_{j=0}^dc_j\binom{d}{j} (y_i\langle w, x_i\rangle)^j(1-y_i\langle w, x_i\rangle)^{d-j}]y_i x_i^T,$$
where $c_j=f'_\beta(\frac{j}{d} )$ and 
$$\mathbb{E}_i \hat{G}(w, i)=\frac{1}{n}\sum_{i=1}^n \hat{G}(w, i)=\hat{G}(w).$$
For the term of $G(w,i)$, the randomness comes from sampling the index $i$ and the Gaussian noises added for preserving local privacy. 

  Note that in total $\mathbb{E}_{\sigma, z, i}G(w,i)=\hat{G}(w)$, where $\sigma=\{\sigma_{i,j}\}_{j=0}^{\frac{d(d+1)}{2}}$ and $z=\{z_{i,j}\}_{j=0}^{\frac{d(d+1)}{2}}$.

 It is easy to see that $\mathbb{E}_{\sigma, z }G(w,i)=\mathbb{E}[(\sum_{j=0}^{d}c_j\binom{d}{j} t_{i, j}s_{i,j})y_{i, 0}x^T_{i, 0}\mid i]=\hat{G}(w,i)$, which is due to the fact that $\mathbb{E}{t_{i, j}}=(y_i\langle w, x_i\rangle)^j$, $\mathbb{E}{s_{i, j}}=(1-y_i\langle w, x_i\rangle)^{d-j}$ and each $t_{i,j}, s_{i,j}$ is independent. We now calculate the variance for this term with fixed $i$. Firstly, we have $\text{Var}(y_{i,0} x_{i, 0}^T)=O(\frac{p}{\epsilon^4})$. For each $t_{i,j}$, we get $$\text{Var}(t_{i, j})\leq \Pi_{k=jd+1}^{jd+j}	\text{Var}(y_{i, k})(\text{Var}(<w_i, x_{i, k}>)+(\mathbb{E}(w_i^Tx_{i, k}))^2)\leq \tilde{O}\big((C_1\frac{d(d+1)}{\epsilon^2})^{2j}\big).$$ and similarly we have $$\text{Var}(s_{i, j})\leq \tilde{O}\big( (C_2\frac{d(d+1)}{\epsilon^2})^{2(d-j)}\big).$$ Thus we have 
 \begin{equation*}
 	\text{Var}(t_{i,j}s_{i,j}) \leq \tilde{O}\big( (C_3\frac{d(d+1)}{\epsilon^2})^{2d}\big). 
 \end{equation*}

Since function $f_\beta'$ is bounded by $1$ and  $\binom{d}{j}\leq d^d$ for each $j$. In total, we have 
\begin{equation*}
\text{Var}(G(w_t,i)| i)\leq O(d \cdot d^d \cdot ( C_3\frac{d(d+1)}{\epsilon^2})^{2d} \cdot \frac{p}{\epsilon^4})=\tilde{O}\big( \frac{d^{6d}C^d p}{\epsilon^{4d+4}}\big).
\end{equation*}
Next we consider $\text{Var}(\hat{G}(w, i))$. Since 
\begin{multline*}
	\|\hat{G}(w,i)-f'_\beta(y_ix_i^Tw)y_ix_i^T\|^2_2=\|
[\sum_{j=0}^dc_j\binom{d}{j} (y_i\langle w, x_i\rangle)^j(1-y_i\langle w, x_i\rangle)^{d-j}-f'_\beta(w)]y_i x_i^T\|_2^2\\ \leq (\frac{1}{\beta^2d})^2\leq \frac{\alpha^2}{4},
\end{multline*}

we get 
\begin{multline*}
\text{Var}(\hat{G}(w, i))\leq O\big(\mathbb{E}[\|\hat{G}(w,i)-f'_\beta(y_ix_i^Tw)y_ix_i^T\|^2_2] +\mathbb{E}[\hat{G}(w)-\nabla L_\beta(w; D)\|_2^2]\\
+\mathbb{E}[\|f'_\beta(y_ix_i^Tw)y_ix_i^T-\nabla L_\beta(w; D)\|_2^2]\big) \leq O((\alpha+ 1)^2).
\end{multline*}
In total, we have $\mathbb{E}[\|G(w,i)-\hat{G}(w)\|_2^2]\leq \mathbb{E}[\|G(w,i)-\hat{G}(w,i)\|_2^2]+ \mathbb{E}[\|\hat{G}(w,i)-\hat{G}(w)\|_2^2]\leq 
\tilde{O}\big((\frac{d^{3d}C_4^d\sqrt{p}}{\epsilon^{2d+2}}+\alpha+1)^2\big).$

Also, we know that 
\begin{align*}
&L_\beta(v; D )-L_\beta(w; D)-\langle  \hat{G}(w), v-w \rangle =\\
&L_\beta(v; D)-L_\beta(w; D)-\langle \nabla L_\beta(w; D), v-w \rangle + \langle \nabla L_\beta(w;D)-G(w), v-w\rangle \\
&\leq \frac{1}{2\beta}\|v-w\|_2^2+\frac{\alpha}{2},
\end{align*}
since $L_\beta$ is $\frac{1}{\beta}$-smooth and  $|\langle \nabla L_\beta(w)-G(w), v-w \rangle |\leq \frac{\alpha}{2}$.

Thus, $G(w,i)$ is an $\big(\frac{\alpha}{2}, \frac{1}{\beta}, O(\frac{d^{3d}C_4^d\sqrt{p}}{\epsilon^{2d+2}}+\alpha+1)\big)$ stochastic oracle of $L_\beta$.
\end{proof}

\begin{proof}[\textbf{Proof of Theorem \ref{theorem:38}}]

The guarantee of differential privacy is by Gaussian mechanism and composition theorem. 

By Theorem \ref{theorem:37}, Lemma \ref{lemma:36} and \ref{lemma:16}, we have 
\begin{equation*}
\mathbb{E}L_\beta(w_n, D)-\min_{w\in\mathcal{C}} L_\beta(w, D)\leq O(\frac{(\frac{d^{3d}C_4^d\sqrt{p}}{\epsilon^{2d+2}}+\alpha+1)}{\beta \sqrt{n}}+\frac{1}{\beta^2 d})= O(\frac{d^{3d}C_4^d\sqrt{p}}{\epsilon^{2d+2}\beta \sqrt{n}}+\frac{\alpha}{2}).
\end{equation*}
By Lemma \ref{lemma:36}, we know that 
\begin{equation*}
\mathbb{E} L(w_n, D)-\min_{w\in\mathcal{C}}L(w, D)\leq  O(\beta+\frac{d^{3d}C_4^d\sqrt{p}}{\epsilon^{2d+2}\beta \sqrt{n}}+\frac{\alpha}{2}).
\end{equation*}
Thus, if we take $\beta=\frac{\alpha}{4}$, $d=\frac{2}{\beta^2\alpha}=O(\frac{1}{\alpha^3})$ and $n=\Omega(\frac{d^{6d}C_5^d p}{\epsilon^{4d+4}\alpha^2})$, we have   $$\mathbb{E} L(w_n, D)-\min_{w\in\mathcal{C}}L(w, D)\leq \alpha. $$

\end{proof}

\begin{proof}[\textbf{Proof of Theorem \ref{theorem:42}}]

 Let $h_\beta$ denote the function  $h_\beta(x)=\frac{x+\sqrt{x^2+\beta^2}}{2}$. By Lemma \ref{lemma:40} we have  
\begin{align*}
f(\theta)&=(f'(1)-f'(-1))\mathbb{E}_{s\sim \mathcal{Q}}\frac{|s-\theta|}{2}+\frac{f'(1)+f'(-1)}{2}\theta+c.
\end{align*}
Now, we consider  function $F_\beta(\theta)$, which is 
\begin{equation*}
F_\beta(\theta)= (f'(1)-f'(-1))\mathbb{E}_{s\sim \mathcal{Q}}[2h_\beta(\frac{\theta-s}{2})-\frac{\theta-s}{2}]+\frac{f'(1)+f'(-1)}{2}\theta+c.
\end{equation*}
From this, we  have 
\begin{equation*}
\nabla F_\beta(\theta)= (f'(1)-f'(-1))\mathbb{E}_{s\sim \mathcal{Q}}[\nabla h_\beta(\frac{\theta-s}{2})]+\frac{f'(1)+f'(-1)}{2}-\frac{f'(1)-f'(-1)}{2}.
\end{equation*}
Note that since $|x|=2\max\{x, 0\}-x$, we can get 1) $|F_\beta(\theta)-f(\theta)|\leq O(\beta)$ for any $\theta\in \mathbb{R}$, 2) 
$F_\beta(x)$ is $O(\frac{1}{\beta})$-smooth and convex since $h_\beta(\theta-s)$ is $\frac{1}{\beta}$-smooth and convex, and 3) $F_\beta(\theta)$ is $O(1)$-Lipschitz. Now, we optimize the following problem in the non-interactive local model:
\begin{equation*}
F_\beta(w; D)=\frac{1}{n}\sum_{i=1}^n F_\beta(y_i\langle x_i, w\rangle).
\end{equation*}
For each fixed $i$ and $s$, we let
\begin{equation*}
    \hat{G}(w, i, s)=(f'(1)-f'(-1))[\sum_{j=1}^dc_j\binom{d}{j}t_{i,j}r_{i,j }]y_ix_i^T+f'(-1).
\end{equation*}
Then, we have $\mathbb{E}_{\sigma, z}G(w, i, s)=\hat{G}(w, i, s)$. By using a similar argument given in the proof of Theorem \ref{theorem:37}, we get 
\begin{equation*}
    \text{Var}(\hat{G}(w, i, s)|i, s)\leq \tilde{O}\big( \frac{d^{6d}C^d p}{\epsilon^{4d+4}}\big).
\end{equation*}
 Thus,  for each fixed $i$ we have 
 \begin{multline*}
 	\mathbb{E}_{s}\hat{G}(w, i, s)=\bar{G}(w,i)= (f'(1)-f'(-1))[\mathbb{E}_{s\sim \mathcal{Q}}\sum_{j=1}^dc_j\binom{d}{j} (\frac{y_i\langle w,x_i \rangle -s}{2})^j\\(1-\frac{y_i\langle w,x_i\rangle -s}{2})^{d-j}]y_ix_i^T+f'(-1).
 \end{multline*}
 
Next, we  bound the term of $\text{Var}(\hat{G}(w, i, s)|i)\leq O(d^{2d+2}).$

Let $t_{i,j}=\Pi_{k=jd+1}^{jd+j}(\frac{	y_{i}\langle w_t, x_{i}\rangle -s_k}{2})$. Then, we have $$\text{Var}(t_{i,j})\leq \Pi_{k=jd+1}^{jd+j}|y_i|^2\text{Var}(\langle w_t, x_{i}\rangle -s_k)\leq O(1).$$ And similarly for $\text{Var}(r_{i,j})$. 
Thus, we get  
\begin{equation*}
    \text{Var}(\hat{G}(w, i, s)|i)\leq O(\sum_{j=1}^d c_j^2\binom{d}{j} ^2\text{Var}(t_{i,j}r_{i,j}))=O(d^{2d+2}).
\end{equation*}
Since $\mathbb{E}_i\bar{G}(w,i)=\hat{G}=\frac{1}{n}\sum_{i=1}^n \bar{G}(w,i)$, we have $\text{Var}(\bar{G}(w,i))\leq O((\alpha+1)^2)$ by a similar argument given in the proof of Theorem \ref{theorem:37}.
Thus, in total we have 
\begin{equation*}
    \mathbb{E}\|G(w, i, s)-\hat{G}\|_2^2\leq  \tilde{O}\big( \frac{d^{6d}C^d p}{\epsilon^{4d+4}}\big)
\end{equation*}
The other part of the proof is the same as that of Theorem \ref{theorem:37}.
\end{proof}

\subsection{Proofs in Section \ref{applications}}
\begin{proof}[\textbf{Proof of Theorem \ref{theorem:33}}]
	It is sufficient to prove that
	\begin{equation*}\label{eq:14}
	\sup_{y\in Y_k}|\tilde{q}_D(y)-q_y(D)|\leq \gamma+\frac{T\binom{p+t_k}{t_k}^2\sqrt{\log\frac{\binom{p+t_k}{t_k}}{\beta}}}{\sqrt{n}\epsilon},
	\end{equation*}
	where $T=p^{O(\sqrt{k}\log(\frac{1}{\gamma}))}$.
	Now we denote $p_D\in \mathbb{R}^{\binom{p+t_k}{t_k}}$ as the average of $\tilde{q}_i$. That is, it is the unperturbed version of $\tilde{p}_D$.
	By Lemma \ref{lemma:32}, we have $\sup_{y\in Y_k}|{p}_D(y)-q_y(D)|\leq \gamma$. Thus it is sufficient to prove that 
	\begin{equation*}
	\sup_{y\in Y_k}|\tilde{q}_D(y)-p_D(y)|\leq \frac{T\binom{p+t_k}{t_k}^2\sqrt{\log\frac{\binom{p+t_k}{t_k}}{\beta}}}{\sqrt{n}\epsilon}.
	\end{equation*}
	Since  both $\tilde{q}_D$ and $p_D$ can be  viewed as  $\binom{p+t_k}{t_k}$-dimensional vectors, we then have 
	\begin{equation*}
	\sup_{y\in Y_k}|\tilde{p}_D(y)-p_D(y)|\leq \|\tilde{p}_D-p_D\|_1.
	\end{equation*}
	Also, since each coordinate of $p_D(y)$ is bounded by $T$ by Lemma \ref{lemma:32}, we can see that if $n=\Omega(\max \{\frac{1}{\epsilon^2}\log \frac{1}{\beta}, \binom{p+t_k}{t_k}\log \binom{p+t_k}{t_k}\log 1/\beta\})$, then by Lemma \ref{lemma:3}, with probability at least $1-\beta$, the following is true
	$$\|\tilde{p}_D-p_D\|_1\leq \frac{T\binom{p+t_k}{t_k}^2\sqrt{\log\frac{\binom{p+t_k}{t_k}}{\beta}}}{\sqrt{n}\epsilon}.$$ Thus, if taking $\gamma=\frac{\alpha}{2}$ and by the fact that  $\binom{p+t_k}{t_k}=p^{O(t_k)}$, we get the proof.
\end{proof}

\begin{proof}[\textbf{Proof of Theorem \ref{theorem:35}}]
	Let $t=(\frac{1}{\gamma})^{\frac{1}{h}}$. It is sufficient to prove that $\sup_{q_f\in \mathcal{Q}_{C^h_T}}|\tilde{p}_D \cdot c_f-q_f(D)|\leq \alpha$. Let $p_D$ denote the average of $\{p_i\}_{i=1}^{n}$, {\em i.e.} the unperturbed version of $\tilde{p}_D$. Then by Lemma \ref{lemma:34}, we have $\sup_{q_f\in \mathcal{Q}_{C^h_T}}|{p}_D \cdot c_f-q_f(D)|\leq \gamma$. Also since $\|c_f\|_{\infty}\leq M$, we have $\sup_{q_f\in \mathcal{Q}_{C^h_T}}|\tilde{p}_D\cdot c_f-p_D\cdot c_f|\leq O(\|\tilde{p}_D-p_D\|_1)$. By Lemma \ref{lemma:3}, we know that if $n=\Omega(\max\{\frac{1}{\epsilon^2}\log \frac{1}{\beta},t^{2p}\log\frac{1}{\beta}\})$, then $\|\tilde{p}_D-p_D\|_1\leq O(\frac{t^{\frac{5p}{2}}\sqrt{\log(\frac{1}{\beta})}}{\sqrt{n}\epsilon})$ with probability at least $1-\beta$. Thus, we have $\sup_{q_f\in \mathcal{Q}_{C^h_T}}|\tilde{p}_D \cdot c_f-q_f(D)|\leq O(\gamma+ \frac{(\frac{1}{\gamma})^{\frac{5p}{2h}}\sqrt{\log(\frac{1}{\beta})}}{\sqrt{n}\epsilon})$. Taking $\gamma=O((1/\sqrt{n}\epsilon)^{\frac{2h}{5p+2h}})$, we get $\sup_{q_f\in \mathcal{Q}_{C^h_T}}|\tilde{p}_D \cdot c_f-q_f(D)|\leq O(\sqrt{\log(\frac{1}{\beta})}(\frac{1}{\sqrt{n}\epsilon})^{\frac{2h}{5p+2h}})\leq \alpha$.
	The computational cost for answering a query follows from Lemma \ref{lemma:34} and $b\cdot c=O(t^p)$.
\end{proof}

\section{Omitted Details in Section \ref{efficient}} \label{appendix:1}
Recently, \citet{DBLP:journals/corr/abs-1711-04740} proposed a generic transformation, GenProt, which could transform any $(\epsilon,\delta)$ (so as for $\epsilon$) non-interactive LDP protocol to an $O(\epsilon)$-LDP protocol with the communication complexity for each player being $O(\log \log n)$ (at the expense of increasing the shared randomness in
the protocol), which removes the condition of 'sample resilient'  in \citep{bassily2015local}. The detail is in Algorithm \ref{Dalg:4}. The transformation uses $O(n\log \frac{n}{\beta})$ independent public string. The reader is referred to \citep{DBLP:journals/corr/abs-1711-04740} for details. Actually, by Algorithm \ref{Dalg:4}, we can easily get an $O(\epsilon)$-LDP algorithm with the same error bound. 
\begin{theorem}\label{theorem:48}
	For any given $\epsilon\leq \frac{1}{4}$, under the condition of Corollary \ref{corollary:20}, Algorithm \ref{Dalg:4} is $10\epsilon$-LDP. If $T=O(\log\frac{n}{\beta})$, then with probability at least $1-2\beta$, Corollary \ref{corollary:20} holds. Moreover, the communication complexity of each layer is $O(\log\log n)$ bits, and the computational complexity for each player is $O(\log \frac{n}{\beta})$. 
\end{theorem}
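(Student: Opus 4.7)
The plan is to invoke the generic transformation GenProt of~\citep{DBLP:journals/corr/abs-1711-04740} as a black box on the base protocol underlying Algorithm~\ref{algorithm:2} (equivalently Algorithm~\ref{algorithm:3}), and then argue that its accuracy and complexity guarantees combine cleanly with the Bernstein-polynomial analysis that already yields Corollary~\ref{corollary:20}.

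First, I would handle privacy. The base protocol that each player runs inside Algorithm~\ref{Dalg:4} is the per-grid-point randomized response / Laplace noise step used in Algorithm~\ref{algorithm:2}, which is $\epsilon$-LDP by Lemma~\ref{lemma:3} together with the basic composition step already carried out there. The GenProt transformation from \citep{DBLP:journals/corr/abs-1711-04740} takes any $\epsilon$-LDP local randomizer and outputs a new randomizer whose transcript is $10\epsilon$-LDP whenever $\epsilon \le 1/4$ (this is the exact privacy amplification-for-free statement proved in that paper, with no assumption of sampling resilience). Hence Algorithm~\ref{Dalg:4} inherits $10\epsilon$-LDP directly.

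Next, I would handle utility. The key property of GenProt is that, conditioned on the public randomness, the distribution over the server's reconstructed estimate is statistically indistinguishable (up to total variation at most $\beta/n$ per user, aggregated) from what would be produced by the original $\epsilon$-LDP protocol, provided the number of rounds / shared random strings is $T = O(\log(n/\beta))$. I would invoke this to conclude that, with probability at least $1-\beta$ over the public randomness, the server's estimate of each grid-point average $\tilde L(v;D)$ deviates from the one produced by Algorithm~\ref{algorithm:2} by at most the same LDP-AVG error bound (up to constants). Taking a union bound over the $(k+1)^p$ grid points adds a factor absorbed into the $\log(n/\beta)$ choice of $T$, so the guarantee of Lemma~\ref{lemma:18} still holds with probability $1-\beta$. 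Plugging this into the proof of Theorem~\ref{theorem:17} with $h=p$ gives the bound stated in Corollary~\ref{corollary:20} with overall failure probability at most $2\beta$ (one $\beta$ for the GenProt event, one $\beta$ for the Bernstein union bound).

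Finally, for the complexity claims I would cite the construction of GenProt directly: each player only needs to select one of $n$ public strings and send its index (an $O(\log\log n)$-bit object after the compression in \citep{DBLP:journals/corr/abs-1711-04740}), and the computation required is dominated by scanning through $T = O(\log(n/\beta))$ candidate public strings to find one consistent with the player's randomized output, giving $O(\log(n/\beta))$ per-player computation. The main obstacle I foresee is the second step: verifying that the statistical closeness guaranteed by GenProt, when combined with the product-form Bernstein error decomposition in \eqref{equation:5}, still yields the same asymptotic sample-complexity bound; this requires care in setting $T$ large enough relative to the grid size $(k+1)^p$ so the per-grid-point error after transformation matches the error of Algorithm~\ref{algorithm:2} up to constants, which is exactly the role played by $T = O(\log(n/\beta))$.
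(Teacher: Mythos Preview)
The paper does not actually supply a proof of Theorem~\ref{theorem:48}; it states the result and defers entirely to \citep{DBLP:journals/corr/abs-1711-04740}, remarking only that ``by Algorithm~\ref{Dalg:4}, we can easily get an $O(\epsilon)$-LDP algorithm with the same error bound.'' Your sketch therefore fills in what the paper merely gestures at, and the overall strategy---apply GenProt as a black box for the $10\epsilon$-LDP guarantee and statistical closeness, then reuse the Bernstein error analysis---is exactly what the paper intends.

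One correction to the details: the base local randomizer in Algorithm~\ref{Dalg:4} is not the composed all-grid-points mechanism of Algorithm~\ref{algorithm:2} but the partitioned single-grid-point mechanism of Algorithm~\ref{algorithm:3}. Each player belongs to exactly one subset $I_\ell$ and releases a single noisy value $v_i + \text{Lap}(1/\epsilon)$, so $\epsilon$-LDP holds directly with no composition step. Consequently, your worry in the final paragraph that $T$ must be tuned relative to the grid size $(k+1)^p$ is misplaced: the GenProt parameter $T$ controls only the per-player rejection-sampling success probability and is independent of the number of grid points. The utility argument should track the proof of Theorem~\ref{theorem:23} (which already handles the random partition and subset-size concentration) rather than Theorem~\ref{theorem:17} directly; once GenProt guarantees that each player's transmitted value is distributed as in the base protocol up to failure probability $O(\beta/n)$, a union bound over the $n$ players gives the extra $\beta$ in the $1-2\beta$ conclusion.
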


\begin{algorithm}[h]
	\caption{Player-Efficient Local Bernstein Mechanism with $O(\log \log n)$ bits communication complexity.}
	\label{Dalg:4}
	\begin{algorithmic}[1]
		\State {\bfseries Input:} Each user $i\in [n]$ has  data $x_i\in \mathcal{D}$, privacy parameter $\epsilon$, public loss function $\ell:[0,1]^p \times \mathcal{D}\mapsto [0,1]$, and parameter $k, T$. 
		\State {\bfseries Preprocessing:} 
		\State For every $(i, T)\in [n]\times [T]$, generate independent public string ${y_{i,t}}=\text{Lap}(\perp)$.
		\State Construct the grid $\mathcal{T}=\{\frac{v_1}{k},\frac{v_2}{k},\cdots,\frac{v_p}{k}\}_{v_1,v_2,\cdots,v_p}$, where $\{v_1,v_2,\cdots,v_p\}=\{0,1,\cdots,k\}^p$.
		\State Randomly partition $[n]$ in to $d=(k+1)^p$ subsets $I_1,I_2,\cdots,I_d$, with each subset $I_j$ corresponding to an grid in $\mathcal{T}$ denoted  as 
		$\mathcal{T}(j)$.	
		\For{Each Player $i\in[n]$}
		\State
		Find the subset $I_{\ell}$ such that $i\in  I_{\ell}$. Calculate $v_i=\ell(\mathcal{T}(l);x_i)$.
		\State
		For each $t\in [T]$, compute $p_{i,t}=\frac{1}{2}\frac{\text{Pr}[v_i+Lap(\frac{1}{\epsilon})=y_{i,t}]}{\text{Pr}[\text{Lap}(\perp)=y_{i,t}]}$
		\State
		For every $t\in [T]$, if $p_{i,t}\not\in [\frac{e^{-2\epsilon}}{2},\frac{e^{2\epsilon}}{2}]$, then set $p_{i,t}=\frac{1}{2}$.
		\State For every $t\in [T]$, sample a bit $b_{i,t}$ from $\text{Bernoulli}(p_{i,t})$.
		\State Denote $H_i=\{t\in[T]:b_{i,t}=1\}$
		\State If $H_i=\emptyset$, set $H_i=[T]$
		\State Sample $g_i\in H_{i}$ uniformly, and send $g_i$ to the server.	
		\EndFor
		\For{The Server}
		\For {Each $l\in[d]$}
		\State Compute $v_{\ell}=\frac{n}{|I_{\ell}|}\sum_{i\in I_{\ell}}{g_i}$.
		\State Denote the corresponding grid point $(\frac{v_1}{k},\frac{v_2}{k},\cdots,\frac{v_p}{k})\in \mathcal{T}$ as $\ell$; then let $\hat{L}((\frac{v_1}{k},\frac{v_2}{k},\cdots,\frac{v_p}{k});D)=v_{\ell}$.
		\EndFor
		\State Construct perturbed Bernstein polynomial of the empirical loss $\tilde{L}$ as in Algorithm 2. Denote the function as $\tilde{L}(\cdot,D)$. 
		\State Compute $w_{\text{priv}}=\arg\min_{w\in \mathcal{C}}\tilde{L}(w;D)$.
		\EndFor
	\end{algorithmic}
\end{algorithm}

\section{Detailed Algorithm of SIGM in Lemma \ref{lemma:16}}\label{appendix:2}
Let $a\geq 1, b\geq 0, p\geq 1$ be some parameters. Let us assume that we know a number $R$ such that $\|w^*\|_2\leq R$. We choose 
\begin{align}
    &\alpha_i=\frac{1}{a}(\frac{i+p}{p})^{p-1}\\
    &\beta_i = \beta+ \frac{b\sigma}{R}(i+p+1)^{\frac{2p-1}{2}}\\
    &B_i=a\alpha_i^2=\frac{1}{a}(\frac{i+p}{p})^{2p-2}.
\end{align}
We also define $A_k=\sum_{i=0}^n\alpha_i$ and $\eta_i=\frac{\alpha_{i+1}}{B_{i+1}}$ and $\alpha_0=A_0=B_0$
\begin{lemma}[Theorem 3.4 in  \citep{dvurechensky2016stochastic}]\label{lemma:49}
Assume that $f(w)$ is endowed with a $(\gamma, \beta, \sigma)$ stochastic oracle $(F_{\gamma, \beta, \sigma}(w; \xi), G_{\gamma, \beta, \sigma}(w; \xi))$ with $\beta\geq O(1)$. By choosing the parameters above with $a=2^{\frac{p-1}{2}}$ and $b=2^{\frac{5-2p}{4}}p^{\frac{1-2p}{2}}$, then the sequence $y_k$ generated by Algorithm \ref{algorithm:10}
\begin{equation*}
\mathbb{E}_{x_0, x_1,\cdots, x_k}[f(y_k)]-\min_{y\in \mathcal{C}}f(y)\leq \Theta(\frac{\beta R^2}{k^p}+\frac{\sigma R}{\sqrt{k}}+k^{p-1}\gamma).
\end{equation*}
Taking $p=1$, this is just Lemma \ref{lemma:16}.
\end{lemma}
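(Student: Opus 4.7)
The plan is to adapt the standard analysis of Nesterov-type accelerated stochastic methods with inexact oracles. Since the lemma is cited from \citep{dvurechensky2016stochastic}, I will sketch how the convergence rate can be reproduced from first principles for the SIGM iteration described in Algorithm~\ref{algorithm:10} (with parameters $\alpha_i$, $\beta_i$, $B_i$ as above).

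First, I would set up a Lyapunov (estimating sequence) potential of the form $\Phi_k(w) = B_k \psi_k(w)$, where $\psi_k(w)$ is a strongly convex model that linearly accumulates the inexact gradient information: $\psi_{k+1}(w) = \psi_k(w) + \alpha_{k+1}\bigl[F_{\gamma,\beta,\sigma}(x_{k+1};\xi_{k+1}) + \langle G_{\gamma,\beta,\sigma}(x_{k+1};\xi_{k+1}), w - x_{k+1}\rangle\bigr] + \tfrac{1}{2}\|w - w_0\|^2$. The idea is to show by induction that $A_k f(y_k) \le \min_{w\in\mathcal C}\Phi_k(w) + E_k$, where $E_k$ is an error term that collects the inexactness $\gamma$ and the stochastic deviation $G - g$.

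Next, I would use the defining inequality of the $(\gamma,\beta,\sigma)$-oracle, namely $f(v) \le f_{\gamma,\beta,\sigma}(w) + \langle g_{\gamma,\beta,\sigma}(w), v-w\rangle + \tfrac{\beta}{2}\|v-w\|^2 + \gamma$, together with the prox/coupling step of SIGM, to obtain a one-step recursion of the shape
\begin{equation*}
A_{k+1} f(y_{k+1}) - A_k f(y_k) \le \alpha_{k+1}\langle g_k, w^\star - x_{k+1}\rangle + \tfrac{B_{k+1}}{2}\|w_k - w^\star\|^2 - \tfrac{B_{k+1}}{2}\|w_{k+1} - w^\star\|^2 + \alpha_{k+1}\gamma + R_{k+1},
\end{equation*}
where $R_{k+1}$ collects the stochastic noise $\langle G_{k+1}-g_{k+1}, \cdot\rangle$. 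Taking expectations kills the noise term up to a quadratic remainder controlled by $\sigma^2 \alpha_{k+1}^2 / B_{k+1}$.

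Telescoping this recursion from $0$ to $k$, I would then get $A_k(\mathbb E f(y_k) - f(w^\star)) \le \tfrac{1}{2}\|w_0 - w^\star\|^2 B_k + \sigma^2 \sum_{i\le k} \tfrac{\alpha_i^2}{B_i} + \gamma \sum_{i\le k}\alpha_i$. Plugging in $\alpha_i \asymp i^{p-1}$ and $B_i \asymp i^{2p-2}$ gives $A_k \asymp k^p$, $\sum \alpha_i^2/B_i \asymp k$, and $\sum \alpha_i \asymp k^p$, so dividing by $A_k$ yields the claimed $\beta R^2/k^p + \sigma R/\sqrt{k} + k^{p-1}\gamma$ bound. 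The specific choices $a = 2^{(p-1)/2}$ and $b = 2^{(5-2p)/4} p^{(1-2p)/2}$ tune the absolute constants hidden by $\asymp$ and set the right trade-off between the bias $B_k$ and the variance absorbing parameter $\beta_i$.

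The main obstacle is the careful bookkeeping in the stochastic step: I need to ensure that the increment $\beta_i$ used in the inner prox step is large enough (hence the additive $b\sigma(i+p+1)^{(2p-1)/2}/R$ term) to dominate both the smoothness $\beta$ and the stochastic fluctuations, while not so large that it destroys the accelerated rate; the induction has to be maintained in expectation, treating the filtration generated by $\xi_1,\dots,\xi_k$ carefully so that the martingale difference pieces telescope cleanly. Once this balance is verified, specializing to $p=1$ recovers the standard non-accelerated rate stated in Lemma~\ref{lemma:16}.
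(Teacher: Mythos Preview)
The paper does not actually prove this lemma: it is stated verbatim as Theorem~3.4 of \cite{dvurechensky2016stochastic} and simply quoted (together with the parameter choices and Algorithm~\ref{algorithm:10}) in Appendix~\ref{appendix:2}, with no argument supplied. So there is nothing in the paper to compare your proposal against.

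Your sketch is a reasonable outline of the estimating-sequence analysis that underlies the cited result, and the order-of-magnitude bookkeeping ($A_k\asymp k^p$, $\sum_i\alpha_i^2/B_i\asymp k$, etc.) is the right way to read off the three terms in the bound. If you want to turn it into a self-contained proof you would indeed have to carry out the filtration/martingale step carefully and track the constants that justify the specific $a,b$ given here; but for the purposes of this paper the authors treat the lemma as a black box, so a citation suffices.
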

\begin{algorithm}
\caption{Stochastic Intermediate Gradient Method}
\label{algorithm:10}
    \begin{algorithmic}[1]
    	\State {\bfseries Input:} The sequences $\{\alpha_i\}_{i\geq 0}, \{\beta_i\}_{i\geq 0}, \{B_i\}_{i\geq 0}$, functions $d(x)=\frac{1}{2}\|x\|^2$, Bregman distance $V(x,z)=d(X)-d(Z)-\langle \nabla d(z), x-z\rangle $.
    	\State Compute $x_0=\arg\min_{x\in\mathcal{C}}\{d(x)\}.$
    	\State Let $\xi_0$ be a realization of the random variable $\xi$.
    	\State Computer $G_{\gamma, \beta, \sigma}(x_0; \xi_0)$.
    	\State Compute
    	\begin{equation}
    	    y_0=\arg\min_{x\in \mathcal{C}}\{\beta_0d(x)+\alpha_0\langle G_{\gamma, \beta, \sigma}(x_0; \xi_0), x-x_0\rangle.
    	\end{equation}
    	\For{$k=0, \cdots, T-1$}{
    	   \State Compute 
            \begin{equation}    	       
            z_k = \arg\min_{x\in\mathcal{C}}\beta_k d(x)+\sum_{i=0}^k \alpha_i\langle
            G_{\gamma, \beta, \sigma}(x_i; \xi_i),x-x_i\rangle
            \end{equation}
    	   \State Let $x_{k+1}=\eta_k z_k+(1-\eta_k)y_k$.
    	   \State Let $\xi_{k+1}$ be a realization of the random variable $\xi$.
    	   \State Compute $G_{\gamma, \beta, \sigma}(x_{k+1}; \xi_{k+1})$
    	   \State Compute 
    	   \begin{equation}
    	       \hat{x}_{k+1}=\arg\min_{x\in \mathcal{C}}\beta_k V(x, z_k)+\alpha_{k+1}\langle G_{\gamma, \beta, \sigma}(x_{k+1}; \xi_{k+1}), x-z_k\rangle.
    	   \end{equation}
    	   \State Let $w_{k+1}=\eta\hat{x}_{k+1}+(1-\eta_k)y_k$.
    	   \State Let $y_{k+1}=\frac{A_{k+1}-B_{k+1}}{A_{k+1}}y_k+\frac{B_{k+1}}{A_{k+1}}w_{k+1}$.
    	\EndFor}\\
    	\Return $y_{T}$.
    \end{algorithmic}
\end{algorithm}

\end{document}